%% file: neurips_2026.tex
\documentclass{article}

\PassOptionsToPackage{numbers, compress}{natbib}
\PassOptionsToPackage{table}{xcolor}
\usepackage[preprint]{neurips_2026}
\usepackage{amsmath}
\usepackage{makecell}
\usepackage{pifont} \newcommand{\cmark}{\ding{51}} \newcommand{\xmark}{\ding{55}}
\newcommand{\pmark}{{\footnotesize$\sim$}}
\usepackage{amsthm}
   \usepackage{thmtools}
   \usepackage{thm-restate}
\usepackage{amssymb}

   \newtheorem{proposition}{Proposition}
   \newtheorem{corollary}{Corollary}
   \newtheorem{lemma}{Lemma}
   \newtheorem{theorem}{Theorem}
   
\usepackage{enumitem}
\usepackage{subcaption}
\usepackage{CJKutf8} %
\newcommand{\zh}[1]{\begin{CJK}{UTF8}{gbsn}#1\end{CJK}}
\usepackage{booktabs, multirow, adjustbox, xcolor,natbib}
\usepackage[utf8]{inputenc} %
\usepackage[T1]{fontenc}    %
\usepackage{hyperref}       %
\usepackage{url}            %
\usepackage{amsfonts}       %
\usepackage{nicefrac}       %
\usepackage{microtype}      %
\usepackage{xcolor}         %

\title{Steering Without Breaking: Mechanistically Informed Interventions for Discrete Diffusion Language Models}

\author{%
  Hanhan Zhou$^{*}$\\
  AWS AI Labs\\
  Santa Clara, CA 95054 \\
  \texttt{hanhanz@amazon.com} \\
  \And
  Shamik Roy$^{*}$ \\
  AWS AI Labs \\
  Santa Clara, CA 95054 \\
  \texttt{royshami@amazon.com} \\
  \AND
  Rashmi Gangadharaiah \\
  AWS AI Labs \\
  Santa Clara, CA 95054 \\
  \texttt{rgangad@amazon.com} \\
}

\begin{document}

\maketitle
\begingroup
\renewcommand\thefootnote{*}
\footnotetext{Equal contribution.}
\endgroup

\begin{abstract}

Discrete diffusion language models (DLMs) generate text by iteratively denoising all positions in parallel, offering an alternative to autoregressive models. Controlled generation methods for DLMs, imported from autoregressive models, apply uniform intervention at every denoising steps. We show this uniform schedule degrades quality, and the damage compounds when multiple attributes are steered jointly. To diagnose the failure, we train sparse autoencoders on four DLMs (124M-8B parameters) and find that different attributes commit on distinct schedules, varying in timing, sharpness, and magnitude. For instance, topic commits within the first 2\% of denoising, whereas sentiment emerges gradually over 20\% of the process. Consequently, uniform intervention wastes steering capacity on steps where the target attribute has already solidified or has yet to emerge. We propose a novel adaptive scheduler that concentrates interventions on the steps where an attribute is actively forming and leaves the rest of generation untouched. The cost-control trade-off admits a closed-form characterization: the advantage of adaptive over uniform scheduling is governed by a single dispersion statistic of the commitment distribution. Across four DLMs and seven steering tasks, our method achieves precise control without the degradation typical of uniform interventions. Especially on challenging simultaneous three-attribute control, it reaches up to 93\% steering strength, beating the strongest baseline by up to 15\% points while preserving generation quality.

\end{abstract}

\input{1_intro}
\input{3_related_works}

\input{4_method}
\input{5_interpretability}

\input{6_experiments}
\input{7_conclusion_limitations}

{
\small

\bibliography{neurips_bib}
\bibliographystyle{unsrtnat}
}

\newpage
\input{appendix}

\newpage

\end{document}

%% file: 1_intro.tex
\section{Introduction}

Discrete diffusion language models (DLMs) generate text by iteratively denoising corrupted token sequences and have recently become competitive with autoregressive models~\citep{austin2021structured, sahoo2024mdlm, lou2024sedd, dream2025, nie2025llada}. Unlike autoregressive generation, where each token is produced once in fixed left-to-right order, DLMs refine all positions in parallel across hundreds of steps, creating a temporal trajectory where semantic content is progressively committed. This raises a fundamental question: how do DLMs organize semantic attributes across denoising, and can that temporal structure be exploited for controlled generation?

Sparse autoencoders (SAEs) decompose dense activations into sparse, interpretable features~\citep{cunningham2023sparse, bricken2023monosemanticity, templeton2024scaling} and have been used to steer autoregressive models~\citep{goyal2025breaking, yeon2025gsae, cho2025corrsteer, arad2025saessteering}. Concurrent work explores applying SAEs to DLMs~\citep{wang2026dlmscope} and steering DLMs via reference-sequence alignment~\citep{avrahami2026ilrr}. Other approaches from autoregressive models, including contrastive vectors~\citep{zou2023representation, rimsky-etal-2024-steering} and probes~\citep{belinkov-2022-probing}, can also be adapted to DLMs. All existing methods, however, apply the same intervention at every denoising step, treating the trajectory as temporally uniform. This degrades generation quality in both single- and multi-attribute control, inflating perplexity and collapsing diversity, while increasing cross-attribute interference, the unintended shifting of non-target attributes~\citep{oozeer2025ksteering, shafran2026directions}, as we verify in \S\ref{sec:experiments}. Yet how attributes form and commit across denoising steps remains underexplored, leaving open whether temporally informed intervention can avoid this quality cost.

\begin{figure}[t!]
\centering
\makebox[\textwidth][c]{%
  \includegraphics[width=1\textwidth]{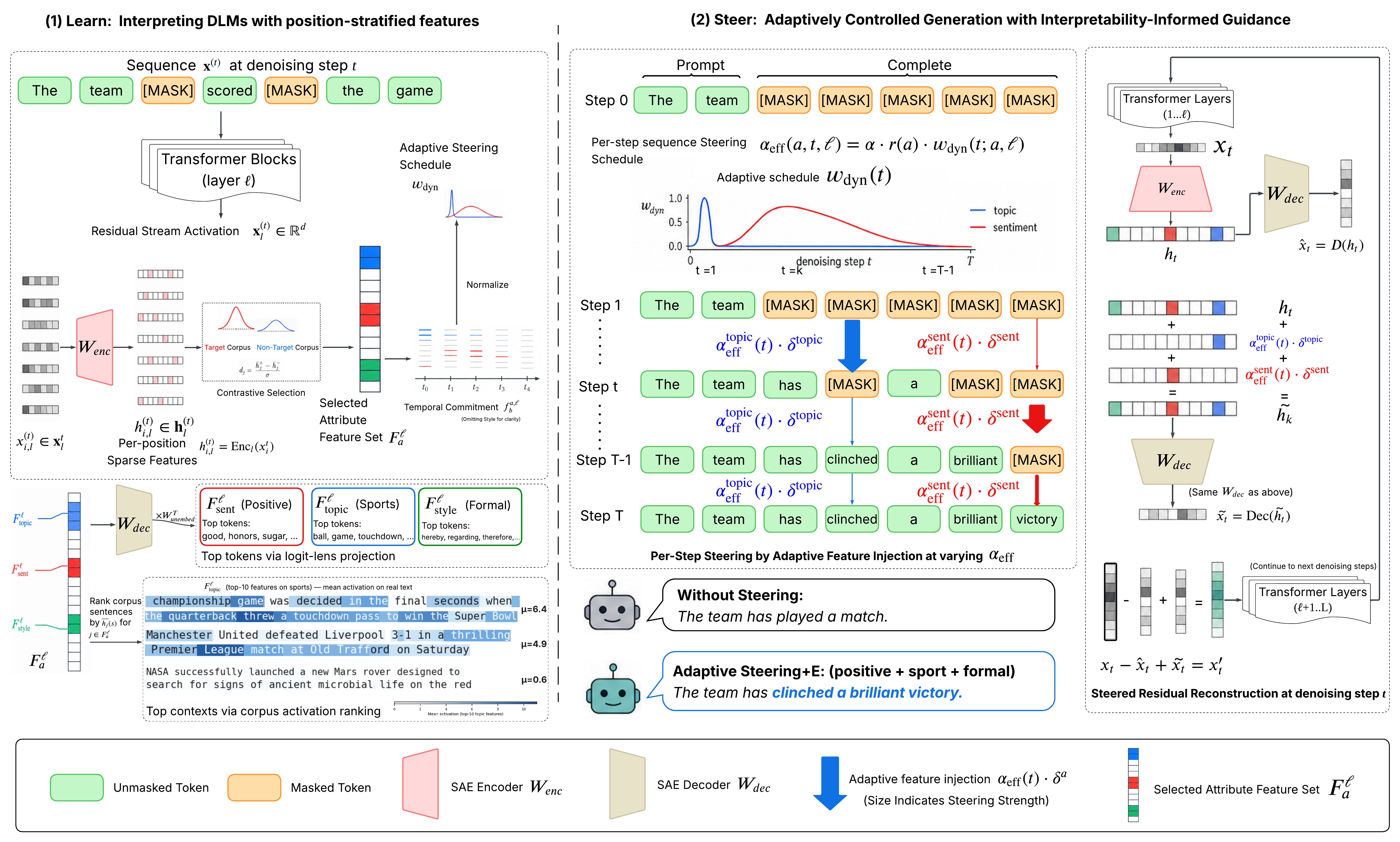}
}
\caption{\textbf{Method overview.} (1) SAEs trained per-layer decompose the residual stream into interpretable features. Contrastive selection identifies attribute-relevant feature sets $F_a^\ell$, whose temporal commitment across denoising steps defines an adaptive schedule $w_{\mathrm{dyn}}(t)$. (2) At each step $t$, a sparse contrastive shift $\alpha_{\mathrm{eff}}(a,t,\ell)\,\delta^{(a)}$ is applied to selected features and decoded with residual correction, enabling multi-attribute composition.}
\label{fig:intro_figure}
\end{figure}

We take an interpretability-first approach to address this gap (Figure~\ref{fig:intro_figure}). We train SAEs on four DLMs spanning two training objectives, three architectures, and a 60$\times$ scale range (MDLM~\citep{sahoo2024mdlm}, SEDD~\citep{lou2024sedd}, DREAM~\citep{dream2025}, and LLaDA~\citep{nie2025llada}) and use the resulting features to characterize how attributes evolve across the denoising trajectory. Our analysis (\S\ref{sec:interpretation}) identifies three properties that directly inform steering: \emph{commitment timing}, where attributes commit at different denoising steps; \emph{commitment sharpness}, where emergence may be concentrated in a narrow window or spread across the trajectory; and \emph{effect-size asymmetry}, where attributes carry different per-feature discriminative strength. These findings explain the quality cost of uniform steering: a constant intervention spends budget on steps where the target attribute has already committed or has not yet begun to emerge. From these empirical profiles, we derive \emph{adaptive steering} (\S\ref{sec:steering}), a schedule that concentrates intervention where each attribute is actively forming and leaves the rest of the trajectory undisturbed. We pose this as a budgeted optimization and show that the optimal schedule allocates intervention proportionally to each step's gain-to-cost ratio (Theorem~\ref{thm:eff-ratio}). For multi-attribute composition, we steer through disjoint SAE feature sets and bound cross-attribute interference via the decoder Gram structure (Proposition~\ref{prop:interference}). Effect-size asymmetry motivates an effectiveness-ratio calibration that rebalances per-attribute strengths so weaker attributes are not drowned out.

Across four DLMs spanning two training objectives, three architectures, and a 60$\times$ scale range, seven single- and multi-attribute conditions, and four baselines (\S\ref{sec:experiments}), adaptive steering matches or exceeds baseline control while keeping perplexity and diversity close to the unsteered model. The regimes follow Theorem~\ref{thm:eff-ratio}: on models with sharp temporal profiles, Adaptive matches Uniform's control at substantially lower perplexity (e.g., 50--57 vs.\ 59--93 on MDLM); on models with flat profiles such as LLaDA, the two converge. On simultaneous steering of 3 attributes: sentiment, topic, and formality, the method reaches 93\% geometric-mean classifier confidence at usable text quality, exceeding the strongest baseline by up to 15 percentage points.

In summary, we contribute: (i)~a mechanistic characterization of how semantic attributes emerge during DLM denoising, identifying commitment timing, sharpness, and effect-size asymmetry as primary axes of variation across attribute, model, and training objective (\S\ref{sec:interpretation}); (ii)~an adaptive steering framework with closed-form characterizations of scheduling efficiency (Theorem~\ref{thm:eff-ratio}) and multi-attribute interference (Proposition~\ref{prop:interference}), where the efficiency result yields a falsifiable prediction about which models benefit (\S\ref{sec:steering}); and (iii) a comprehensive evaluation across four DLMs demonstrating that adaptive steering achieves strong single- and multi-attribute control with lower quality cost and interference than four baselines (\S\ref{sec:experiments}).

%% file: 3_related_works.tex
\section{Related Work}

SAEs have become a standard tool for \textit{feature interpretability} in autoregressive LMs, linking features to concepts via vocabulary projections~\citep{cunningham2023sparse, bricken2023monosemanticity, templeton2024scaling}, with applications to \textit{causal steering} such as detoxification and safety control~\citep{goyal2025breaking, yeon2025gsae, cho2025corrsteer, arad2025saessteering}, though effectiveness can be sensitive to feature selection and layer choice~\citep{ronge2026coffee, basu2026interpretability}. For DLMs, concurrent work has begun exploring interpretability and control. DLM-Scope~\citep{wang2026dlmscope} trains SAEs on DREAM and LLaDA and studies \textit{temporal dynamics} of decoding order (how representations evolve across denoising), but does not perform a \textit{cross-objective} comparison (analysis across training losses on identical architecture/data) or demonstrate \textit{compositional control} (steering multiple attributes simultaneously). ILRR~\citep{avrahami2026ilrr} steers MDLM and LLaDA via reference-sequence alignment without feature-level interpretability or composability. Other efforts study AR vs.\ DLM representations~\citep{goel2026skip} and temporal attention for hallucination detection~\citep{hemmat2026tdgnet}. Multi-attribute control in AR models requires specialized methods~\citep{oozeer2025ksteering, shafran2026directions}; we show that SAE feature disjointness in DLMs enables additive composition. \textit{Multi-scale} universality studies have compared feature overlap across sizes~\citep{son2025semantic} and architectures~\citep{thasarathan2025universal}, but not the training objective as a causal variable. We train SAEs on four DLMs spanning two objectives and a 60$\times$ scale range, uncover temporal dynamics that explain when attributes become steerable, and leverage these findings for adaptive compositional control evaluated against multiple \textit{baselines} (Table~\ref{tab:related_work}).

\begin{table*}[h]
\centering
\caption{Comparison of our work with concurrent efforts. \cmark\ = fully, \pmark\ = partially, \xmark\ = not addressed.}
\label{tab:related_work}
\vspace{1mm}
\begin{adjustbox}{max width=\textwidth}
\begin{tabular}{@{} l l l ccccccc @{}}
\toprule
\textbf{Work} &
\textbf{Models} &
\textbf{Approach} &
\makecell{\textbf{Feature}\\\textbf{Interpretability}} &
\makecell{\textbf{Causal}\\\textbf{Steering}} &
\makecell{\textbf{Temporal}\\\textbf{Dynamics}} &
\makecell{\textbf{Cross-}\\\textbf{Objective}} &
\makecell{\textbf{Compositional}\\\textbf{Control}} &
\makecell{\textbf{Multi-}\\\textbf{Scale}} &
\textbf{Baselines} \\
\midrule
DLM-Scope \citep{wang2026dlmscope}
  & Dream, LLaDA
  & SAE (Top-K, 16K)
  & \pmark & \cmark & \cmark & \xmark & \xmark & \xmark & \pmark \\[3pt]
ILRR \citep{avrahami2026ilrr}
  & MDLM, LLaDA
  & Reference-based
  & \xmark & \cmark & \pmark & \xmark & \xmark & \pmark & \cmark \\[3pt]
Skip-to-Good-Part \citep{goel2026skip}
  & LLaDA, Dream
  & Probing / RSA
  & \xmark & \xmark & \pmark & \pmark & \xmark & \xmark & \xmark \\[3pt]
TDGNet \citep{hemmat2026tdgnet}
  & LLaDA, Dream
  & Temporal graphs
  & \xmark & \xmark & \cmark & \xmark & \xmark & \xmark & \xmark \\[3pt]
\midrule
\textbf{Ours}
  & \makecell[l]{MDLM, SEDD,\\Dream, LLaDA}
  & SAE (Top-K, 12--16K)
  & \cmark & \cmark & \cmark & \cmark & \cmark & \cmark & \cmark \\
\bottomrule
\end{tabular}
\end{adjustbox}
\end{table*}

%% file: 4_method.tex
\section{Interpreting Discrete Diffusion LMs via Sparse Autoencoders}
\label{sec:interpretation}
Sparse autoencoders (SAEs) decompose dense activations into overcomplete dictionaries of interpretable features~\citep{cunningham2023sparse, bricken2023monosemanticity, templeton2024scaling} and have been used to steer autoregressive LMs~\citep{goyal2025breaking, yeon2025gsae, cho2025corrsteer, arad2025saessteering}. Whether SAEs can meaningfully decompose DLM representations remains understudied. DLMs offer a structural advantage that AR models lack: because all positions are refined across a temporal trajectory, semantic features \emph{emerge} progressively and different attributes may commit at different denoising steps, creating natural intervention points. For multi-attribute control, attributes may interact through temporal overlap, effect-size imbalance, or shared feature subspaces, making it essential to characterize these dynamics. In this section, we train SAEs on four DLMs spanning different objectives, architectures, and scales, and study how features encoding sentiment, topic, and style evolve and interact.

\subsection{Training SAEs on Discrete Diffusion Models}
\label{sec:sae_training}

We train TopK sparse autoencoders~\citep{cunningham2023sparse, zhu2025abstopk} on residual-stream activations from MDLM~\citep{sahoo2024mdlm}, SEDD~\citep{lou2024sedd}, LLaDA~\citep{nie2025llada}, and DREAM~\citep{dream2025}. The training layers are selected via a diffusion logits lens (\S\ref{sec:logit_lens}) and a layer probing study (\S\ref{sec:layer_study}), which together reveal when and where token identities crystallize during generation and identify the layers most suitable for SAE training in DLMs. Given a hidden state $\mathbf{x} \in \mathbb{R}^{d_\text{model}}$, the SAE encodes it into an overcomplete latent space ($d_\text{SAE} \gg d_\text{model}$), applies a $\mathrm{TopK}$ activation to retain only the $k$ largest entries as a sparse code $\mathbf{h}$, and reconstructs via a learned decoder:
$\mathbf{h} = \mathrm{TopK}\!\bigl((\mathbf{x} - \mathbf{b}_\text{dec})\, \mathbf{W}_\text{enc} + \mathbf{b}_\text{enc},\; k\bigr),\;
\hat{\mathbf{x}} = \mathbf{h}\, \mathbf{W}_\text{dec} + \mathbf{b}_\text{dec}.$
Each nonzero entry $h_j$ corresponds to a \emph{feature} whose decoder direction $\mathbf{w}_j^\text{dec}$ (the $j$-th column of $\mathbf{W}_\text{dec}$) is the direction it adds to the residual stream. 
Training minimizes MSE reconstruction loss with an auxiliary dead-neuron loss (\S\ref{sec:experimental_setup}). Unlike AR models, DLM activations vary with masking rate, so we train on activations sampled across uniformly distributed masking rates to ensure generalization across the full denoising trajectory. Table~\ref{tab:model_summary} summarizes all models and SAE configurations.

\begin{table}[t!]
\centering
\caption{Model and SAE summary. All SAEs use TopK activation with $k{=}32$.}
\label{tab:model_summary}
\resizebox{0.85\columnwidth}{!}{%
\small
\begin{tabular}{lcccccccc}
\toprule
\textbf{Model} & \textbf{Params} & \textbf{Arch.} & \textbf{Diffusion Loss} & \textbf{SAE Layers} & $d_\text{model}$ & $d_\text{SAE}$ & $d_\text{SAE}/d_\text{model}$ \\
\midrule
MDLM~\citep{sahoo2024mdlm}  & 124M & GPT-2~\citep{radford2019language} & Absorbing      & 5, 6, 7       & 768   & 12{,}288 & 16$\times$ \\
SEDD~\citep{lou2024sedd}  & 124M & GPT-2 & Score-entropy  & 5, 6, 7       & 768   & 12{,}288 & 16$\times$ \\
LLaDA~\citep{nie2025llada} & 8B   & LLaMA~\citep{touvron2023llama} & Absorbing      & 8, 14, 20, 26 & 4{,}096 & 16{,}384 & 4$\times$ \\
DREAM~\citep{dream2025} & 7B   & Qwen2~\citep{yang2024qwen2} & Absorbing      & 8, 13, 17, 23 & 3{,}584 & 14{,}336 & 4$\times$ \\
\bottomrule
\end{tabular}%
}
\end{table}

\subsection{Extracting Attribute Features from SAEs}
\label{sec:feature_id}
We study three attributes: \textbf{sentiment}, \textbf{topic}, and \textbf{style}, common controllability targets and a natural combination for studying multi-attribute interplay, as they capture orthogonal aspects of text: subject matter, opinion polarity, and writing register. Following work on contrastive steering directions~\citep{zou2023representation, rimsky-etal-2024-steering, belinkov-2022-probing}, we construct balanced binary corpora: Sports vs.\ Business from AG News~\citep{zhang2015character} for topic, positive vs.\ negative IMDB reviews~\citep{maas-etal-2011-learning} for sentiment, and formal vs.\ informal text for style, using the Pavlick corpus~\citep{pavlick2016empirical} for MDLM/SEDD and classifier-labeled IMDB reviews~\citep{babakov2023don} for LLaDA/DREAM to match their generation domain (\S\ref{app:extraction_datasets}). For style, we track formality features on MDLM/SEDD, whose baseline formality is near chance (${\sim}$53--58\%), and informality features on LLaDA/DREAM, whose generations are already highly formal (${\sim}$74--78\%; \S\ref{sec:experiments}), so that the features we study correspond to the direction with greater room for control in later steering studies (\S\ref{sec:steering}). To identify attribute-encoding features, we noise each text at multiple mask ratios, pass the noised inputs through the model, and ReLU-encode the hidden states through the SAE at each layer (using ReLU rather than TopK to avoid biasing effect-size estimates; \S\ref{app:extraction_procedure}). For each feature $j$, we compute the Cohen's $d$ effect size~\citep{cohen2013statistical}: $d_j = (\bar{h}_j^{+} - \bar{h}_j^{-}) / \sqrt{(\text{Var}(h_j^{+}) + \text{Var}(h_j^{-}))/2}$, where $\bar{h}_j^{+}$ and $\bar{h}_j^{-}$ are the mean activations for the two contrastive classes. We retain the top 50 features per direction per layer that pass a Mann--Whitney $U$ test~\citep{mann1947test} ($p < 0.01$, Bonferroni-corrected~\citep{bonferroni1936teoria}) for the interpretability analysis below. Vocabulary grounding (projecting decoder columns through $\mathbf{W}_\text{unembed}$) confirms that retained features are semantically coherent (\S~\ref{app:feature_vocab_sparsity}).

\subsection{Interpreting Temporal Dynamics of Features}\label{sec:temporal_dynamics}

To analyze feature dynamics, we generate $N$ samples per model with SAE hooks active at every step, record per-step feature activations, filter to classifier-confident samples using off-the-shelf attribute classifiers (since not all generated texts express the target attribute; \S\ref{app:dynamics}), and partition the trajectory into $B$ temporal blocks. We quantify emergence timing via \emph{block fractions}, the share of total non-negative activation change in each block:
$f_b = {\max(0, \bar{h}_{e_b} {-} \bar{h}_{s_b})}\big/{\sum_{b'} \max(0, \bar{h}_{e_{b'}} {-} \bar{h}_{s_{b'}})}$,
where $s_b$ and $e_b$ are the start and end steps of block $b$. We summarize our key observations below.

\textbf{Commitment timing: attributes commit at different denoising steps.}
Attribute emergence timing varies by attribute, model, and layer. For topic (Figure~\ref{fig:compact_emergence}), MDLM and SEDD commit within the first few percent of denoising, with deeper layers faster; LLaDA shows gradual trajectories; DREAM defers emergence to the final quarter, likely due to its AR-based initialization which inherits late-trajectory token-finalization behavior. Figure~\ref{fig:commitment_times} reports $t_c(p)$ (trajectory percentage at which $p$\% of emergence has occurred) at the deepest layer per model. The patterns differ strikingly: on MDLM, topic reaches 50\% emergence at $t_c{=}22$ (2.1\% of trajectory) while sentiment reaches it at $t_c{=}216$ (21\%), a ${\sim}10\times$ gap. Emergence trajectories for sentiment and style are in \S\ref{app:emergence_all_attrs}.

\textbf{Commitment sharpness: models differ in how concentrated emergence is.} Figure~\ref{fig:compact_blockfrac} quantifies emergence concentration via block fractions at the deepest layer. On MDLM, block~0 captures 100\% of topic emergence but only 31\% of sentiment, which is distributed across later blocks. SEDD mirrors MDLM's early topic commitment (same architecture). LLaDA shows front-loading in block~0 ranging from ${\sim}$20\% (topic, style) to ${\sim}$80\% (sentiment). DREAM concentrates \emph{all} emergence in the final two blocks, with blocks~0--5 contributing zero. Block fraction analysis for other layers is in \S\ref{app:block_fractions_all_layers}. Commitment timing and sharpness together define each model's temporal profile; the diversity of these profiles across models, layers, and attributes means that \emph{no single fixed schedule can be optimal for controllability}, motivating adaptive steering (\S\ref{sec:adaptive}).

\begin{figure}[t!]
\centering
\subcaptionbox{Emergence trajectories\label{fig:compact_emergence}}{%
  \includegraphics[width=0.35\textwidth]{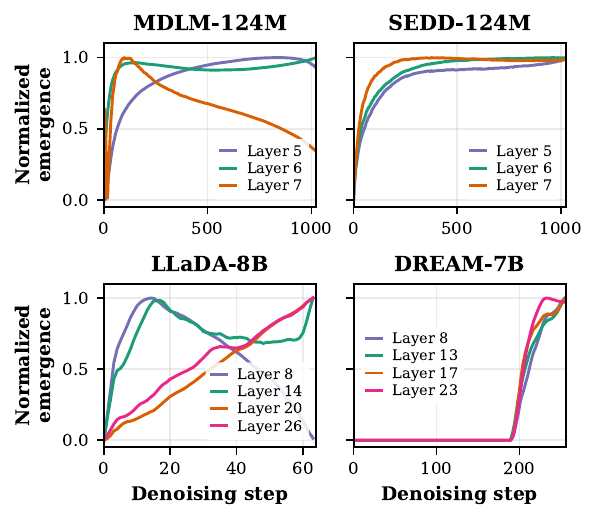}}%
\hfill
\subcaptionbox{Block fractions\label{fig:compact_blockfrac}}{%
  \includegraphics[width=0.35\textwidth]{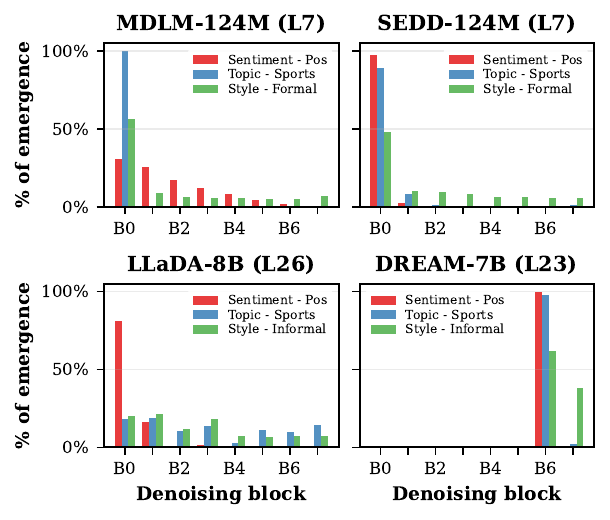}}%
\hfill
\subcaptionbox{Commitment times\label{fig:commitment_times}}{%
  \includegraphics[width=0.23\textwidth]{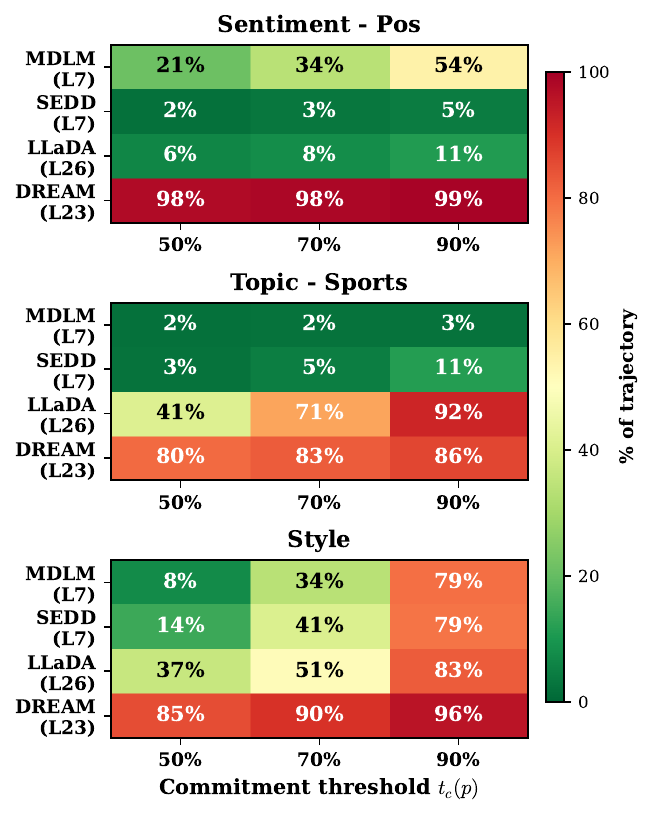}}
\caption{Temporal hierarchy of feature emergence. \textbf{(a)} Topic emergence (normalized $|a(t) - a(0)|$) across layers. \textbf{(b)} Block fractions ($K{=}8$) at deepest layer. \textbf{(c)} Commitment time $t_c(p)$: \% of trajectory at which $p$\% of emergence has occurred (deepest layer, all attributes). Green = early; red = late.}

\label{fig:temporal_hierarchy}
\end{figure}

\begin{figure}[t!]
\centering
\subcaptionbox{Cumulative effect size\label{fig:effect_size_cumulative}}{%
  \includegraphics[height=2.5cm]{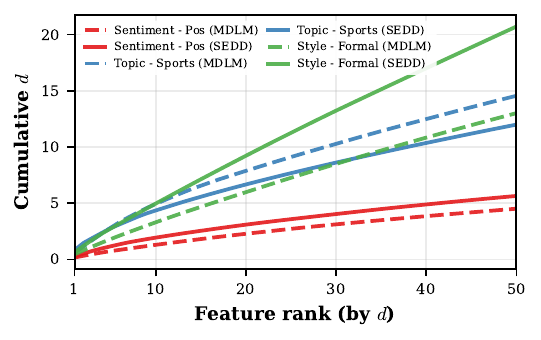}}%
\hfill
\subcaptionbox{Total effect size\label{fig:effect_size_total}}{%
  \includegraphics[height=2.7cm]{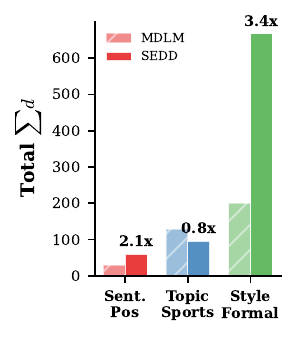}}%
\hfill
\subcaptionbox{Effect spread\label{fig:effect_size_spread}}{%
  \includegraphics[height=2.7cm]{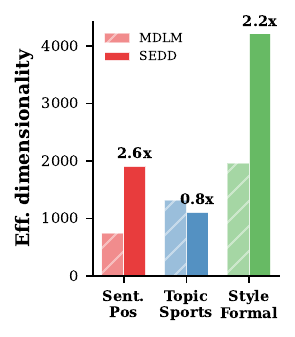}}%
\hfill
\subcaptionbox{Cross-model alignment\label{fig:compact_loss_geometry}}{%
  \includegraphics[height=2.5cm]{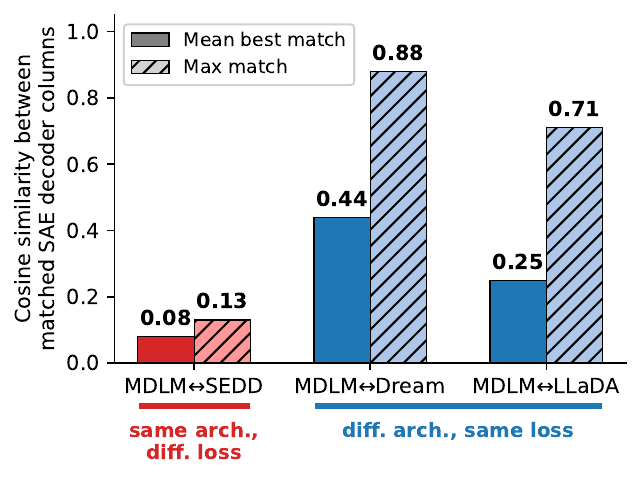}}
\caption{Training objective shapes feature geometry. \textbf{(a--c)} MDLM vs.\ SEDD (layer~7, Cohen's $d > 0$ features): \textbf{(a)} cumulative Cohen's $d$ vs.\ rank; \textbf{(b)} total $\sum d$; \textbf{(c)} effective dimensionality. \textbf{(d)} Cross-model alignment: sharing a loss family yields higher similarity than sharing architecture.}
\label{fig:effect_size}
\end{figure}

\textbf{Effect-size asymmetry: attributes differ in per-feature discriminative strength.}
Topic features carry $2{-}3\times$ larger per-feature Cohen's $|d|$ than sentiment across all models (Figure~\ref{fig:effect_size_cumulative}), meaning equal-strength interventions over-steer topic and under-steer sentiment, motivating effectiveness-ratio calibration (\S\ref{sec:adaptive}). Comparing MDLM and SEDD (same architecture and data, different loss) at layer~7 reveals further structure (Figure~\ref{fig:effect_size}): SEDD carries $2.1\times$ (sentiment) and $3.4\times$ (formality) more total signal ($\sum d$, Figure~\ref{fig:effect_size_total}), while topic is comparable ($0.8\times$). The effective dimensionality ($(\sum d)^2 / \sum d^2$; Figure~\ref{fig:effect_size_spread}) shows SEDD spreads sentiment across $2.6\times$ and formality across $2.2\times$ more features; topic is more concentrated. We examine the effect on steering in \S\ref{sec:experiments}.

\textbf{Training objective shapes feature geometry.}
Cross-model alignment in vocabulary space (details in \S~\ref{app:cross_model_cosine}) confirms the training objective as the dominant factor (Figure~\ref{fig:compact_loss_geometry}): MDLM and SEDD yield near-orthogonal features (cosine sim.\ 0.08), while MDLM and DREAM (different architecture, $56\times$ scale gap, same loss family) achieve 0.44 mean and 0.88 max similarity.

We additionally analyze whether features encode attributes \emph{anticipatorily} (firing on still-masked positions) or \emph{reactively} (responding to visible tokens): topic is anticipatory on MDLM with depth-dependent strength; SEDD also shows compressed but qualitatively similar patterns; DREAM shows minimal differentiation until late denoising (\S\ref{app:anticipatory_analysis}). Together, these findings motivate a steering framework that adapts to each model's temporal structure, as developed in \S\ref{sec:steering}.

%% file: 5_interpretability.tex
\vspace{-10pt}
\section{Interpretability Informed Steering}
\label{sec:steering}

The analysis in \S\ref{sec:interpretation} reveals that attributes commit at different steps with different sharpness, carry different per-feature discriminative strengths, and encode anticipatorily. We exploit this by steering in SAE feature space, modulating intervention strength across denoising steps and attributes.

\textbf{Setup.} At denoising step $t$ and layer $\ell$, we encode $h_t = \mathrm{Enc}(x_t)$, reconstruct $\hat{x}_t = \mathrm{Dec}(h_t)$, and modify features on the attribute-specific set $F_a^\ell$:
\begin{equation}
\tilde{h}_{t,j} \;=\; h_{t,j} + \alpha_{\mathrm{eff}}(a, t, \ell)\,\delta_j^{(a)} \qquad (j \in F_a^\ell),
\label{eq:steering-features}
\end{equation}
where $\delta_j^{(a)} = \bar{h}_{j,+}^{(a)} - \bar{h}_{j,-}^{(a)}$ is the contrastive shift, signed so that positive $\alpha_{\mathrm{eff}}$ steers toward the target pole. 
The forwarded residual is 
$x'_t = x_t - \hat{x}_t + \tilde{x}_t$
with $\tilde{x}_t = \mathrm{Dec}(\tilde{h}_t)$; the off-basis term $x_t - \hat{x}_t$ preserves SAE-residual information, and the affine decoder gives $x'_t - x_t = W_{F_a^\ell}(\alpha_{\mathrm{eff}}\,\delta^{(a)})$, so the intervention acts only through the selected coordinates.

\subsection{Steering Configurations}
\label{sec:adaptive}

We decompose the effective steering strength into three components:\
\begin{equation}
\alpha_{\mathrm{eff}}(a, t, \ell) = \underbrace{\alpha}_{\text{overall scale}} \cdot \underbrace{r(a)}_{\text{attribute weight}} \cdot \underbrace{w_{\mathrm{dyn}}(t; a, \ell)}_{\text{temporal weight}}.
\label{eq:general_alpha}
\end{equation}
The scalar $\alpha$ controls the overall steering magnitude (swept during evaluation); $r(a)\in (0,1]$ rebalances strength across attributes; and $w_{\text{dyn}}(t;a,\ell)\in[0,1]$ modulates strength across denoising steps according to each attribute's empirical commitment profile.

For multiple attributes $\mathcal{A} = \{a_1, \dots, a_m\}$, we compose interventions additively in feature space:\phantomsection\label{sec:multi_attr}
\begin{equation}
\tilde{h}_{t,j} \;=\; \mathrm{ReLU}\!\left(h_{t,j} + \sum_{i:\, j \in F_{a_i}^\ell} \alpha_{\mathrm{eff}}(a_i, t, \ell)\, \delta_j^{(a_i)}\right),
\label{eq:additive_feature_composition}
\end{equation}
with steered residual $x'_t = x_t - \hat{x}_t + \tilde{x}_t$. The ReLU clamp enforces the non-negative activation domain of the TopK SAE (a single $\delta^{(a)}$ at moderate $\alpha_{\mathrm{eff}}$ does not drive coordinates negative, but a sum of updates can). This formulation is effective because the contrastive selection procedure (\S\ref{sec:feature_id}) yields near-disjoint feature sets across attributes. Different choices of $w_{\mathrm{dyn}}$ and $r$ yield four steering configurations of increasing sophistication, each incorporating a successive interpretability finding.

\textbf{Uniform ($r{=}1$, $w_{\mathrm{dyn}}{=}1$).} Constant $\alpha$ at every step. This ignores the temporal structure uncovered in \S\ref{sec:temporal_dynamics}: different attributes commit at different denoising steps (e.g., topic within 2\% while sentiment emerges over 20\%+; Figure~\ref{fig:commitment_times}), so a uniform schedule over-steers already-committed attributes and under-steers those still emerging.

\textbf{Adaptive ($r{=}1$).} Directly operationalizes commitment timing and sharpness (Figures~\ref{fig:compact_emergence}--\ref{fig:compact_blockfrac}): intervention is concentrated during active emergence and relaxed elsewhere. We partition the $T$ denoising steps into $B$ blocks and measure the \emph{commitment fraction}, the share of total feature activation growth in each block:
$f_b^{(a,\ell)} \;=\; \frac{\sum_{j \in F_a^\ell} \max\!\big(0,\, \bar h_j^{(s_b^{\mathrm{end}})} - \bar h_j^{(s_b^{\mathrm{start}})}\big)}{\sum_{b'} \sum_{j \in F_a^\ell} \max\!\big(0,\, \bar h_j^{(s_{b'}^{\mathrm{end}})} - \bar h_j^{(s_{b'}^{\mathrm{start}})}\big)},$
where $\bar h_j^{(t)}$ is the mean activation of feature $j$ at step $t$ (averaged over the contrastive corpus) and negative differences are clamped to zero, capturing net emergence only. The adaptive weight for all steps $t$ within block $b$ is
\begin{equation}
w_{\mathrm{dyn}}(t; a, \ell) \;=\; \frac{f_b^{(a,\ell)}}{\max_{b'} f_{b'}^{(a,\ell)}},
\label{eq:dynamics_weight}
\end{equation}
with the peak-emergence block receiving weight~1. When emergence is sharply concentrated (e.g., topic on MDLM, where block~0 captures 100\%; \S\ref{sec:temporal_dynamics}), $w_{\mathrm{dyn}} \approx 1$ early and ${\approx}\,0$ thereafter; when emergence is spread evenly (e.g., LLaDA), $w_{\mathrm{dyn}} \approx 1$ everywhere and Adaptive degenerates to Uniform, yielding a testable prediction confirmed in \S\ref{sec:experiments}.

\textbf{Uniform+E and Adaptive+E.} Addresses effect-size asymmetry (Figure~\ref{fig:effect_size_cumulative}): attributes carry different per-feature Cohen's $d$ magnitudes, so the same $\alpha$ produces disproportionate perturbations (e.g., topic features carry $2{-}3\times$ larger $|d|$ than sentiment; \S\ref{sec:temporal_dynamics}), causing stronger attributes to saturate before weaker ones reach target. We calibrate from single-attribute dose-response sweeps: for each attribute $a$, let $\alpha^*_a$ be the minimum strength reaching a reference control threshold. The effectiveness ratio is $r(a) = \alpha^*_a \big/ \max_{a'} \alpha^*_{a'}$,\label{eq:e_ratio} so the hardest-to-steer attribute receives $r{=}1$ and easier attributes are scaled down. We calibrate empirically rather than deriving $r(a)$ directly from Cohen's $d$ ratios because the relationship between feature-level effect size and classifier response is nonlinear and model-dependent. This normalizes the effect of $\alpha$ across attributes, allowing a single global budget without attribute-specific imbalances. Uniform+E sets $w_{\mathrm{dyn}}{=}1$; Adaptive+E uses both components.

\subsection{Theoretical Analysis}
\label{sec:theory}

We derive the optimal temporal allocation in closed form and characterize the adaptive-vs-uniform gain, then bound cross-attribute interference from the additive composition.

\textbf{Trajectory-level cost decomposition.}\label{sec:cost-decomp}
The steered and unsteered reverse processes are Markov chains with common initial distribution, so the chain rule gives $\mathrm{KL}(\tilde{p}_\theta \| p_\theta) = \sum_t \mathbb{E}_{x_t}[\mathrm{KL}(\tilde{p}_\theta(x_{t-1}|x_t) \| p_\theta(x_{t-1}|x_t))]$. 
Writing the per-step gain as $\Delta g_t = \alpha_t s_t + o(\alpha_t)$ ($s_t \geq 0$) with $\Delta q_t(\alpha_t) = \tfrac{1}{2}c_t\alpha_t^2 + o(\alpha_t^2)$, the allocation problem is then
\begin{equation}
\min_{\alpha_t \geq 0}\; \tfrac{1}{2}\sum_{t=0}^{T-1} c_t \alpha_t^2 \quad \text{s.t.} \quad \sum_{t=0}^{T-1} \alpha_t s_t \;\geq\; E_{\mathrm{target}}.
\label{eq:budget}
\end{equation}

\textbf{Adaptive vs.\ uniform scheduling.}\label{sec:eff-ratio}
The KKT conditions for \eqref{eq:budget} yield $\alpha_t^\star \propto s_t/c_t$ (\S\ref{app:prop1-proof}). We compare against the uniform allocation $\alpha_t \equiv \alpha$.

\begin{theorem}[Adaptive-vs-uniform efficiency ratio]
\label{thm:eff-ratio}
Assume $s_t \geq 0$, $c_t > 0$, and $E_{\mathrm{target}} > 0$. Among schedules with $\tfrac{1}{2}\sum_t c_t \alpha_t^2 = B$, let $E^\star$ and $E_{\mathrm{unif}}$ denote the maximum attribute shifts under the optimal and uniform schedules. Then
\begin{equation}
\rho^2 \;:=\; \left(\frac{E^\star}{E_{\mathrm{unif}}}\right)^2
\;=\; \frac{\big(\sum_t s_t^2/c_t\big)\big(\sum_t c_t\big)}
            {\big(\sum_t s_t\big)^2}
\;=\; 1 \;+\; \mathrm{CV}_c^2(s/c),
\label{eq:eff-ratio}
\end{equation}
where $\mathrm{CV}_c$ is the coefficient of variation of $s_t/c_t$ under the cost-weighted measure $c_t/\sum_{t'} c_{t'}$, with equality $\rho = 1$ iff $s_t/c_t$ is constant in $t$. (See \S\ref{app:prop1-proof} for assumptions and full proof.)
\end{theorem}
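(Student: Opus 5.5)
The plan is to work directly with the quadratic surrogate in \eqref{eq:budget}, dropping the $o(\cdot)$ terms. Throughout, $E(\alpha)=\sum_t \alpha_t s_t$ is the attribute shift and $Q(\alpha)=\tfrac12\sum_t c_t\alpha_t^2$ is the cost. Both $E^\star$ and $E_{\mathrm{unif}}$ are computed at the fixed budget $Q=B$.

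\textbf{Step 1: optimal schedule.} First, maximize $E$ over $\alpha\ge 0$ subject to $Q(\alpha)=B$. Substituting $u_t=\sqrt{c_t}\,\alpha_t$ gives
\[
E=\sum_t u_t\,\frac{s_t}{\sqrt{c_t}}, \qquad \sum_t u_t^2=2B .
\]
Cauchy--Schwarz then yields $E\le \sqrt{2B}\,\bigl(\sum_t s_t^2/c_t\bigr)^{1/2}$. Equality holds iff $u_t\propto s_t/\sqrt{c_t}$, i.e.\ $\alpha_t\propto s_t/c_t$. This direction is nonnegative because $s_t\ge0$, so the constraint $\alpha\ge0$ is inactive. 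The same schedule is what the KKT conditions give, and it is consistent with the stated $\alpha_t^\star\propto s_t/c_t$. Hence $(E^\star)^2=2B\sum_t s_t^2/c_t$.

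I will note one nondegeneracy requirement: we need $\sum_t s_t>0$. Otherwise both shifts vanish, and no schedule can meet $E_{\mathrm{target}}>0$.

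\textbf{Step 2: uniform schedule.} With $\alpha_t\equiv\alpha$, the budget forces $\alpha=\sqrt{2B/\sum_t c_t}$. Therefore $E_{\mathrm{unif}}=\alpha\sum_t s_t$, so
\[
E_{\mathrm{unif}}^2=\frac{2B\,\bigl(\sum_t s_t\bigr)^2}{\sum_t c_t}.
\]
Dividing the two squared shifts makes $B$ cancel and gives the first equality in \eqref{eq:eff-ratio}. It also shows the ratio is budget-independent. The same ratio therefore describes the dual statement, namely the minimum cost needed to reach $E_{\mathrm{target}}$, where the costs scale as $E_{\mathrm{target}}^2$.

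\textbf{Step 3: the CV identity.} Set $\pi_t=c_t/C$ with $C=\sum_t c_t$, and $r_t=s_t/c_t$. Then
\[
\mathbb{E}_\pi[r]=\frac{\sum_t s_t}{C}, \qquad \mathbb{E}_\pi[r^2]=\frac{\sum_t s_t^2/c_t}{C}.
\]
So the ratio equals $\mathbb{E}_\pi[r^2]/\mathbb{E}_\pi[r]^2=1+\mathrm{Var}_\pi(r)/\mathbb{E}_\pi[r]^2=1+\mathrm{CV}_c^2(s/c)$. The value $\rho=1$ occurs iff $\mathrm{Var}_\pi(r)=0$. Since every $\pi_t>0$, this means $s_t/c_t$ is constant in $t$. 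Equivalently, this is the equality case of Cauchy--Schwarz in Step 1.

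\textbf{Main obstacle.} The algebra is routine. The delicate point is justifying the quadratic surrogate: the exact gains and costs carry $o(\alpha_t)$ and $o(\alpha_t^2)$ terms. I would state the theorem for the second-order model, as the problem \eqref{eq:budget} already does. I would then remark that the identity holds to leading order as $B\to0$, because the optimal and uniform schedules both scale as $\sqrt{B}$, so the remainders are lower order.
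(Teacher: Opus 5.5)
Your proof is correct and matches the paper's argument: the uniform-schedule computation, the cancellation of $B$ in the ratio, the cost-weighted CV identity, and the equality case are the same as Parts 2--3 of the paper's proof, including your flag that $\sum_t s_t>0$ is needed (the paper lists it as Assumption~4). The only difference is Step 1: you get $E^\star=\sqrt{2B\sum_t s_t^2/c_t}$ by applying Cauchy--Schwarz directly to the fixed-budget maximization (the same substitution the paper uses in Corollary~\ref{cor:efficiency}), whereas the paper applies KKT to the min-cost problem with target $E_{\mathrm{target}}$ and then inverts $B^\star(E)$; your route is slightly more direct, since one step handles both the equality budget constraint and the inactive constraint $\alpha\ge0$.
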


\textbf{Interpretation.} The adaptive gain is governed by cost-weighted dispersion of $s_t/c_t$: large when the attribute commits in a few steps, near one when emergence is spread evenly. For any heuristic schedule, $E/E^\star = \cos\theta$ where $\theta$ is the angle between $(\alpha_t\sqrt{c_t})_t$ and $(s_t/\sqrt{c_t})_t$ (Corollary~\ref{cor:efficiency}, \S\ref{app:efficiency}); equivalently, $B/B^\star = 1/\cos^2\theta$ at matched shift.

\textbf{Proxy fidelity.}\label{sec:proxy}
Neither $s_t$ nor $c_t$ is directly observable. The proxy $w_{\mathrm{dyn}}$ approximates $\alpha_t^\star \propto s_t/c_t$ under two conditions (\S\ref{sec:proxy-app}): (i) active-set growth tracks the marginal sensitivity $s_t$, and (ii) $c_t$ varies slowly relative to $s_t$. The resulting efficiency is $E_{\mathrm{proxy}}/E^\star = \cos\theta_{\mathrm{proxy}}$ (Corollary~\ref{cor:heuristic}, \S\ref{app:efficiency}). Moreover, $s_t$ decomposes over positions with unmasked contributions vanishing (\S\ref{app:gain-decomp}), so anticipatory features (\S\ref{app:anticipatory_analysis}) drive the effective gain; since they contribute disproportionately to activation growth in early blocks, the temporal proxy implicitly concentrates intervention where it has the most effect. \S\ref{sec:experiments} confirms the predicted regimes: a substantial gap on MDLM and none on LLaDA.

\textbf{Decoder-space interference bound.}
To analyze cross-attribute interference from \eqref{eq:additive_feature_composition}, let $\Delta x_t^{(i)}$ denote the residual-space contribution of attribute $i$.

\begin{proposition}[Decoder-space interference bound]
\label{prop:interference}
Let $G = W_{\mathrm{dec}}^\top W_{\mathrm{dec}}$ and $G^{(ij)}$ its submatrix on $F_{a_i}^\ell \times F_{a_j}^\ell$. If $F_{a_i}^\ell \cap F_{a_j}^\ell = \emptyset$, then
\begin{equation}
\big|\cos(\Delta x_t^{(i)}, \Delta x_t^{(j)})\big|
\;\leq\; \sigma_{\max}(G^{(ij)})
  \big/\sqrt{\sigma_{\min}(G^{(ii)})\,\sigma_{\min}(G^{(jj)})}.
\label{eq:gram-bound}
\end{equation}
\end{proposition}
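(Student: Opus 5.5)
Write $\Delta x_t^{(i)} = W_i u_i$. Here $W_i$ collects the decoder directions $\mathbf{w}_j^{\mathrm{dec}}$ for $j \in F_{a_i}^\ell$, arranged as columns, consistent with $G = W_{\mathrm{dec}}^\top W_{\mathrm{dec}}$. The coefficient vector is $u_i = \alpha_{\mathrm{eff}}(a_i,t,\ell)\,\delta^{(a_i)}$ restricted to $F_{a_i}^\ell$. This follows from the affine-decoder identity stated after \eqref{eq:steering-features}.

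\emph{Handling the ReLU clamp.} Under \eqref{eq:additive_feature_composition}, the clamp couples attributes only through shared coordinates. Because $F_{a_i}^\ell \cap F_{a_j}^\ell = \emptyset$, each coordinate receives the update of at most one attribute. I will assume the regime in which the clamp is inactive, or else absorb it coordinatewise into a modified $u_i$ still supported on $F_{a_i}^\ell$. Either way, each contribution keeps the form $W_i u_i$ with $u_i$ supported on its own feature set. The bound below holds for \emph{arbitrary} nonzero $u_i$ and $u_j$, so the exact values of the coefficients never matter.

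\emph{Key computation.} The inner product is
\[
\langle \Delta x_t^{(i)}, \Delta x_t^{(j)}\rangle = u_i^\top W_i^\top W_j u_j = u_i^\top G^{(ij)} u_j ,
\]
and the squared norms are $\|\Delta x_t^{(i)}\|^2 = u_i^\top G^{(ii)} u_i$, similarly for $j$. The numerator is bounded by the operator-norm inequality:
\[
|u_i^\top G^{(ij)} u_j| \le \sigma_{\max}(G^{(ij)})\,\|u_i\|\,\|u_j\| .
\]
Each denominator factor is bounded by Rayleigh's inequality:
\[
u_i^\top G^{(ii)} u_i \ge \lambda_{\min}(G^{(ii)})\,\|u_i\|^2 .
\]
Since $G^{(ii)}$ is symmetric positive semidefinite, $\lambda_{\min}(G^{(ii)}) = \sigma_{\min}(G^{(ii)})$. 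Dividing the numerator bound by the square root of the product of the two denominator bounds cancels $\|u_i\|\,\|u_j\|$ and yields \eqref{eq:gram-bound}.

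\emph{Degenerate cases.} The bound is only meaningful when $\sigma_{\min}(G^{(ii)}) > 0$, that is, when the selected decoder columns are linearly independent. This requires $|F_{a_i}^\ell| \le d_{\mathrm{model}}$; the paper's sets have 50 features per direction, far below $d_{\mathrm{model}}$. If $\sigma_{\min} = 0$, the right-hand side is read as $+\infty$ and the statement is vacuous. Under independence, $u_i \neq 0$ also guarantees $\Delta x_t^{(i)} \neq 0$, so the cosine is well defined.

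\emph{Main obstacle.} The algebra itself is routine; the main obstacle is this modelling step. One must justify that disjointness of the feature sets makes the ReLU-clamped composition separate into per-attribute decoder contributions, and one must make the nondegeneracy convention explicit.
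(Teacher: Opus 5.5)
Your proposal is correct and matches the paper's proof step for step. The paper's subspace-containment lemma shows that disjointness makes each coordinatewise ReLU decision depend on a single attribute, so that $\Delta x_t^{(i)} = W_{F_{a_i}^\ell}\Delta h_t^{(i)}$ with $\Delta h_t^{(i)}$ supported on $F_{a_i}^\ell$; its Gram lemma then bounds $|\delta^{(i)\top}G^{(ij)}\delta^{(j)}|$ by $\sigma_{\max}(G^{(ij)})\|\delta^{(i)}\|\|\delta^{(j)}\|$ and each $\|\Delta x_t^{(i)}\|^2$ below by $\sigma_{\min}(G^{(ii)})\|\delta^{(i)}\|^2$, exactly as you do, with your handling of $\sigma_{\min}=0$ and $\Delta x_t^{(i)}\neq 0$ playing the role of the paper's positive-definiteness and nonzero-perturbation assumptions.
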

The bound depends only on decoder weights (detailed and proof in \S\ref{app:prop2}; realized values in \S\ref{app:gram-values}).

%% file: 6_experiments.tex
\section{Experiments}
\label{sec:experiments}
\subsection{Experimental Setup}
\label{sec:exp_setup}

\input{figures_tables/table_steering_results_v2}

\textbf{Models and attributes:} We evaluate the four SAE steering modes from \S\ref{sec:adaptive} (Uniform, Adaptive, Uniform+E, and Adaptive+E) on all four DLMs (Table~\ref{tab:model_summary}) across 7 attribute combinations: 3 single-attribute, sentiment~(S), topic~(T), and style~(St), and 4 multi-attribute (S+T, S+St, St+T, S+T+St). Sentiment is steered toward positive, topic toward sports, and style toward formality (MDLM/SEDD) or informality (LLaDA/DREAM). Each attribute is steered using the top 20 features per direction; a feature count ablation (\S\ref{app:feature_count_ablation}) shows that lower-ranked features increasingly encode correlated attributes (e.g., sentiment features that also shift formality), increasing cross-attribute interference. MDLM and SEDD generate unconditionally over 1024 denoising steps; LLaDA and DREAM use 10 domain-neutral prompts (e.g., ``I think this is'') with temperature 0.8/0.7 over 64/256 steps (\S\ref{app:steering_details}).

\textbf{Baselines:} We compare against three residual-stream baselines without SAE decomposition: \textit{Contrastive Vectors} (mean class difference~\citep{zou2023representation, rimsky-etal-2024-steering}), \textit{Probe} (logistic regression weight~\citep{belinkov-2022-probing}), and \textit{PCA} (top principal component of contrastive activations).
For LLaDA and DREAM, we additionally evaluate \textit{Prompt Steering}, prepending an attribute instruction to the prompt (\S\ref{app:baselines}). We note that other baselines such as ILRR~\citep{avrahami2026ilrr} require a reference sequence (a different problem setup), and classifier-based guidance methods~\citep{dhariwal2021diffusion} require differentiable classifiers integrated into the denoising loop and are not directly applicable to masked discrete diffusion.

\textbf{Metrics:} Steering performance is measured by off-the-shelf classifier confidence: DistilBERT-SST2~\citep{sanh2019distilbert} for sentiment, BERT-AG News~\citep{devlin2019bert} for topic, and a RoBERTa formality ranker~\citep{babakov2023don} for style (\S\ref{app:evaluation}). Quality is measured by GPT-2 perplexity (PPL) and distinct bigram ratio (dist-2). On LLaDA, aggressive steering produces repetitive text that achieves low PPL by exploiting n-gram patterns; dist-2 is therefore the primary quality indicator for this model. For each method, we generate 200 samples across $\alpha = 1$--$15$ ($1$--$20$ for SEDD, due to slower saturation) and select the $\alpha$ maximizing target confidence (single-attribute) or geometric mean confidence (multi-attribute), subject to $\mathrm{dist\text{-}2} \geq 50\%$ of baseline and $\mathrm{PPL} < 100$ (MDLM/SEDD baselines exempt, as they produce high PPL regardless of $\alpha$).

\vspace{-8pt}

\subsection{Results}
\label{sec:results}
\textbf{Single-attribute steering:} Table~\ref{tab:steering_results_v2} shows results across all conditions (selected $\alpha$ values and standard deviations in \S\ref{app:steering_detail}). SAE methods achieve strong control: 70--100\% on MDLM, 77--100\% on SEDD, 66--100\% on LLaDA, and 96--100\% on DREAM.
The gap is largest on LLaDA topic, where SAE methods reach 99--100\% while baselines remain at 24--32\% (near the 14\% unsteered rate) and prompt steering reaches only 65\%.
On MDLM, Adaptive matches Uniform's control at lower PPL (50--57 vs.\ 59--93) by targeting high-emergence blocks (\S\ref{sec:temporal_dynamics}); on LLaDA, whose flat temporal profile reduces Adaptive to approximately Uniform (\S\ref{sec:adaptive}), both perform comparably. SEDD is a notable exception: Contrastive Vectors matches SAE performance (100\% S, 100\% T, 92\% St), as SEDD's distributed feature geometry (Figure~\ref{fig:effect_size}) lets the mean-difference vector aggregate many aligned contributions; on MDLM, the same vector dilutes strong features, widening the SAE advantage.

\textbf{Multi-attribute steering:} SAE methods enable simultaneous control, with triple-condition confidence reaching 81\% (MDLM), 93\% (SEDD), 72\% (LLaDA), and 93\% (DREAM).
On LLaDA, the best SAE method exceeds the best baseline by +15pp (72\% vs.\ 57\%), driven by topic conditions.
On MDLM, Adaptive resolves inter-attribute interference: for S+St, Uniform achieves only 49\% while Adaptive reaches 64\% by targeting sentiment's later emergence blocks (\S\ref{sec:temporal_dynamics}), avoiding disruption of formality's earlier commitment. Adaptive+E combines both mechanisms, matching Uniform's triple control (80\% vs.\ 79\%) at substantially lower PPL (50--63 vs.\ 73--91); notably, the quality gap between Adaptive and Uniform \emph{widens} in multi-attribute steering as uniform perturbation compounds across attributes. On LLaDA, E-ratio calibration yields the largest diversity gains: Adaptive+E preserves $d_2$ at 52--65\% where Uniform collapses to 43--45\%, trading minimal control for richer output (\S\ref{sec:steering}).

\begin{figure*}[t]
\centering
\includegraphics[width=\textwidth]{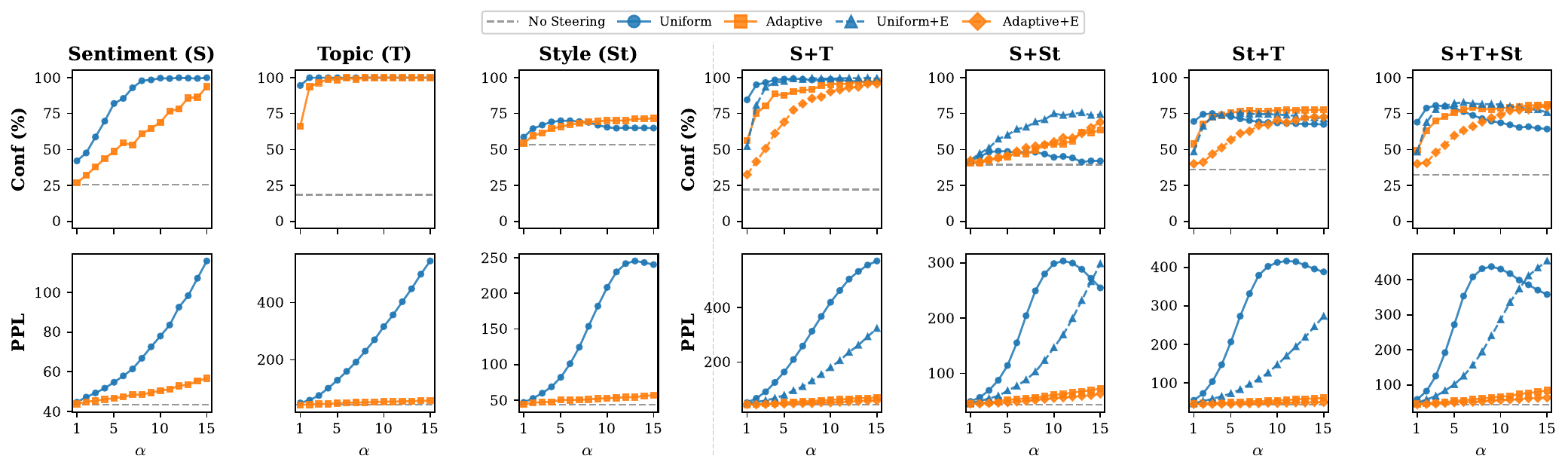}
\caption{MDLM-124M trade-off curves. Top: target confidence vs.\ $\alpha$. Bottom: perplexity vs.\ $\alpha$. Adaptive matches or exceeds Uniform in control without exploding PPL. Full curves in \S\ref{app:tradeoff_curves}.}
\label{fig:tradeoff_mdlm}
\end{figure*}

\textbf{Quality--control trade-off:}
Figure~\ref{fig:tradeoff_mdlm} shows the $\alpha$ sweep on MDLM for all four SAE methods; full curves including baselines and all models are in \S\ref{app:tradeoff_curves}.
The pattern generalizes: on MDLM, Uniform drives PPL above 500 for topic while Adaptive matches rates at far lower PPL by targeting high-emergence blocks; on LLaDA, Uniform drives dist-2 from 85\% to 44\% while Adaptive maintains 47--81\%.
Baselines on LLaDA maintain near non-steering quality but fail to shift target attributes.
MDLM shows the clearest Adaptive--Uniform separation; LLaDA is hardest to steer (particularly informality, 66--69\% from 22\%). On SEDD, baselines maintain higher diversity than SAE methods (dist-2 91--93\% vs.\ 84--91\%), making them competitive given SEDD's distributed representations. We report latency analysis and qualitative examples for all DLMs in \S\ref{app:latency} and \S\ref{app:qualitative}, respectively.

\textbf{Cross-attribute interference:}
We quantify interference as the mean absolute pp shift of non-target attributes from their unsteered baselines (Figure~\ref{fig:interference}, \S\ref{app:interference}). Zero interference is intuitively impossible because some attributes are naturally correlated (e.g., positive sentiment and formality); nonetheless, Adaptive consistently produces less interference than Uniform, with the largest gap on MDLM (up to 52pp for Uniform vs.\ 31pp for Adaptive), because targeting high-emergence steps avoids disrupting other attributes during their critical commitment.

\textbf{Special case: DREAM:} For all combinations, Adaptive matches Uniform's high control but with mixed quality trade-offs 
(PPL improves on some conditions, regresses on others; $d_2$ systematically lower). 
We attribute this to DREAM's AR-based initialization, which inherits late-trajectory token-finalization behavior from autoregressive pretraining (\S\ref{app:tradeoff_curves}).

%% file: figures_tables/table_steering_results_v2.tex
\begin{table}[h]
\centering
\caption{Steering results. Conf: target classifier confidence (\%, geometric mean for multi-attribute); PPL: perplexity; d$_2$: distinct bigram ratio (\%). Each row is the best operating point from an $\alpha$ sweep under quality gates (\S\ref{sec:results}). Selected $\alpha$ values and standard deviations are in \S\ref{app:steering_detail}.}
\label{tab:steering_results_v2}
\scriptsize
\setlength{\tabcolsep}{1.5pt}
\begin{adjustbox}{max width=\textwidth}
\begin{tabular}{l|ccc|ccc|ccc||ccc|ccc|ccc|ccc}
\toprule
 & \multicolumn{9}{c}{\textit{Single-Attribute}} & \multicolumn{12}{c}{\textit{Multi-Attribute}} \\
\cmidrule(lr){2-10}\cmidrule(lr){11-22}
 & \multicolumn{3}{c}{Sentiment (S)} & \multicolumn{3}{c}{Topic (T)} & \multicolumn{3}{c}{Style (St)} & \multicolumn{3}{c}{S+T} & \multicolumn{3}{c}{S+St} & \multicolumn{3}{c}{St+T} & \multicolumn{3}{c}{S+T+St} \\
\cmidrule(lr){2-4}\cmidrule(lr){5-7}\cmidrule(lr){8-10}\cmidrule(lr){11-13}\cmidrule(lr){14-16}\cmidrule(lr){17-19}\cmidrule(lr){20-22}
 & Conf$\uparrow$ & PPL$\downarrow$ & d$_2$$\uparrow$ & Conf$\uparrow$ & PPL$\downarrow$ & d$_2$$\uparrow$ & Conf$\uparrow$ & PPL$\downarrow$ & d$_2$$\uparrow$ & Conf$\uparrow$ & PPL$\downarrow$ & d$_2$$\uparrow$ & Conf$\uparrow$ & PPL$\downarrow$ & d$_2$$\uparrow$ & Conf$\uparrow$ & PPL$\downarrow$ & d$_2$$\uparrow$ & Conf$\uparrow$ & PPL$\downarrow$ & d$_2$$\uparrow$ \\
\midrule
\multicolumn{22}{l}{\textbf{MDLM-124M}} \\
\quad \textit{No Steering} & $25.5$ & $43$ & $92.6$ & $18.6$ & $43$ & $92.6$ & $53.3$ & $43$ & $92.6$ & $22.1$ & $43$ & $92.6$ & $39.4$ & $43$ & $92.6$ & $36.0$ & $43$ & $92.6$ & $32.5$ & $43$ & $92.6$ \\
\quad Uniform & $99.9$ & $93$ & $93.7$ & $99.9$ & $59$ & $93.0$ & $70.0$ & $82$ & $93.8$ & $96.5$ & $91$ & $93.1$ & $48.9$ & $88$ & $93.9$ & $74.5$ & $73$ & $92.9$ & $78.8$ & $82$ & $93.1$ \\
\quad Adaptive & $93.7$ & $57$ & $93.6$ & $99.9$ & $50$ & $92.5$ & $71.7$ & $57$ & $93.3$ & $97.5$ & $71$ & $92.5$ & $63.7$ & $73$ & $93.5$ & $77.6$ & $59$ & $91.9$ & $81.3$ & $84$ & $92.8$ \\
\quad Uniform+E & --- & --- & --- & --- & --- & --- & --- & --- & --- & $99.2$ & $96$ & $93.3$ & $65.5$ & $88$ & $94.1$ & $75.2$ & $83$ & $92.9$ & $80.3$ & $84$ & $93.3$ \\
\quad Adaptive+E & --- & --- & --- & --- & --- & --- & --- & --- & --- & $95.6$ & $58$ & $92.9$ & $69.0$ & $63$ & $94.1$ & $72.7$ & $50$ & $92.7$ & $80.0$ & $62$ & $93.1$ \\
\cmidrule(l){1-22}
\quad Contrastive Vec. & $95.0$ & $421$ & $92.5$ & $68.9$ & $354$ & $93.2$ & $43.3$ & $379$ & $95.6$ & $74.2$ & $376$ & $91.8$ & $79.6$ & $494$ & $95.3$ & $63.8$ & $389$ & $95.1$ & $74.5$ & $466$ & $94.5$ \\
\quad Probe & $74.3$ & $581$ & $96.4$ & $99.2$ & $230$ & $92.4$ & $66.4$ & $1285$ & $98.4$ & $86.3$ & $324$ & $94.1$ & $67.1$ & $856$ & $97.0$ & $76.0$ & $873$ & $97.8$ & $71.8$ & $798$ & $97.4$ \\
\quad PCA & $20.7$ & $490$ & $96.9$ & $15.7$ & $337$ & $94.3$ & $42.8$ & $383$ & $95.8$ & $15.7$ & $310$ & $93.5$ & $30.3$ & $406$ & $95.9$ & $28.5$ & $346$ & $94.7$ & $24.5$ & $396$ & $95.9$ \\
\midrule
\midrule
\multicolumn{22}{l}{\textbf{SEDD-124M}} \\
\quad \textit{No Steering} & $29.0$ & $44$ & $92.5$ & $20.3$ & $44$ & $92.5$ & $57.7$ & $44$ & $92.5$ & $24.6$ & $44$ & $92.5$ & $43.4$ & $44$ & $92.5$ & $39.0$ & $44$ & $92.5$ & $35.7$ & $44$ & $92.5$ \\
\quad Uniform & $98.8$ & $94$ & $91.1$ & $100.0$ & $57$ & $91.5$ & $93.7$ & $97$ & $84.1$ & $96.4$ & $98$ & $89.3$ & $86.1$ & $98$ & $85.1$ & $84.5$ & $93$ & $88.1$ & $87.0$ & $88$ & $88.5$ \\
\quad Adaptive & $76.8$ & $55$ & $90.3$ & $100.0$ & $49$ & $90.7$ & $91.2$ & $48$ & $87.2$ & $96.4$ & $67$ & $87.2$ & $80.6$ & $51$ & $84.4$ & $89.5$ & $57$ & $84.0$ & $92.5$ & $63$ & $83.7$ \\
\quad Uniform+E & --- & --- & --- & --- & --- & --- & --- & --- & --- & $96.8$ & $95$ & $91.1$ & $95.8$ & $96$ & $87.7$ & $85.0$ & $98$ & $88.3$ & $79.3$ & $99$ & $89.7$ \\
\quad Adaptive+E & --- & --- & --- & --- & --- & --- & --- & --- & --- & $80.1$ & $59$ & $91.5$ & $79.2$ & $58$ & $89.9$ & $89.5$ & $57$ & $84.0$ & $71.1$ & $61$ & $91.0$ \\
\cmidrule(l){1-22}
\quad Contrastive Vec. & $99.8$ & $88$ & $91.3$ & $99.9$ & $53$ & $91.5$ & $91.7$ & $81$ & $91.5$ & $99.5$ & $91$ & $92.1$ & $98.0$ & $79$ & $90.0$ & $83.9$ & $86$ & $92.5$ & $87.5$ & $87$ & $91.7$ \\
\quad Probe & $95.7$ & $96$ & $91.4$ & $87.8$ & $76$ & $92.8$ & $87.2$ & $91$ & $91.9$ & $85.1$ & $98$ & $92.3$ & $85.6$ & $88$ & $91.9$ & $74.0$ & $98$ & $92.9$ & $79.9$ & $94$ & $92.5$ \\
\quad PCA & $33.3$ & $51$ & $92.6$ & $32.9$ & $40$ & $92.4$ & $56.7$ & $40$ & $91.8$ & $25.1$ & $48$ & $92.8$ & $70.0$ & $72$ & $92.9$ & $40.9$ & $37$ & $91.7$ & $45.5$ & $43$ & $92.7$ \\
\midrule
\midrule
\multicolumn{22}{l}{\textbf{LLaDA-8B}} \\
\quad \textit{No Steering} & $69.1$ & $16$ & $85.1$ & $13.9$ & $16$ & $85.1$ & $22.2$ & $16$ & $85.1$ & $41.5$ & $16$ & $85.1$ & $45.6$ & $16$ & $85.1$ & $18.1$ & $16$ & $85.1$ & $35.1$ & $16$ & $85.1$ \\
\quad Uniform & $89.0$ & $12$ & $76.5$ & $99.9$ & $6$ & $44.2$ & $69.0$ & $7$ & $43.7$ & $97.3$ & $6$ & $43.8$ & $65.0$ & $7$ & $43.6$ & $68.0$ & $6$ & $45.2$ & $71.1$ & $7$ & $45.2$ \\
\quad Adaptive & $79.6$ & $14$ & $80.6$ & $99.7$ & $6$ & $50.9$ & $66.1$ & $8$ & $46.8$ & $94.7$ & $8$ & $56.9$ & $67.2$ & $7$ & $45.4$ & $69.8$ & $7$ & $46.7$ & $72.3$ & $7$ & $46.1$ \\
\quad Uniform+E & --- & --- & --- & --- & --- & --- & --- & --- & --- & $92.4$ & $7$ & $54.4$ & $61.3$ & $7$ & $43.1$ & $67.0$ & $6$ & $43.5$ & $68.7$ & $6$ & $43.4$ \\
\quad Adaptive+E & --- & --- & --- & --- & --- & --- & --- & --- & --- & $89.6$ & $9$ & $65.0$ & $58.0$ & $11$ & $63.1$ & $56.4$ & $11$ & $58.7$ & $66.0$ & $8$ & $52.3$ \\
\cmidrule(l){1-22}
\quad Contrastive Vec. & $97.1$ & $10$ & $72.7$ & $29.6$ & $8$ & $74.4$ & $53.7$ & $7$ & $71.4$ & $59.9$ & $8$ & $81.9$ & $67.7$ & $8$ & $66.2$ & $45.9$ & $8$ & $75.9$ & $57.1$ & $10$ & $84.1$ \\
\quad Probe & $89.0$ & $7$ & $69.3$ & $32.3$ & $8$ & $70.3$ & $48.6$ & $8$ & $66.9$ & $55.2$ & $8$ & $74.3$ & $67.2$ & $8$ & $72.6$ & $45.0$ & $9$ & $79.6$ & $54.3$ & $7$ & $67.1$ \\
\quad PCA & $94.2$ & $8$ & $73.0$ & $24.1$ & $8$ & $73.2$ & $49.7$ & $9$ & $73.4$ & $57.8$ & $7$ & $61.5$ & $73.2$ & $8$ & $69.9$ & $39.6$ & $7$ & $66.0$ & $49.5$ & $8$ & $73.2$ \\
\quad Prompt & $80.9$ & $16$ & $84.7$ & $64.9$ & $10$ & $70.0$ & $45.1$ & $20$ & $65.3$ & $69.6$ & $17$ & $82.7$ & $48.4$ & $22$ & $80.4$ & $55.9$ & $12$ & $74.9$ & $53.9$ & $19$ & $86.2$ \\
\midrule
\midrule
\multicolumn{22}{l}{\textbf{DREAM-7B}} \\
\quad \textit{No Steering} & $57.7$ & $14$ & $87.9$ & $10.5$ & $14$ & $87.9$ & $26.5$ & $14$ & $87.9$ & $34.1$ & $14$ & $87.9$ & $42.1$ & $14$ & $87.9$ & $18.5$ & $14$ & $87.9$ & $31.6$ & $14$ & $87.9$ \\
\quad Uniform & $99.5$ & $14$ & $84.5$ & $100.0$ & $53$ & $83.6$ & $97.5$ & $75$ & $60.4$ & $99.8$ & $26$ & $78.6$ & $97.0$ & $61$ & $67.2$ & $82.8$ & $72$ & $86.2$ & $86.3$ & $67$ & $83.7$ \\
\quad Adaptive & $97.6$ & $12$ & $85.1$ & $100.0$ & $37$ & $47.6$ & $96.3$ & $60$ & $54.4$ & $99.3$ & $50$ & $56.8$ & $96.2$ & $34$ & $54.3$ & $90.8$ & $94$ & $70.7$ & $89.2$ & $74$ & $77.6$ \\
\quad Uniform+E & --- & --- & --- & --- & --- & --- & --- & --- & --- & $100.0$ & $21$ & $77.4$ & $95.3$ & $52$ & $66.3$ & $88.2$ & $70$ & $85.3$ & $91.1$ & $78$ & $82.8$ \\
\quad Adaptive+E & --- & --- & --- & --- & --- & --- & --- & --- & --- & $99.8$ & $17$ & $76.7$ & $94.5$ & $31$ & $56.1$ & $86.3$ & $83$ & $79.9$ & $93.0$ & $72$ & $63.9$ \\
\cmidrule(l){1-22}
\quad Contrastive Vec. & $95.3$ & $16$ & $83.6$ & $88.7$ & $15$ & $89.4$ & $92.8$ & $41$ & $79.0$ & $85.6$ & $15$ & $79.9$ & $90.9$ & $41$ & $70.3$ & $73.3$ & $50$ & $78.1$ & $79.1$ & $51$ & $76.6$ \\
\quad Probe & $86.6$ & $15$ & $87.2$ & $76.7$ & $14$ & $83.2$ & $98.5$ & $48$ & $94.4$ & $65.7$ & $15$ & $88.1$ & $81.6$ & $51$ & $89.2$ & $81.8$ & $56$ & $88.4$ & $73.1$ & $36$ & $88.7$ \\
\quad PCA & $65.3$ & $15$ & $88.7$ & $18.3$ & $16$ & $82.6$ & $27.8$ & $15$ & $89.4$ & $48.7$ & $13$ & $79.4$ & $40.6$ & $15$ & $91.1$ & $24.3$ & $14$ & $85.2$ & $32.3$ & $14$ & $89.7$ \\
\quad Prompt & $80.3$ & $12$ & $81.8$ & $61.5$ & $13$ & $84.1$ & $20.6$ & $15$ & $78.7$ & $71.8$ & $12$ & $80.2$ & $41.1$ & $12$ & $73.8$ & $43.1$ & $15$ & $86.7$ & $57.8$ & $14$ & $78.6$ \\
\bottomrule
\end{tabular}
\end{adjustbox}
\end{table}

%% file: 7_conclusion_limitations.tex
\vspace{-8pt}
\section{Conclusion}

We use sparse autoencoders to reveal how semantic features emerge during discrete diffusion denoising, discovering that attributes commit at different steps, with different sharpness, and at different magnitudes. These findings directly inform an adaptive steering framework that concentrates intervention where each attribute is actively forming, achieving strong multi-attribute control at substantially lower perplexity and interference than uniform or non-SAE baselines. More broadly, our results show that the iterative structure of DLMs provides a natural foundation for temporally aligned, composable interventions, turning interpretability into a practical lever for controlled generation.

\textbf{Limitations:} Our evaluation relies on classifier-based metrics that may not capture nuanced attribute expression. Generated sequences are short (64--1024 tokens); steering at longer lengths remains unexplored. Diversity decreases at high steering strengths in SAE-based methods. Extending to finer-grained attributes and learning steering schedules from feature dynamics are open directions.

%% file: appendix.tex
\appendix

\section{Proofs and Extended Discussion for Steering Methods}
\label{app:steering_proofs}

We restate each result for convenience and provide full proofs.

\subsection{Proof of Theorem~\ref{thm:eff-ratio} (Adaptive-vs-uniform efficiency ratio)}
\label{app:prop1-proof}

\subsubsection{Trajectory-level KL: exact additive decomposition}
\label{app:additive-cost}

Theorem~\ref{thm:eff-ratio} optimizes against the budget $\tfrac{1}{2}\sum_t c_t \alpha_t^2 \leq B$, where each $c_t$ is the second-order cost of perturbation at step $t$. This budget arises as a tractable additive decomposition of the trajectory-level KL between steered and unsteered reverse processes; we derive the decomposition here.

Let $p_\theta, \tilde p_\theta$ denote the joint distributions of the unsteered and steered reverse trajectories $(x_T, x_{T-1}, \ldots, x_0)$. Both are Markov: the reverse transition at step $t$ is $p_\theta(x_{t-1} \mid x_t)$ for the unsteered model and $\tilde p_\theta(x_{t-1} \mid x_t)$ for the steered model (which differs only in the residual stream perturbation at step $t$). Both processes share the same initial distribution at $x_T$ (e.g., the fully-masked prior in absorbing-state diffusion), so $\mathrm{KL}(\tilde p_\theta(x_T) \,\|\, p_\theta(x_T)) = 0$. The chain rule for KL on Markov chains applied to the reverse joint, then it gives
\begin{equation}
\mathrm{KL}\!\big(\tilde p_\theta \,\big\|\, p_\theta\big)
\;=\; \sum_{t=1}^{T} \mathbb{E}_{x_t \sim \tilde p_\theta}\!\Big[\mathrm{KL}\big(\tilde p_\theta(x_{t-1} \mid x_t) \,\big\|\, p_\theta(x_{t-1} \mid x_t)\big)\Big].
\label{eq:markov-kl-app}
\end{equation}

\paragraph{First-order change-of-measure approximation.}
The outer expectation in \eqref{eq:markov-kl-app} is under the steered measure $\tilde p_\theta$, while $x_t$-marginal samples are easier to obtain under $p_\theta$. Replacing $\tilde p_\theta$ by $p_\theta$ in the outer expectation incurs an error that is sub-leading in $\alpha$: the Radon–Nikodym derivative $d\tilde p_\theta/dp_\theta$ equals $1$ at $\alpha = 0$ and varies smoothly, so the difference $\mathbb{E}_{\tilde p_\theta}[\Phi(x_t)] - \mathbb{E}_{p_\theta}[\Phi(x_t)]$ is $O(\alpha)$ for any bounded $\Phi$. Since the integrand $\Phi_t(x_t) := \mathrm{KL}(\tilde p_\theta(x_{t-1}\mid x_t)\,\|\,p_\theta(x_{t-1}\mid x_t))$ is itself $O(\alpha^2)$ in leading order (Lemma~\ref{lem:ct} below), the change-of-measure correction contributes $O(\alpha) \cdot O(\alpha^2) = O(\alpha^3)$, which is sub-leading relative to the $O(\alpha^2)$ leading cost. The replacement therefore preserves the leading-order behavior of the total cost. We use this approximation throughout.

\paragraph{Per-step quadratic expansion.}
For each summand, a second-order Taylor expansion of the KL around $\alpha_t = 0$ gives
\[
\Delta q_t(\alpha_t) \;=\; \mathbb{E}_{x_t \sim p_\theta}\!\Big[\mathrm{KL}\big(\tilde p_\theta(x_{t-1} \mid x_t) \,\big\|\, p_\theta(x_{t-1} \mid x_t)\big)\Big] \;=\; \tfrac{1}{2} c_t \alpha_t^2 + o(\alpha_t^2),
\]
where the linear term vanishes because $\alpha_t = 0$ globally minimizes each KL summand (KL is non-negative and zero at $\tilde p_\theta = p_\theta$, so the gradient at zero perturbation vanishes). The coefficient $c_t$ is given by the Jacobian-pullback expression of Lemma~\ref{lem:ct}. Summing across $t$ and combining with the first-order attribute gain $\Delta g_t(\alpha_t) = \alpha_t s_t + o(\alpha_t)$ gives the budgeted allocation problem stated in the main text:
\begin{equation}
\min_{\alpha_t \geq 0}\; \tfrac{1}{2}\sum_{t=0}^{T-1} c_t \alpha_t^2 \quad \text{s.t.} \quad \sum_{t=0}^{T-1} \alpha_t s_t \;\geq\; E_{\mathrm{target}}.
\label{eq:budget-app}
\end{equation}

\subsubsection{Proof of the efficiency ratio theorem}
\label{app:prop1-proof-statement}

\paragraph{Assumption 1 (Finite-horizon Markov reverse process).}
The steered and unsteered reverse processes, denoted by
$\tilde p_\theta$ and $p_\theta$, respectively, are finite-horizon
Markov chains with horizon $T < \infty$, sharing the same fully noised
initial distribution at $x_T$.

\paragraph{Assumption 2 (Smoothness of the attribute score).}
The attribute score function $g$ is twice continuously differentiable
with respect to the steering strength at $\alpha_t = 0$.
Specifically, $g$ is instantiated as the softmax probability of a
classifier evaluated on the realized output sequence, which is smooth
with respect to the residual-stream activation at the intervention layer.

\paragraph{Assumption 3 (Local validity of Taylor expansions).}
There exists a neighborhood around $\alpha_t = 0$ in which
\[
\Delta g_t(\alpha_t)
= \alpha_t s_t + o(\alpha_t),
\qquad
\Delta q_t(\alpha_t)
= \frac{1}{2} c_t \alpha_t^2 + o(\alpha_t^2),
\]
and the operating regime satisfies $\alpha_t = O(\lambda)$ such that
higher-order terms are negligible.

\paragraph{Assumption 4 (Nondegeneracy and feasibility).}
For all $t$, the coefficients satisfy
$s_t \geq 0$ and $c_t > 0$, with
$\sum_t s_t > 0$ and $E_{\mathrm{target}} > 0$.
These conditions ensure feasibility of the constrained optimization
problem in Eq.~\eqref{eq:budget}.

Assumption~A1 justifies the additive cost decomposition
(App.~\S\ref{app:additive-cost}).
Assumptions~A2 and~A3 ensure that $c_t$ is well defined as the second
derivative of the output KL .
Finally, Assumption~A4 guarantees that the associated KKT system admits
a unique nontrivial solution.

\paragraph{Theorem~\ref{thm:eff-ratio} (restated).}
\textit{Adaptive-vs-uniform efficiency ratio.} Consider \eqref{eq:budget-app} with $s_t \geq 0$, $c_t > 0$, and $E_{\mathrm{target}} > 0$. Fix a quality budget $B = \tfrac{1}{2}\sum_t c_t \alpha_t^2$ and let $E^\star$, $E_{\mathrm{unif}}$ denote the maximum attribute shifts achievable at this budget under the optimal and uniform schedules respectively. Then
\begin{equation}
\rho^2 \;:=\; \left(\frac{E^\star}{E_{\mathrm{unif}}}\right)^2
\;=\; \frac{\big(\sum_t s_t^2/c_t\big)\big(\sum_t c_t\big)}{\big(\sum_t s_t\big)^2}
\;=\; 1 \;+\; \mathrm{CV}_c^2(s/c),
\label{eq:eff-ratio-app}
\end{equation}
where $\mathrm{CV}_c$ denotes the coefficient of variation of $s_t/c_t$ under the cost-weighted measure $c_t/\sum_{t'} c_{t'}$. The optimum is achieved by $\alpha_t^\star = \lambda s_t/c_t$ with $\lambda = E_{\mathrm{target}}/\sum_t s_t^2/c_t$. Equality $\rho = 1$ holds iff $s_t/c_t$ is constant in $t$.

\begin{proof}
The proof has three parts: (i) derive the optimum location via KKT, (ii) compute $E_{\mathrm{unif}}$ for the constant-$\alpha$ schedule, and (iii) show the resulting ratio equals $1 + \mathrm{CV}_c^2(s/c)$.

\medskip
\noindent\textbf{Part 1: optimum location.}
The Lagrangian for \eqref{eq:budget-app} is
\[
\mathcal{L} \;=\; \tfrac{1}{2}\sum_t c_t \alpha_t^2 \;-\; \mu\!\left(\sum_t \alpha_t s_t - E_{\mathrm{target}}\right) \;-\; \sum_t \nu_t \alpha_t,
\]
with $\mu \geq 0$ and $\nu_t \geq 0$. Stationarity gives $c_t \alpha_t = \mu s_t + \nu_t$, i.e., $\alpha_t = (\mu s_t + \nu_t)/c_t$.

For $t$ with $s_t > 0$: if $\alpha_t = 0$ then $\nu_t = -\mu s_t \leq 0$, which combined with $\nu_t \geq 0$ forces $\mu s_t = 0$. Since $s_t > 0$ this requires $\mu = 0$, but then all $\alpha_t = 0$, violating $\sum_t \alpha_t s_t \geq E_{\mathrm{target}} > 0$. Therefore $\alpha_t > 0$ on $\{t : s_t > 0\}$, complementary slackness gives $\nu_t = 0$, and $\alpha_t^\star = \mu s_t/c_t$. For $t$ with $s_t = 0$, $\alpha_t^\star = 0$ is consistent with all KKT conditions and contributes nothing to either objective or constraint.

The control constraint binds at the optimum (else scaling all $\alpha_t$ down by $1-\epsilon$ would strictly decrease the objective while remaining feasible), so
\[
\sum_t \mu \frac{s_t^2}{c_t} \;=\; E_{\mathrm{target}} \quad\Longrightarrow\quad \mu^\star \;=\; \frac{E_{\mathrm{target}}}{\sum_t s_t^2/c_t}.
\]
Setting $\lambda := \mu^\star$ recovers the stated $\alpha_t^\star = \lambda s_t/c_t$. The optimal cost is
\[
B^\star \;=\; \tfrac{1}{2}\sum_t c_t (\alpha_t^\star)^2 \;=\; \tfrac{1}{2}\lambda^2 \sum_t \frac{s_t^2}{c_t} \;=\; \frac{E_{\mathrm{target}}^2}{2\sum_t s_t^2/c_t},
\]
and the maximum shift at any budget $B$ is $E^\star = \sqrt{2B \cdot \sum_t s_t^2/c_t}$. Strict convexity of the objective (since $c_t > 0$) gives uniqueness.

\medskip
\noindent\textbf{Part 2: uniform schedule shift.}
Fix the uniform schedule $\alpha_t = \alpha$ for all $t$. The budget constraint becomes
\[
\tfrac{1}{2}\sum_t c_t \alpha^2 \;=\; B \quad\Longrightarrow\quad \alpha \;=\; \sqrt{\frac{2B}{\sum_t c_t}}.
\]
The corresponding attribute shift is
\[
E_{\mathrm{unif}} \;=\; \sum_t \alpha s_t \;=\; \alpha \sum_t s_t \;=\; \sqrt{\frac{2B}{\sum_t c_t}} \cdot \sum_t s_t.
\]

\medskip
\noindent\textbf{Part 3: Adaptive-vs-uniform efficiency ratio and CV identity.}
Taking the ratio of $E^\star$ and $E_{\mathrm{unif}}$ at matched budget $B$,
\[
\rho^2 \;=\; \frac{(E^\star)^2}{(E_{\mathrm{unif}})^2}
\;=\; \frac{2B \cdot \sum_t s_t^2/c_t}{(2B/\sum_t c_t) \cdot (\sum_t s_t)^2}
\;=\; \frac{(\sum_t s_t^2/c_t)(\sum_t c_t)}{(\sum_t s_t)^2}.
\]
This proves the first equality in \eqref{eq:eff-ratio-app}.

For the CV form, define the cost-weighted probability measure $\pi_t = c_t/C$ where $C = \sum_{t'} c_{t'}$, and write $X_t = s_t/c_t$. Under $\pi$,
\[
\mathbb{E}_\pi[X] \;=\; \sum_t \pi_t X_t \;=\; \frac{1}{C}\sum_t s_t, \qquad \mathbb{E}_\pi[X^2] \;=\; \sum_t \pi_t X_t^2 \;=\; \frac{1}{C}\sum_t \frac{s_t^2}{c_t}.
\]
Substituting these into the ratio derived above,
\[
\rho^2 \;=\; \frac{(\sum_t s_t^2/c_t)(\sum_t c_t)}{(\sum_t s_t)^2}
\;=\; \frac{C \cdot \mathbb{E}_\pi[X^2] \cdot C}{(C \cdot \mathbb{E}_\pi[X])^2}
\;=\; \frac{\mathbb{E}_\pi[X^2]}{(\mathbb{E}_\pi[X])^2}.
\]
By the elementary identity $\mathbb{E}[X^2]/\mathbb{E}[X]^2 = 1 + \mathrm{Var}(X)/\mathbb{E}[X]^2 = 1 + \mathrm{CV}^2(X)$,
\[
\rho^2 \;=\; 1 + \mathrm{CV}_c^2(s/c).
\]

\medskip
\noindent\textbf{Equality condition.}
$\rho = 1$ iff $\mathrm{CV}_c(s/c) = 0$ iff $s_t/c_t$ is constant on $\{t : c_t > 0\}$. Equivalently, by Cauchy–Schwarz applied to $\sqrt{c_t}$ and $s_t/\sqrt{c_t}$,
\[
\Big(\sum_t s_t\Big)^2 \;=\; \Big(\sum_t \sqrt{c_t} \cdot \frac{s_t}{\sqrt{c_t}}\Big)^2 \;\leq\; \Big(\sum_t c_t\Big)\Big(\sum_t \frac{s_t^2}{c_t}\Big),
\]
with equality iff $\sqrt{c_t} \propto s_t/\sqrt{c_t}$, i.e., $s_t \propto c_t$. This recovers the same condition.
\end{proof}

\begin{corollary}[Specialization under slowly-varying cost]
\label{cor:constant-c}
When $c_t \approx c$ for all $t$, the efficiency ratio simplifies to
\[
\rho^2 \;\approx\; \frac{T \sum_t s_t^2}{(\sum_t s_t)^2} \;=\; 1 + \mathrm{CV}^2(s),
\]
the squared coefficient of variation of $s_t$ under the uniform measure. The optimum is $\alpha_t^\star \propto s_t$ and the achievable shift at fixed budget is governed by the dispersion of $s_t$ alone. This is the regime in which the proxy of Appendix~\ref{sec:proxy-app} approximates the optimum most directly: when $c_t$ is approximately constant, the proxy's tightness reduces to the alignment between activation growth and $s_t$.
\end{corollary}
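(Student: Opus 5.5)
The statement to prove is Corollary~\ref{cor:constant-c}. I would obtain it directly from Theorem~\ref{thm:eff-ratio} by substituting $c_t \equiv c$ into the closed form \eqref{eq:eff-ratio-app}. The step that needs care is interpreting ``$\approx$'' when $c_t$ is only approximately constant.

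First, the exact constant-cost case. Setting $c_t = c$ for all $t$ gives $\sum_t c_t = Tc$ and $\sum_t s_t^2/c_t = c^{-1}\sum_t s_t^2$. The factors of $c$ cancel, so
\[
\rho^2 = \frac{T\sum_t s_t^2}{(\sum_t s_t)^2}.
\]
Under constant cost the measure $\pi_t = c_t/C$ is uniform, $\pi_t = 1/T$. The random variable $s_t/c_t$ is $s_t/c$, and a constant rescaling leaves the coefficient of variation unchanged, so $\mathrm{CV}_c(s/c) = \mathrm{CV}(s)$ under the uniform measure. The same identity $\mathbb{E}[X^2]/\mathbb{E}[X]^2 = 1 + \mathrm{CV}^2$ used in Part~3 of the theorem's proof then gives the second equality. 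The optimal schedule $\alpha_t^\star = \lambda s_t/c$ is proportional to $s_t$. The optimal shift $E^\star = \sqrt{2B\sum_t s_t^2/c}$ depends on the schedule only through $\sum_t s_t^2$, and relative to uniform only through the dispersion of $s_t$.

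Second, the approximate case, which I expect to be the main obstacle, since ``$\approx$'' must be given a meaning. I would write $c_t = c(1+\epsilon_t)$ with $\max_t|\epsilon_t| \le \epsilon < 1$. Then
\[
\sum_t c_t \in [cT(1-\epsilon),\, cT(1+\epsilon)], \qquad \sum_t s_t^2/c_t \in \Big[\tfrac{\sum_t s_t^2}{c(1+\epsilon)},\, \tfrac{\sum_t s_t^2}{c(1-\epsilon)}\Big].
\]
It follows that $\rho^2$ equals the constant-cost expression up to a multiplicative factor in $[(1-\epsilon)/(1+\epsilon),\, (1+\epsilon)/(1-\epsilon)] = 1 + O(\epsilon)$. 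The optimum $\alpha_t^\star = \lambda s_t/c_t$ likewise differs from $\lambda s_t/c$ by a factor $1+O(\epsilon)$ per step. This makes ``slowly varying'' precise as a small relative deviation; no ordering in $t$ is needed.

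Finally, the remark connecting this to the proxy of Appendix~\ref{sec:proxy-app} is interpretive rather than a claim requiring proof. With $c_t$ constant, the target $\alpha_t^\star \propto s_t$ no longer involves $c_t$. Condition~(ii) of the proxy fidelity discussion is then satisfied, and only condition~(i) remains: alignment of activation growth with $s_t$. I would note this briefly after the computation.
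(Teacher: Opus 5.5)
Your proposal is correct and matches the paper, which gives no separate proof and treats the corollary as an immediate specialization of Theorem~\ref{thm:eff-ratio}: substituting $c_t \equiv c$ makes the cost-weighted measure uniform and cancels the factors of $c$, exactly as in your first step. Your $\epsilon$-perturbation bound, which places $\rho^2$ within a factor in $[(1-\epsilon)/(1+\epsilon),\,(1+\epsilon)/(1-\epsilon)]$ of the constant-cost value, is a valid extra step that makes precise the ``$\approx$'' the paper leaves informal.
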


\subsubsection{Suboptimality of any feasible schedule}
\label{app:efficiency}

Theorem~\ref{thm:eff-ratio} compares optimal to uniform; the same Cauchy–Schwarz machinery characterizes the suboptimality of any feasible schedule, including the proxy used in our experiments.

\begin{corollary}[Schedule efficiency]
\label{cor:efficiency}
For any schedule $\{\alpha_t \geq 0\}$ that exhausts the budget ($\tfrac{1}{2}\sum_t c_t \alpha_t^2 = B$), the achieved attribute shift satisfies
\[
\frac{E}{E^\star} \;=\; \cos\theta, \qquad \theta \;=\; \angle\!\Big(\big(\alpha_t \sqrt{c_t}\big)_t,\; \big(s_t/\sqrt{c_t}\big)_t\Big),
\]
with equality $E = E^\star$ iff $\alpha_t \propto s_t/c_t$. Equivalently, $B/B^\star = 1/\cos^2\theta$ at matched control.
\end{corollary}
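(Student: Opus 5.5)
The plan is to work in the reweighted coordinates already used in the Cauchy–Schwarz step of Theorem~\ref{thm:eff-ratio}. Define vectors $u, v \in \mathbb{R}^T$ by $u_t := \alpha_t\sqrt{c_t}$ and $v_t := s_t/\sqrt{c_t}$; this is well defined because $c_t>0$. The point of this change of variables is that both quantities of interest become Euclidean objects. The budget becomes $\tfrac12\|u\|^2 = B$, so every budget-exhausting schedule satisfies $\|u\| = \sqrt{2B}$. The achieved shift becomes an inner product, $E = \sum_t \alpha_t s_t = \langle u, v\rangle$.

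Next I will invoke Part~1 of the proof of Theorem~\ref{thm:eff-ratio}, which gives $E^\star = \sqrt{2B\sum_t s_t^2/c_t} = \|u\|\,\|v\|$. Dividing then yields $E/E^\star = \langle u,v\rangle/(\|u\|\|v\|) = \cos\theta$, which is exactly the stated angle. Since $\alpha_t, s_t \ge 0$, both vectors lie in the nonnegative orthant. Hence $\cos\theta \in [0,1]$ and $\theta$ is a genuine angle in $[0,\pi/2]$. The equality case is the equality case of Cauchy–Schwarz: $\cos\theta = 1$ iff $u = \kappa v$ with $\kappa \ge 0$, i.e.\ $\alpha_t\sqrt{c_t} = \kappa s_t/\sqrt{c_t}$, i.e.\ $\alpha_t \propto s_t/c_t$. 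This matches the KKT optimum $\alpha_t^\star = \lambda s_t/c_t$.

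For the matched-control statement I will rescale $\alpha$ by a factor $\gamma > 0$. This leaves $\theta$ unchanged, multiplies $E$ by $\gamma$, and multiplies the cost by $\gamma^2$. The shift at budget $B$ is $\cos\theta\sqrt{2B}\,\|v\|$. Requiring this to equal $E_{\mathrm{target}} = \sqrt{2B^\star}\,\|v\|$ gives $B = B^\star/\cos^2\theta$, using the expression for $B^\star$ from Part~1.

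I expect the only delicate points to be degenerate cases, since the rest is a one-line identity. If $\sum_t s_t = 0$ then $v=0$ and the angle is undefined; Assumption~4 rules this out. If $\langle u,v\rangle = 0$, the schedule reaches no target at all, so the matched-control identity should be read as $B = \infty$. I will state both caveats explicitly.
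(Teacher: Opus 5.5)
Your proposal is correct and follows the paper's proof essentially line for line. It uses the same substitution $u_t=\alpha_t\sqrt{c_t}$, $v_t=s_t/\sqrt{c_t}$, takes $E^\star=\|u\|\,\|v\|$ from Part~1 of Theorem~\ref{thm:eff-ratio}, applies Cauchy--Schwarz for the cosine identity and its equality case, and reaches $B/B^\star=1/\cos^2\theta$ by the same matched-control computation. Your explicit treatment of the degenerate cases is a small addition the paper leaves implicit: $v=0$ is excluded by Assumption~4, and $\langle u,v\rangle=0$ means no finite budget reaches the target.
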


\begin{proof}
Set $u_t = \alpha_t\sqrt{c_t}$ and $v_t = s_t/\sqrt{c_t}$. The budget constraint is $\|u\|^2 = 2B$ and the achieved shift is $E = \langle u, v\rangle$. From Part 1 of the proof of Theorem~\ref{thm:eff-ratio}, $\|v\|^2 = \sum_t s_t^2/c_t$, and the maximum shift at budget $B$ is $E^\star = \sqrt{2B \cdot \sum_t s_t^2/c_t} = \|u\| \cdot \|v\|$. By Cauchy–Schwarz,
\[
E \;=\; \langle u, v\rangle \;\leq\; \|u\|\,\|v\| \;=\; E^\star, \qquad \frac{E}{E^\star} \;=\; \frac{\langle u, v\rangle}{\|u\|\,\|v\|} \;=\; \cos\theta,
\]
with equality iff $u \propto v$, i.e., $\alpha_t \propto s_t/c_t$.

For the cost-ratio form: at matched control level $\bar E$, suppose the heuristic schedule achieves $E = \bar E$ at cost $B$, while the optimal schedule achieves $\bar E$ at cost $B^\star$. From the $E^\star$ formula, $E^\star(B) = \sqrt{2B}\,\|v\|$ and $E^\star(B^\star) = \sqrt{2B^\star}\,\|v\| = \bar E$, so $B^\star = \bar E^2/(2\|v\|^2)$. From the cosine identity, $\bar E = E^\star(B)\cos\theta = \sqrt{2B}\,\|v\|\cos\theta$, so $B = \bar E^2/(2\|v\|^2 \cos^2\theta)$. Dividing,
\[
\frac{B}{B^\star} \;=\; \frac{1}{\cos^2\theta}.
\]
\end{proof}

\begin{corollary}[Cost of using a heuristic schedule shape]
\label{cor:heuristic}
For any schedule $\alpha_t = \kappa w_t$ with $w_t \geq 0$ and $\kappa$ chosen to exhaust the budget,
\[
\frac{E}{E^\star} \;=\; \frac{\sum_t w_t s_t}{\sqrt{\sum_t w_t^2 c_t}\cdot\sqrt{\sum_t s_t^2/c_t}}.
\]
The scalar $\kappa$ cancels: efficiency depends only on the \emph{shape} of $w_t$, not its magnitude.
\end{corollary}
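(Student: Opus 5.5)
The plan is to reduce Corollary~\ref{cor:heuristic} to Corollary~\ref{cor:efficiency}, which already gives $E/E^\star=\cos\theta$ for any budget-exhausting schedule. I will substitute $\alpha_t=\kappa w_t$ into the vectors $u_t=\alpha_t\sqrt{c_t}=\kappa w_t\sqrt{c_t}$ and $v_t=s_t/\sqrt{c_t}$ used there. Then I will check three quantities: $\langle u,v\rangle=\kappa\sum_t w_t s_t$, $\|u\|=\kappa\sqrt{\sum_t w_t^2 c_t}$, and $\|v\|=\sqrt{\sum_t s_t^2/c_t}$. Forming $\cos\theta=\langle u,v\rangle/(\|u\|\|v\|)$, the factor $\kappa>0$ appears linearly in both numerator and denominator and cancels. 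This yields the displayed formula and the shape-only claim.

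Next, I will make the choice of $\kappa$ explicit to confirm that the hypothesis of Corollary~\ref{cor:efficiency} holds. The budget condition $\tfrac12\kappa^2\sum_t w_t^2c_t=B$ gives $\kappa=\sqrt{2B/\sum_t w_t^2 c_t}$. This is well defined and positive provided $w\not\equiv 0$, which holds because $c_t>0$. The nonnegativity $\alpha_t\ge 0$ is inherited from $w_t\ge0$, so the schedule is feasible in the sense of Eq.~\eqref{eq:budget-app}.

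As a direct sanity check, independent of the earlier corollary, I will compute the achieved shift $E=\sum_t\alpha_t s_t=\kappa\sum_t w_ts_t$ and divide it by $E^\star=\sqrt{2B\sum_t s_t^2/c_t}$ from Part~1 of the proof of Theorem~\ref{thm:eff-ratio}. The $\sqrt{2B}$ factors cancel, and this reproduces the same expression.

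No step is a real obstacle. The only points needing care are the degenerate cases: I will state that the formula requires $w\not\equiv0$ and $\sum_t s_t>0$ (Assumption~4), so that neither denominator vanishes.
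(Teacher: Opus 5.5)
Your proposal is correct and follows essentially the same route as the paper: substitute $\alpha_t=\kappa w_t$ into the vectors $u=\alpha\sqrt{c}$ and $v=s/\sqrt{c}$ of Corollary~\ref{cor:efficiency}, fix $\kappa=\sqrt{2B/\sum_t w_t^2c_t}$ from the budget, and observe that $\kappa$ cancels in $\cos\theta$. One wording fix: $w\not\equiv 0$ is a hypothesis you must impose, not a consequence of $c_t>0$; what $c_t>0$ gives you is that $\sum_t w_t^2c_t>0$ once $w\not\equiv 0$.
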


\begin{proof}
The budget constraint $\tfrac{1}{2}\sum_t c_t (\kappa w_t)^2 = B$ gives $\kappa = \sqrt{2B/\sum_t w_t^2 c_t}$. Substituting into Corollary~\ref{cor:efficiency}, the angle factor in the numerator becomes $\langle \kappa w \sqrt{c}, s/\sqrt{c}\rangle = \kappa\sum_t w_t s_t$, and $\|\kappa w \sqrt{c}\| = \kappa\sqrt{\sum_t w_t^2 c_t} = \sqrt{2B}$. The ratio $E/E^\star = \cos\theta$ from Corollary~\ref{cor:efficiency} expands as stated, and $\kappa$ cancels between numerator and denominator.
\end{proof}

This corollary licenses the practice in the experimental section of fixing the shape via $w_{\mathrm{dyn}}$ and sweeping $\alpha$ separately: only the shape determines the proxy's distance from optimal in the Fisher-weighted geometry.

\subsubsection{Operational form of $c_t$}
\label{app:ct-form}

The main text defines $c_t$ as the second derivative of the per-step output-distribution KL at $\alpha_t = 0$. We give an explicit chain-rule expression and verify positivity.

Let $u_t \in \mathbb{R}^d$ denote the steering direction in residual-stream space at step $t$: under composition rule \eqref{eq:steering-features}, $u_t = W_{F_a^\ell}\delta^{(a)}$. Let $f_t : \mathbb{R}^d \to \mathbb{R}^V$ be the map from the residual stream at the intervention layer to the output logits at step $t$, with Jacobian $J_t = \partial f_t/\partial x_t$. Let $F_t^{\mathrm{out}} \in \mathbb{R}^{V\times V}$ denote the Fisher information of the output categorical distribution with respect to its logits, evaluated at the unsteered logits $f_t(x_t)$.

\begin{lemma}
\label{lem:ct}
$c_t = u_t^\top J_t^\top F_t^{\mathrm{out}} J_t u_t$, with $c_t \geq 0$. Strict positivity holds whenever $J_t u_t$ is not in the null space of $F_t^{\mathrm{out}}$.
\end{lemma}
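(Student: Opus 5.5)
The approach is a second-order expansion of the per-step KL in the logit space, followed by a pullback through the map $f_t$. Fix $x_t$ and write $z_t = f_t(x_t)$ for the unsteered logits. The steered residual is $x_t' = x_t + \alpha_t u_t$ (by the affine-decoder identity $x'_t - x_t = W_{F_a^\ell}(\alpha_{\mathrm{eff}}\delta^{(a)})$ noted after \eqref{eq:steering-features}), so the steered logits are $\tilde z_t(\alpha_t) = f_t(x_t + \alpha_t u_t)$. In a masked DLM the reverse transition $\tilde p_\theta(x_{t-1}\mid x_t)$ is the softmax categorical at these logits, composed with the fixed unmasking rule. That rule does not depend on $\alpha_t$, so I will treat the per-step KL as the KL between the two categoricals $\mathrm{softmax}(\tilde z_t)$ and $\mathrm{softmax}(z_t)$. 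If several positions are involved, these KLs sum over positions.

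\textbf{Step 1: expansion in the logits.} Define $\phi(z') = \mathrm{KL}(\mathrm{softmax}(z')\,\|\,\mathrm{softmax}(z_t))$. This function is smooth and nonnegative, and it vanishes at $z'=z_t$, so its gradient there is zero. Its Hessian at $z_t$ is the Fisher information of the categorical with respect to its logits, $F_t^{\mathrm{out}} = \mathrm{diag}(\pi) - \pi\pi^\top$ with $\pi = \mathrm{softmax}(z_t)$. This is the standard fact that the KL Hessian at coincidence equals the Fisher metric; I would verify it directly by differentiating $\sum_k \pi'_k(\log\pi'_k - \log\pi_k)$ twice. It follows that $\phi(z_t+\eta) = \tfrac12 \eta^\top F_t^{\mathrm{out}}\eta + O(\|\eta\|^3)$.

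\textbf{Step 2: pullback through $f_t$.} By Assumption 2 and Taylor's theorem, $\tilde z_t(\alpha_t) - z_t = \alpha_t J_t u_t + O(\alpha_t^2)$. I substitute this into Step 1. The cross term between the $O(\alpha_t^2)$ remainder and the linear term enters only through the gradient of $\phi$ at $z_t$, which is zero. The quadratic form applied to the remainder is $O(\alpha_t^3)$. Hence
\[
\Delta q_t(\alpha_t) = \tfrac12 \alpha_t^2\, u_t^\top J_t^\top F_t^{\mathrm{out}} J_t u_t + o(\alpha_t^2).
\]
This is the step I expect to need the most care. The curvature of $f_t$ (its second derivative) has to drop out, and it does so only because the first-order term of $\phi$ vanishes at $z_t$. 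I would spell this out explicitly. Matching coefficients with the definition $\Delta q_t = \tfrac12 c_t\alpha_t^2 + o(\alpha_t^2)$ (Assumption 3) identifies $c_t = u_t^\top J_t^\top F_t^{\mathrm{out}} J_t u_t$. If $c_t$ is meant to include the outer expectation over $x_t$, the same identity holds pointwise and then averages.

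\textbf{Step 3: sign and strict positivity.} The matrix $F_t^{\mathrm{out}}$ is positive semidefinite, since $v^\top F v = \mathrm{Var}_\pi(v_k) \ge 0$. Therefore $c_t = (J_t u_t)^\top F_t^{\mathrm{out}} (J_t u_t) \ge 0$. For a PSD matrix, $w^\top F w = 0$ holds iff $F w = 0$, i.e.\ iff $w$ lies in the null space of $F$. So $c_t > 0$ exactly when $J_t u_t \notin \ker F_t^{\mathrm{out}}$. Concretely, with full-support $\pi$ this kernel is $\mathrm{span}(\mathbf{1})$: strict positivity fails only for a uniform logit shift, which leaves the softmax unchanged.
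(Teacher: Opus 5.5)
Your proposal is correct and follows essentially the same route as the paper. Both arguments use a second-order expansion of the per-step KL whose linear term vanishes at $\alpha_t=0$, identify its Hessian with the categorical Fisher $F_t^{\mathrm{out}}$ pulled back through $J_t$, and get the sign and strict-positivity claims from positive semidefiniteness (with kernel $\mathrm{span}(\mathbf{1})$ under full support). Your version is more explicit in two places: you show why the curvature of $f_t$ drops out, and you expand $\mathrm{KL}(\tilde p_\theta\|p_\theta)$, the direction actually used to define $\Delta q_t$, whereas the paper's proof writes the reverse direction, which has the same second-order term.
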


\begin{proof}
Writing the perturbed logits as $\tilde \theta(\alpha_t) = f_t(x_t + \alpha_t u_t)$ and Taylor-expanding the per-step KL,
\[
\mathrm{KL}\!\big(p_{f_t(x_t)} \,\big\|\, p_{\tilde\theta(\alpha_t)}\big) \;=\; \tfrac{1}{2}\alpha_t^2 \cdot (J_t u_t)^\top F_t^{\mathrm{out}} (J_t u_t) + O(\alpha_t^3).
\]
The linear term vanishes because $\alpha_t = 0$ globally minimizes the KL (non-negative everywhere, zero at the origin), forcing the gradient to vanish there. Matching against $\Delta q_t(\alpha_t) = \tfrac{1}{2}c_t \alpha_t^2 + o(\alpha_t^2)$ gives the stated form. Positivity follows from $F_t^{\mathrm{out}} \succeq 0$, with strict inequality when $J_t u_t$ has a non-uniform component (the null space of $F_t^{\mathrm{out}}$ for a categorical distribution is the all-ones direction in logit space, corresponding to logit shifts that leave the softmax invariant).
\end{proof}

\paragraph{Remark.}
$c_t$ is the pullback of the output Fisher through the layers above the intervention site. It is not the Fisher information of a parametric family in the standard sense; calling it ``Fisher information'' without specifying the family would be a misuse of the term. The operational characterization above is what enters the optimization; the chain-rule expression is included for completeness and is not used directly in our schedule. The global hypothesis $c_t > 0$ in Theorem~\ref{thm:eff-ratio} holds operationally because $J_t u_t$ is generically non-uniform (the steering direction $u_t = W_{F_a^\ell}\delta^{(a)}$ has support across many output dimensions, and the layers above the intervention site mix coordinates), placing $J_t u_t$ outside the all-ones null space of $F_t^{\mathrm{out}}$.

\subsubsection{Position-wise decomposition of the local gain}
\label{app:gain-decomp}

The local gain $s_t = \partial g/\partial \alpha_t$ admits a position-wise decomposition into direct and indirect pathways. The direct pathway provably localizes to masked positions; the indirect pathway is non-zero in general but is sub-leading in the regime supported by our measurements.

\paragraph{Setup.}
Fix a denoising trajectory and a step $t$. The steering operation applies a residual-stream perturbation $\alpha_t u_t$ at step $t$, where $u_t = W_{F_a^\ell}\delta^{(a)} \in \mathbb{R}^d$ is the same direction at every sequence position (the SAE injection is position-wise but uses the same $\delta^{(a)}$ across positions). Let $\mathcal{M}_t \subseteq \{1,\ldots,L\}$ denote the set of masked positions at step $t$; we condition on the masking schedule (which positions are unmasked when), so that the only stochasticity is in token content. For unmasked positions $p \notin \mathcal{M}_t$, the realized token $\mathrm{token}_p$ was determined at the step at which $p$ was unmasked, prior to step $t$, and is fixed conditional on the trajectory up to that step.

\begin{lemma}[Direct-effect localization]
\label{lem:gain-decomp}
Under the setup above, the local gain decomposes as
\[
s_t \;=\; \underbrace{\sum_{q}\frac{\partial g}{\partial\,\mathrm{token}_q}\cdot\frac{\partial\,\mathrm{token}_q}{\partial x_{t,q}}\cdot \frac{\partial x_{t,q}}{\partial \alpha_t}}_{\text{direct effect } s_t^{\mathrm{dir}}}
\;+\;
\underbrace{\sum_{q}\frac{\partial g}{\partial\,\mathrm{token}_q}\,\sum_{p \neq q}\frac{\partial\,\mathrm{token}_q}{\partial x_{t,p}}\cdot \frac{\partial x_{t,p}}{\partial \alpha_t}}_{\text{indirect effect } s_t^{\mathrm{ind}}},
\]
where the direct sum collects the $p=q$ diagonal of the position-by-position chain rule and the indirect sum collects the off-diagonal terms. The direct effect reduces to a sum over masked positions:
\[
s_t^{\mathrm{dir}} \;=\; \sum_{q\in\mathcal{M}_t}\frac{\partial g}{\partial\,\mathrm{token}_q}\cdot\frac{\partial\,\mathrm{token}_q}{\partial x_{t,q}}\cdot \frac{\partial x_{t,q}}{\partial \alpha_t}.
\]
\end{lemma}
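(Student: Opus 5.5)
The plan is a multivariate chain rule for $g$ as a function of the per-position residual perturbations, followed by a case split on whether each position is masked. Because the steering direction $u_t$ is injected at every position with the same coefficient $\alpha_t$, we have $\partial x_{t,p}/\partial \alpha_t = u_t$ for all $p$. The score $g$ depends on $\alpha_t$ only through the realized tokens $(\mathrm{token}_q)_q$, and each $\mathrm{token}_q$ depends on the whole perturbed residual array $(x_{t,p})_p$ through attention. With these two facts, the chain rule gives
\[
s_t=\sum_q \frac{\partial g}{\partial\,\mathrm{token}_q}\sum_p \frac{\partial\,\mathrm{token}_q}{\partial x_{t,p}}\cdot\frac{\partial x_{t,p}}{\partial \alpha_t}.
\]
For discrete tokens, $\partial\,\mathrm{token}_q$ is read as a derivative of the (relaxed) token distribution or expected classifier input, consistent with the smoothness in Assumption~2. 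Splitting the inner sum into $p=q$ and $p\neq q$ then gives the stated direct/indirect decomposition. This part is bookkeeping.

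For the localization of the direct effect, fix $q\notin\mathcal{M}_t$. By the setup, conditioning on the masking schedule means $\mathrm{token}_q$ was sampled at some step $t'>t$ in the reverse process, which precedes step $t$ in generation order. Under absorbing-state diffusion, unmasked tokens are carried over unchanged, so $\mathrm{token}_q$ is a function of the trajectory before step $t$ and does not depend on $x_{t,q}$. Hence $\partial\,\mathrm{token}_q/\partial x_{t,q}=0$, and the corresponding summand of $s_t^{\mathrm{dir}}$ vanishes. Only $q\in\mathcal{M}_t$ remains, which is the claimed formula. The same argument in fact kills the whole $q$-th outer summand, including its indirect terms, for unmasked $q$. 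I would note this explicitly: the indirect effect is likewise supported on masked targets $q$, though its sources $p$ may be unmasked.

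I expect the main obstacle to be justifying $\partial\,\mathrm{token}_q/\partial x_{t,q}=0$ rigorously. Three points need care. First, the carry-over property must hold for the sampler used. It holds for MDLM, LLaDA and DREAM with absorbing masks and no remasking. For SEDD's score-entropy sampler it requires assuming an absorbing graph, and if remasking is used the claim fails. Second, tokens are discrete, so the derivative has to be defined through the smooth surrogate from Assumption~2. Third, "unmasked at step $t$" must mean unmasked before the step-$t$ transition is computed. I would state these as standing conditions and invoke Assumption~1 and the conditioning on the masking schedule to make the argument go through. The bookkeeping is routine once these are fixed.
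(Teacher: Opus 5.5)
Your proposal is correct and follows the paper's proof. The paper likewise splits the chain rule into its $p=q$ diagonal and off-diagonal parts, and kills the diagonal term for $q\notin\mathcal{M}_t$ because $\mathrm{token}_q$ was fixed before step $t$, which gives $\partial\,\mathrm{token}_q/\partial x_{t,q}=0$. Your extra observation is also right: for unmasked $q$ the entire $q$-th outer summand, indirect terms included, vanishes, which matches the paper's later remark that the indirect pathway runs from unmasked sources $p$ into masked targets $q$ through attention.
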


\begin{proof}
For $q \notin \mathcal{M}_t$, the realized $\mathrm{token}_q$ was determined at the step at which $q$ was unmasked (prior to step $t$) and does not depend on $x_{t,q}$. By the chain rule, $\partial \mathrm{token}_q/\partial x_{t,q} = 0$, so the diagonal $p=q$ term vanishes. The direct-effect sum therefore restricts to $q \in \mathcal{M}_t$.
\end{proof}

\paragraph{Indirect pathway.}
The lemma controls only the direct effect. The indirect effect at unmasked position $p$ is non-zero in general: although $\mathrm{token}_p$ is fixed, the residual-stream activation $x_{t,p}$ enters the key/value projections used by self-attention at masked positions $q \in \mathcal{M}_{t'}$ for $t' \geq t$, and so can influence $\mathrm{token}_q$ at those positions. Whether the indirect path is sub-leading relative to the direct path is a regime-dependent question that we cannot settle in closed form.

We treat direct-path dominance as a modeling assumption supported by the masked-fraction analysis of Appendix~\ref{app:anticipatory_analysis}: across our four models, attribute-relevant features fire preferentially on masked positions throughout the early-to-middle trajectory, with masked-fraction values ranging from 0.55 to 0.80 for anticipatory attributes. This is consistent with — though does not prove — direct-path dominance, since features that fire on masked positions are positioned to exert direct influence on the tokens decoded there. The framework's predictive accuracy in Section~\ref{sec:experiments} (especially the LLaDA prediction $\rho \approx 1$ confirmed by Adaptive $\approx$ Uniform) provides post-hoc validation that the assumption is operationally adequate.

\subsubsection{Connecting the optimal schedule to the active-set proxy}
\label{sec:proxy-app}

Theorem~\ref{thm:eff-ratio} prescribes $\alpha_t^\star \propto s_t/c_t$, but neither $s_t$ nor $c_t$ is directly observable: $s_t$ requires gradient information about a target attribute classifier on the unrolled denoising trajectory, and $c_t$ requires the Jacobian-pullback of Lemma~\ref{lem:ct}. Our schedule $w_{\mathrm{dyn}}$ uses positive activation growth as a proxy. We justify the proxy through representational commitment under the TopK gate and characterize when it is tight.

\paragraph{Active-set transitions as a measure of representational commitment.}
The TopK gate produces, at each step $t$ and each input, a sparse active set $\mathcal{A}_t \subseteq \{1, \ldots, d_{\mathrm{SAE}}\}$ of features whose pre-gate scores exceed the top-$k$ threshold. The steering operation itself does \emph{not} depend on $\mathcal{A}_t$: the residual-stream perturbation $\alpha\, W_{F_a^\ell}\delta^{(a)}$ is added directly to the post-encode hidden state regardless of which features are currently active. The proxy is therefore not a statement about gradient localization through the gate, but about the model's \emph{use} of attribute-relevant directions. When attribute features $j \in F_a^\ell$ enter $\mathcal{A}_t$ on average across the contrastive corpus, the model is, on average, beginning to compute on those directions; subsequent layers read out from the residual stream, and a perturbation along directions the model is starting to use propagates through more of the downstream computation than a perturbation along directions still effectively unused.

This rationale is intuitive rather than rigorous: we cannot prove that residual-space perturbations along high-growth directions produce larger $\partial g/\partial \alpha_t$ than perturbations along low-growth directions, only that this is plausible given how the model's forward pass distributes information. The framework's predictive accuracy in Section~\ref{sec:experiments} provides post-hoc validation.

\paragraph{Why activation growth, not activation level.}
A natural alternative would be to weight the schedule by raw activation level $A_t^{(a,\ell)} = \sum_{j \in F_a^\ell} \bar h_j^{(t)}$. This over-weights steps where attribute features are already stably active across the corpus: the model has already committed to those features, and perturbing further along directions that are already saturated may have diminishing marginal effect. Activation growth $[\Delta A_t^{(a,\ell)}]_+$ targets steps where features are crossing into the active set on the corpus, where the model's representational state for that attribute is most malleable. The asymmetric clamping at zero is necessary: steps with negative growth correspond to features dropping out of the active set, which we do not want to amplify steering on.

\paragraph{Proxy fidelity and matched-shape efficiency.}
By Corollary~\ref{cor:heuristic}, the efficiency of the proxy schedule relative to optimal is
\[
\frac{E_{\mathrm{proxy}}}{E^\star} \;=\; \frac{\sum_t w_{\mathrm{dyn}}(t) s_t}{\sqrt{\sum_t w_{\mathrm{dyn}}(t)^2 c_t}\cdot\sqrt{\sum_t s_t^2/c_t}} \;=\; \cos\theta_{\mathrm{proxy}},
\]
where $\theta_{\mathrm{proxy}}$ is the angle between $w_{\mathrm{dyn}}(t)\sqrt{c_t}$ and $s_t/\sqrt{c_t}$. The proxy is exactly optimal when (i) active-set growth is proportional to $s_t/c_t$ and (ii) $c_t$ varies slowly relative to $s_t$, so the geometry collapses to the constant-cost case of Corollary~\ref{cor:constant-c}.

\paragraph{When the proxy is loose, and what we observe.}
Two regimes follow from Theorem~\ref{thm:eff-ratio} and Corollary~\ref{cor:heuristic}:
\begin{itemize}
\item \emph{Sharp regime ($\mathrm{CV}_c(s/c)$ large).} The optimal schedule concentrates budget on a few steps, and any schedule whose shape matches that concentration recovers most of $E^\star$. On MDLM, where block 0 captures essentially all of topic emergence, the activation-growth profile is itself sharp, and Adaptive substantially outperforms Uniform at matched control — consistent with the proxy aligning with the sensitivity profile.
\item \emph{Flat regime ($\mathrm{CV}_c(s/c) \approx 0$).} By Theorem~\ref{thm:eff-ratio}, $\rho \approx 1$ and no schedule can improve over uniform regardless of the algorithm. On LLaDA, the activation-growth profile is approximately uniform across blocks, and Adaptive performs comparably to Uniform — a confirmation of the framework rather than of the proxy's tightness.
\end{itemize}
The proxy can therefore be loose without being harmful: in the flat regime, any schedule does about equally well; in the sharp regime, the proxy tracks the optimal shape closely enough to recover most of the available efficiency ratio. The regime where the proxy would actively underperform Uniform is one where activation growth and sensitivity are anti-correlated. We do not have direct evidence that this regime is avoided in our four models — our argument that Adaptive does not underperform Uniform is empirical (Table~\ref{tab:steering_results_v2}) rather than structural.

\subsection{Decoder-space interference: bound and analysis}
\label{app:prop2}

The main text states the decoder-Gram bound \eqref{eq:gram-bound} as a worst-case guarantee on residual-space coupling under disjoint feature sets. The empirical coupling we measure (Appendix~\ref{app:gram-values}) is roughly an order of magnitude tighter than the bound; we therefore present the bound as a lemma, with the empirical observation as the operative claim, and analyze the gap between them.

\paragraph{Assumption 1 (Disjoint feature sets).}
For all $i \neq j$, the feature sets satisfy
\[
F_{a_i}^\ell \cap F_{a_j}^\ell = \emptyset.
\]
The small-overlap correction for the empirically near-disjoint regime,
where $|F_{a_i}^\ell \cap F_{a_j}^\ell| \leq 2$, is discussed in
App.~\S\ref{app:overlap}.

\paragraph{Assumption 2 (Positive-definite intra-attribute Gram matrices).}
For each attribute $a_i$, the intra-attribute Gram matrix
\[
G^{(ii)}
=
W_{F_{a_i}^\ell}^\top W_{F_{a_i}^\ell}
\]
is positive definite on its support. Equivalently, the decoder columns indexed by $F_{a_i}^\ell$ are linearly independent. This condition generically holds when $|F_{a_i}^\ell| \ll d$ and decoder columns are unit normalized.

\paragraph{Assumption 3 (Nonzero perturbations).}
The perturbations satisfy
\[
\Delta x_t^{(i)} \neq 0,
\qquad
\Delta x_t^{(j)} \neq 0,
\]
ensuring that the cosine term in Eq.~\eqref{eq:gram-bound} is well defined. This condition holds whenever $\alpha_{\mathrm{eff}}(a_i, t, \ell) > 0$, $\delta^{(a_i)}$ has a nonzero component, and the ReLU clamp does not zero the entire perturbation over $F_{a_i}^\ell$.

Assumption~1 provides the structural precondition of the proposition. Assumption~2 guarantees that the denominator of Eq.~\eqref{eq:gram-bound} is strictly positive.
Finally, Assumption~3 ensures that the cosine similarity is well defined.

\subsubsection{Subspace containment and the role of ReLU}
\label{app:subspace}

The bound is stated for post-ReLU residual perturbations. We first establish that disjointness places each attribute's perturbation in its own decoder subspace and that ReLU preserves this structure.

\begin{lemma}[Subspace containment under disjointness]
\label{lem:subspace}
Let $\Delta x_t^{(i)} = \tilde x_t - \tilde x_t^{[\setminus i]}$ denote the residual-space contribution of attribute $i$ under composition rule \eqref{eq:additive_feature_composition}, where $\tilde x_t^{[\setminus i]}$ is the steered reconstruction with attribute $i$ omitted. If $F^\ell_{a_i}\cap F^\ell_{a_j} = \emptyset$ for all $j \neq i$, then $\Delta x_t^{(i)} \in \mathrm{span}(W_{F^\ell_{a_i}})$, and the ReLU clamp in \eqref{eq:additive_feature_composition} alters the magnitudes within $\Delta h_t^{(i)} := \tilde h_t - \tilde h_t^{[\setminus i]}$ but preserves its support on $F^\ell_{a_i}$.
\end{lemma}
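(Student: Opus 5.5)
The plan is to compare the two feature-space codes coordinate by coordinate and then push the difference through the affine decoder. Write $\tilde h_t$ for the full steered code of \eqref{eq:additive_feature_composition} and $\tilde h_t^{[\setminus i]}$ for the code with attribute $i$ removed. The decoder is affine, $\mathrm{Dec}(h) = h W_{\mathrm{dec}} + \mathbf{b}_{\mathrm{dec}}$, so the bias cancels and
\[
\Delta x_t^{(i)} = \tilde x_t - \tilde x_t^{[\setminus i]} = \big(\tilde h_t - \tilde h_t^{[\setminus i]}\big) W_{\mathrm{dec}} = \Delta h_t^{(i)} W_{\mathrm{dec}},
\]
where $\Delta h_t^{(i)}$ is the coordinate-wise difference of the two codes. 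The claim then reduces to a statement about the support of $\Delta h_t^{(i)}$: if it vanishes outside $F_{a_i}^\ell$, then $\Delta x_t^{(i)} = \sum_{k \in F_{a_i}^\ell} (\Delta h_t^{(i)})_k\, \mathbf{w}_k^{\mathrm{dec}} \in \mathrm{span}(W_{F_{a_i}^\ell})$.

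For the support claim, I split coordinates $k$ into two cases according to whether $k \in F_{a_i}^\ell$. The pre-activation at $k$ is $h_{t,k} + \sum_{m:\, k \in F_{a_m}^\ell} \alpha_{\mathrm{eff}}(a_m,t,\ell)\,\delta_k^{(a_m)}$, and dropping attribute $i$ removes only the terms with $m = i$.

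\emph{Case $k \notin F_{a_i}^\ell$.} No term with $m = i$ appears in the sum, so the two pre-activations coincide. Since ReLU is applied coordinate-wise, the post-ReLU values also coincide, and $(\Delta h_t^{(i)})_k = 0$.

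\emph{Case $k \in F_{a_i}^\ell$.} Disjointness says $k \notin F_{a_j}^\ell$ for every $j \neq i$. The full pre-activation is therefore $h_{t,k} + \alpha_{\mathrm{eff}}(a_i,t,\ell)\,\delta_k^{(a_i)}$, and the ablated one is $h_{t,k}$. The difference is
\[
\mathrm{ReLU}\big(h_{t,k} + \alpha_{\mathrm{eff}}\,\delta_k^{(a_i)}\big) - \mathrm{ReLU}(h_{t,k}).
\]
This is in general not equal to $\alpha_{\mathrm{eff}}\,\delta_k^{(a_i)}$, which is the sense in which ReLU "alters magnitudes". It can also be zero, for instance when both arguments are negative, so the support of $\Delta h_t^{(i)}$ is contained in $F_{a_i}^\ell$ but need not equal it.

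Combining the two cases gives $\mathrm{supp}(\Delta h_t^{(i)}) \subseteq F_{a_i}^\ell$. With the decoder identity above, this yields $\Delta x_t^{(i)} \in \mathrm{span}(W_{F_{a_i}^\ell})$.

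The only subtle step, and the one I expect to be the main obstacle, is the word "preserves". The lemma should be read as support containment, not equality, and I would state explicitly that possible zeroing on $F_{a_i}^\ell$ is harmless for the span claim. I would also note in passing that the TopK encoder output $h_t$ is fixed and shared by both codes, so the gate introduces no cross-coordinate coupling; only the coordinate-wise ReLU acts after the shift. That observation is what makes the case split legitimate.
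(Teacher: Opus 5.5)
Your proof is correct and follows the paper's argument. Disjointness leaves at most one attribute's shift in each pre-activation, the coordinate-wise ReLU then forces $\Delta h_t^{(i)}$ to vanish off $F^\ell_{a_i}$, and the affine decoder (with the bias cancelling) sends it into $\mathrm{span}(W_{F^\ell_{a_i}})$; your reading of ``preserves its support'' as containment matches the paper's ``non-zero only for $j\in F^\ell_{a_i}$''.

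One small nit: the paper treats SAE codes as nonnegative, so your zeroing example with both ReLU arguments negative cannot occur. Zeroing still happens, e.g.\ when $h_{t,k}=0$ and $\alpha_{\mathrm{eff}}(a_i,t,\ell)\,\delta_k^{(a_i)}\le 0$, so your conclusion is unaffected.
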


\begin{proof}
Write the pre-ReLU composed feature update as $\hat h_{t,j} = h_{t,j} + \sum_{i:\,j\in F^\ell_{a_i}}\alpha_{\mathrm{eff}}(a_i, t, \ell)\delta_j^{(a_i)}$. Under disjointness, the inner sum has at most one term: for any $j$, either $j \in F^\ell_{a_i}$ for exactly one $i$ (call it $i(j)$), or $j$ lies in no attribute set. Consequently $\hat h_{t,j} - h_{t,j}$ depends on attribute $i(j)$ alone.

The ReLU clamp $\tilde h_{t,j} = \mathrm{ReLU}(\hat h_{t,j})$ acts coordinate-wise; its clipping decision at $j$ depends only on $\hat h_{t,j}$, which depends on only one attribute. Therefore $\tilde h_{t,j} - \tilde h_{t,j}^{[\setminus i]}$ is non-zero only for $j \in F^\ell_{a_i}$, regardless of whether ReLU clipped the value. Applying $W_{\mathrm{dec}}$,
\[
\Delta x_t^{(i)} \;=\; W_{\mathrm{dec}}\Delta h_t^{(i)} \;=\; W_{F^\ell_{a_i}}\big(\Delta h_t^{(i)}\big)\big|_{F^\ell_{a_i}} \;\in\; \mathrm{span}\big(W_{F^\ell_{a_i}}\big).
\]
ReLU may shrink magnitudes on $F^\ell_{a_i}$ but cannot move mass off the support.
\end{proof}

\paragraph{What disjointness buys.}
A weaker version of subspace containment — that disjoint feature \emph{indices} produce non-overlapping coordinate updates — follows trivially from the structure of \eqref{eq:additive_feature_composition}. Lemma~\ref{lem:subspace} is stronger: it controls the post-ReLU residual perturbation, not just the feature update, by exploiting that ReLU is coordinate-wise and that disjointness localizes its decision per attribute. This is what makes the Gram bound apply to the actual perturbations rather than only to a first-order linearization.

\subsubsection{Decoder-Gram bound}
\label{app:gram-proof}

\begin{lemma}[Decoder-Gram interference bound]
\label{lem:gram}
Let $G = W_{\mathrm{dec}}^\top W_{\mathrm{dec}}$ and let $G^{(ij)}$ denote its submatrix indexed by $F^\ell_{a_i}\times F^\ell_{a_j}$. If $F^\ell_{a_i}\cap F^\ell_{a_j} = \emptyset$, then for any post-ReLU residual perturbations $\Delta x_t^{(i)}, \Delta x_t^{(j)}$ produced by \eqref{eq:additive_feature_composition},
\[
\big|\cos\!\big(\Delta x_t^{(i)},\,\Delta x_t^{(j)}\big)\big| \;\leq\; \frac{\sigma_{\max}(G^{(ij)})}{\sqrt{\sigma_{\min}(G^{(ii)})\,\sigma_{\min}(G^{(jj)})}}.
\]
\end{lemma}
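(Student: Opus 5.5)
The plan is to first use Lemma~\ref{lem:subspace} to put each perturbation in its own decoder subspace, and then bound the cosine by a Rayleigh-quotient argument. By Lemma~\ref{lem:subspace}, disjointness of $F^\ell_{a_i}$ and $F^\ell_{a_j}$ gives coefficient vectors $u := (\Delta h_t^{(i)})|_{F^\ell_{a_i}}$ and $v := (\Delta h_t^{(j)})|_{F^\ell_{a_j}}$ with
\[
\Delta x_t^{(i)} = W_{F^\ell_{a_i}} u, \qquad \Delta x_t^{(j)} = W_{F^\ell_{a_j}} v .
\]
This holds for the actual post-ReLU perturbations, not a linearization, which is the reason that lemma precedes this one. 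By Assumption~3 both vectors are nonzero, so $u \neq 0$ and $v \neq 0$.

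Next I express the inner product and the norms through the Gram blocks. The inner product is $\langle \Delta x_t^{(i)}, \Delta x_t^{(j)}\rangle = u^\top W_{F^\ell_{a_i}}^\top W_{F^\ell_{a_j}} v = u^\top G^{(ij)} v$. The squared norms are $\|\Delta x_t^{(i)}\|^2 = u^\top G^{(ii)} u$ and $\|\Delta x_t^{(j)}\|^2 = v^\top G^{(jj)} v$.

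For the numerator, the variational characterization of the operator norm gives $|u^\top G^{(ij)} v| \le \sigma_{\max}(G^{(ij)})\,\|u\|\,\|v\|$. For the denominator, $G^{(ii)}$ is symmetric and positive definite by Assumption~2, so its smallest singular value equals its smallest eigenvalue. The Rayleigh quotient then gives $u^\top G^{(ii)} u \ge \sigma_{\min}(G^{(ii)})\|u\|^2 > 0$, and likewise $v^\top G^{(jj)} v \ge \sigma_{\min}(G^{(jj)})\|v\|^2 > 0$. Dividing,
\[
|\cos(\Delta x_t^{(i)},\Delta x_t^{(j)})| \;\le\; \frac{\sigma_{\max}(G^{(ij)})\,\|u\|\,\|v\|}{\sqrt{\sigma_{\min}(G^{(ii)})\,\sigma_{\min}(G^{(jj)})}\,\|u\|\,\|v\|},
\]
and the factors $\|u\|\,\|v\|$ cancel, which is the claim.

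The main obstacle is the first step: being sure the post-ReLU contributions really factor through their own column blocks, so that no cross-term between attributes arises in the coefficient vectors. Lemma~\ref{lem:subspace} handles this, since disjointness makes each ReLU decision depend on a single attribute. Two smaller points need care. First, I must state that $\sigma_{\min}$ of the positive-definite block $G^{(ii)}$ is its minimal eigenvalue, so the Rayleigh bound applies. Second, I will remark that the bound is worst case, because $u$ and $v$ need not align with the extremal singular vectors; this explains the empirical gap discussed in \S\ref{app:gram-values}.
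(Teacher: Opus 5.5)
Your proposal is correct and follows the paper's proof essentially step for step. You invoke Lemma~\ref{lem:subspace} to write each post-ReLU perturbation as a decoder block applied to a coefficient vector, write the inner product as $u^\top G^{(ij)} v$ and bound it by $\sigma_{\max}(G^{(ij)})\|u\|\|v\|$, lower-bound each squared norm by $\sigma_{\min}$ of the diagonal Gram block times the squared coefficient norm, and cancel; your explicit notes that $\sigma_{\min}$ of the symmetric positive-definite block is its smallest eigenvalue and that Assumption~3 forces $u,v\neq 0$ only make explicit what the paper leaves implicit.
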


\begin{proof}
By Lemma~\ref{lem:subspace}, $\Delta x_t^{(i)} = W_{F^\ell_{a_i}}\Delta h_t^{(i)}$ with $\Delta h_t^{(i)}$ supported on $F^\ell_{a_i}$, similarly for attribute $j$. Write $\delta^{(i)} = (\Delta h_t^{(i)})|_{F^\ell_{a_i}}$. Then
\[
\big\langle \Delta x_t^{(i)},\,\Delta x_t^{(j)}\big\rangle \;=\; \big\langle W_{F^\ell_{a_i}}\delta^{(i)},\,W_{F^\ell_{a_j}}\delta^{(j)}\big\rangle \;=\; \delta^{(i)\top} G^{(ij)}\delta^{(j)}.
\]
By the operator-norm definition of $\sigma_{\max}$,
\[
\big|\delta^{(i)\top} G^{(ij)}\delta^{(j)}\big| \;\leq\; \|\delta^{(i)}\|\cdot\|G^{(ij)}\delta^{(j)}\| \;\leq\; \sigma_{\max}(G^{(ij)})\,\|\delta^{(i)}\|\,\|\delta^{(j)}\|.
\]
For each attribute,
\[
\|\Delta x_t^{(i)}\|^2 \;=\; \delta^{(i)\top} G^{(ii)}\delta^{(i)} \;\geq\; \sigma_{\min}(G^{(ii)})\,\|\delta^{(i)}\|^2,
\]
where positive definiteness of $G^{(ii)}$ on the support of $\delta^{(i)}$ ensures $\sigma_{\min}(G^{(ii)}) > 0$. Combining,
\[
\big|\cos\!\big(\Delta x_t^{(i)},\,\Delta x_t^{(j)}\big)\big| \;=\; \frac{|\delta^{(i)\top} G^{(ij)}\delta^{(j)}|}{\|\Delta x_t^{(i)}\|\,\|\Delta x_t^{(j)}\|} \;\leq\; \frac{\sigma_{\max}(G^{(ij)})}{\sqrt{\sigma_{\min}(G^{(ii)})\,\sigma_{\min}(G^{(jj)})}}.
\]
The bound is independent of $\delta^{(i)}, \delta^{(j)}$, so it holds uniformly over post-ReLU perturbations and is computable a priori from the SAE decoder.
\end{proof}

\subsubsection{Why the bound is small in practice}
\label{app:gram-small}

Three structural properties of SAE-based steering favor a small bound.

\paragraph{Contrastive selection produces near-orthogonal feature sets.}
Features are selected by attribute-specific Cohen's $d$ ($|d_j| \geq 0.2$, $p < 0.01$ via Mann–Whitney). A feature with high $|d|$ for sentiment typically has low $|d|$ for topic and vice versa, because sentiment and topic separate text along weakly correlated semantic axes. Empirically, $|F^\ell_{a_i}\cap F^\ell_{a_j}| \leq 2$ for all model–attribute pairs in our experiments, so off-diagonal blocks $G^{(ij)}$ are populated almost entirely by inner products between decoder columns selected for different attributes.

\paragraph{Overcompleteness and unit-norm decoder columns suppress off-diagonal correlation.}
Our SAEs use expansion factors $4\times$ to $16\times$ ($K \in \{12{,}288, 14{,}336, 16{,}384\}$ vs.\ $d \in \{768, 3{,}584, 4{,}096\}$). Decoder columns are normalized to unit norm after each training step. Trained overcomplete dictionaries with $K \gg d$ are observed to exhibit approximate incoherence, with $|\langle w_j, w_k\rangle|$ small for typical $j\neq k$, in line with what \citet{donoho2003optimally} establish for optimal incoherent dictionaries (their result is for random/optimal dictionaries; for trained SAEs the same scaling is empirical, not guaranteed). The Frobenius bound
\[
\sigma_{\max}(G^{(ij)}) \;\leq\; \|G^{(ij)}\|_F \;\leq\; \sqrt{|F^\ell_{a_i}|\cdot|F^\ell_{a_j}|}\cdot O(1/\sqrt{d})
\]
is loose in the rank dimension but captures the qualitative scaling: the bound shrinks as $d$ grows. The diagonal blocks $G^{(ii)}$ have unit diagonal and small off-diagonal entries, so by Gershgorin's theorem $\sigma_{\min}(G^{(ii)}) \geq 1 - (|F^\ell_{a_i}|-1)\cdot O(1/\sqrt{d})$, bounded away from zero whenever $|F^\ell_{a_i}| \ll \sqrt{d}$.

\paragraph{Realized Gram values.}
Appendix~\ref{app:gram-values} reports the realized bound on Lemma~\ref{lem:gram} for our trained SAEs at the layers used for steering, with $|F^\ell_a| = 50$. The realized cosine coupling is at most 0.09 across disjoint feature sets (mean 0.04), an order of magnitude tighter than the worst-case bound. This is consistent with the empirical cross-attribute interference reported in Figure~\ref{fig:interference}.

\paragraph{Contrast with dense steering directions.}
Dense methods (PCA, linear probes, contrastive vectors) steer by adding a single direction $v_a \in \mathbb{R}^d$. For two attributes the cross-cosine is just $\cos(v_{a_1}, v_{a_2})$ — a single number, with no structural mechanism to keep it small. We measure values in $[0.1, 0.4]$ for sentiment vs.\ topic across our models, large enough to produce the cross-attribute interference visible in Figure~\ref{fig:interference} for the dense baselines. SAE steering replaces this single coupling with the sparser $\sigma_{\max}(G^{(ij)})$ regime above.

\subsubsection{Small-overlap correction}
\label{app:overlap}

Strict disjointness $F^\ell_{a_i}\cap F^\ell_{a_j} = \emptyset$ is required for Lemma~\ref{lem:subspace}. Contrastive selection produces near-disjointness, with $|F^\ell_{a_i}\cap F^\ell_{a_j}| \leq 2$ in our experiments. We characterize the resulting first-order correction.

Let $S_{ij} = F^\ell_{a_i}\cap F^\ell_{a_j}$. On $S_{ij}$, the pre-ReLU update at coordinate $j \in S_{ij}$ depends on both attributes: $\hat h_{t,j} = h_{t,j} + \alpha_{\mathrm{eff}}(a_i, t, \ell)\delta_j^{(a_i)} + \alpha_{\mathrm{eff}}(a_j, t, \ell)\delta_j^{(a_j)}$. ReLU's clipping decision at $j$ depends on the sum, introducing genuine cross-attribute coupling. Decompose
\[
\Delta x_t^{(i)} \;=\; \Delta x_t^{(i,\mathrm{disjoint})} + \Delta x_t^{(i,\mathrm{overlap})},
\]
where the first term is the contribution from $F^\ell_{a_i}\setminus S_{ij}$ (covered by Lemma~\ref{lem:subspace}) and the second is from $S_{ij}$. The Gram bound applies to the disjoint part. The overlap part contributes
\[
\big\|\Delta x_t^{(i,\mathrm{overlap})}\big\| \;\leq\; \sigma_{\max}(W_{S_{ij}})\cdot\big\|\Delta h_t^{(i)}\big|_{S_{ij}}\big\|,
\]
which is bounded by the same $O(1/\sqrt{d})$ scaling as the off-diagonal Gram entries times the small factor $|S_{ij}|/|F^\ell_{a_i}|$. In our experiments this fraction is at most $2/50 = 0.04$, and we verify in Figure~\ref{fig:interference} that the realized cross-attribute cosine on multi-attribute steering is consistent with the disjoint-case bound to within this correction. The overlap can therefore be absorbed without invalidating the qualitative conclusion.

\subsection{Decoder Gram structure and attribute coupling}
\label{app:gram-values}

We evaluate the interference bound of Lemma~\ref{lem:gram} on our trained SAEs across all four models and all steering layers, using the top-50 features ($|F^\ell_a| = 50$) for both sentiment and topic attributes.

\paragraph{Setup.}
For each model–layer pair we extract the decoder submatrices $W_{F_i}$ and $W_{F_j}$ corresponding to the sentiment and topic feature sets, compute the inter-attribute Gram submatrix $G^{(ij)} = W_{F_i}^\top W_{F_j}$ and intra-attribute matrices $G^{(ii)}, G^{(jj)}$, and report both:
\begin{enumerate}[label=(\roman*)]
\item the \emph{theoretical bound} $\sigma_{\max}(G^{(ij)})/\sqrt{\sigma_{\min}(G^{(ii)})\,\sigma_{\min}(G^{(jj)})}$ from Lemma~\ref{lem:gram}, and
\item the \emph{realized cosine coupling} $|\cos(\Delta x^{(i)},\Delta x^{(j)})|$ computed with the actual contrastive shifts $\delta^{(a)}$ used during steering.
\end{enumerate}
We also report the disjoint variant $|\cos|_{\mathrm{disj}}$, which removes overlapping features ($F_{a_i}\cap F_{a_j}$) so that the precondition $F_{a_i}\cap F_{a_j} = \emptyset$ is satisfied exactly.

\paragraph{Empirical coupling is near-zero.}
The realized cosine coupling is small across all models and layers:
\begin{itemize}
\item \textbf{With overlap:} $\max|\cos| = 0.253$ (MDLM L6, 6 shared features), mean $= 0.058$ across all 38 pairs.
\item \textbf{Disjoint feature sets:} $\max|\cos|_{\mathrm{disj}} = 0.092$ (SEDD L8), mean $= 0.028$ across all 38 pairs.
\end{itemize}
Removing overlapping features reduces the maximum coupling by $2.7\times$ (from 0.253 to 0.092), confirming that shared features are the dominant source of residual-space alignment. Even with overlap included, coupling remains well below thresholds that would produce observable interference in downstream classifiers.

\paragraph{Feature set overlap.}
The top-50 sentiment and topic feature sets share 0–16 features depending on model and layer. LLaDA L8 has the largest overlap (16/50), likely because early layers encode less attribute-specific information. Later layers consistently show lower overlap (0–3 features), consistent with increasing specialization. Lemma~\ref{lem:gram} formally requires disjoint sets; in practice, composability experiments use the full (potentially overlapping) feature sets, and the empirical coupling remains small regardless.

\paragraph{Cross-model and cross-topic patterns.}
We additionally computed coupling for all four topic categories (Sports, Business, World, Sci/Tech) at every Dream and LLaDA layer (24 extra pairs). Results are consistent: $\max|\cos|_{\mathrm{disj}} = 0.082$ (LLaDA L14, Sent$\times$Business), mean $= 0.037$. No model, layer, or topic category exhibits systematically higher coupling.

\paragraph{Interpretation of the gap between bound and reality.}
Lemma~\ref{lem:gram} provides a valid but conservative bound: it guarantees that coupling cannot exceed a computable threshold under any post-ReLU perturbations consistent with disjoint feature sets. The empirical coupling is roughly an order of magnitude tighter than the bound. The gap suggests that the contrastive shifts $\delta^{(a)}$ used in our steering avoid the worst-case directions in the off-diagonal Gram structure — equivalently, contrastive selection finds attribute-aligned directions that are also geometrically near-orthogonal in residual space, beyond what disjointness alone guarantees. Whether this is a generic property of SAE training under contrastive selection or specific to our four models is an open question; tightening the bound via data-dependent analysis of $\delta^{(a)}$ is a natural direction for future work.

\section{Vocabulary Grounding of Top Contrastive Features}
\label{app:feature_vocab_sparsity}

To qualitatively verify that contrastive SAE features encode semantically coherent concepts, we project each feature's decoder column through the model's unembedding matrix and inspect the highest-scoring vocabulary tokens. Concretely, for a feature~$j$ with learned decoder vector $\mathbf{w}_j^\text{dec} \in \mathbb{R}^{d}$, we compute the logit-space projection $\mathbf{w}_j^\text{dec} \cdot \mathbf{W}_\text{unembed}^\top \in \mathbb{R}^{|\mathcal{V}|}$ and report the tokens with the largest scores. Intuitively, these are the vocabulary items that the feature most strongly promotes in the model's output distribution when active.

For each attribute (topic, formality/informality, sentiment), we select the top~3 features in each direction by Cohen's $d$: the 5 positive-direction features with the largest $d > 0$ (more active for the target class) and the 5 negative-direction features with the most negative $d < 0$ (more active for the contrast class). In Table \ref{tab:feature_vocab_all}, we report results at the deepest SAE layer for each model: layer~7 for MDLM-124M and SEDD-124M, layer~26 for LLaDA-8B, and layer~23 for DREAM-7B. MDLM and SEDD use the GPT-2 tokenizer (50K English tokens); LLaDA and DREAM use multilingual vocabularies (126K and 152K tokens, respectively), so some features project onto non-English tokens (Chinese glosses are provided where applicable).

\input{figures_tables/table_feature_vocab_sparsity_all_merged}

\section{Detailed Experimental Setup}
\label{sec:experimental_setup}

\subsection{Models and SAE Training}
\label{sec:sae_training}

We study four DLMs that span two training objectives, two scales, and two provenance strategies:
MDLM~\citep{sahoo2024mdlm} and SEDD~\citep{lou2024sedd} are 124M-parameter GPT-2 architectures trained on OpenWebText with absorbing-state and score-entropy objectives, respectively---sharing architecture and data but differing only in loss function, forming a natural experiment for isolating the effect of training objective on learned representations.
Dream~\citep{dream2025} (7B) is fine-tuned from Qwen2 with masked diffusion, and LLaDA~\citep{nie2025llada} (8B) is trained from scratch with a masked absorbing objective, providing a contrast in model provenance at the 7--8B scale.

For each model, we train TopK sparse autoencoders~\citep{makhzani2013k} on residual-stream activations extracted from middle-to-late transformer layers via forward hooks.
The SAE encodes an input activation $\mathbf{x} \in \mathbb{R}^{d}$ into a sparse latent representation $\mathbf{h} \in \mathbb{R}^{D}$ ($D \gg d$) and reconstructs:
\begin{align}
    \mathbf{h} &= \mathrm{TopK}\bigl((\mathbf{x} - \mathbf{b}_\text{dec}) \mathbf{W}_\text{enc} + \mathbf{b}_\text{enc},\; k\bigr), \label{eq:encode} \\
    \hat{\mathbf{x}} &= \mathbf{h}\, \mathbf{W}_\text{dec} + \mathbf{b}_\text{dec}, \label{eq:decode}
\end{align}
where $\mathrm{TopK}$ retains only the $k$ largest activations and zeros the rest, providing an exact sparsity guarantee.
We use $k{=}32$ for all models.
Decoder columns are normalized to unit norm after each training step.
We train with MSE reconstruction loss plus an auxiliary loss ($\lambda{=}1/32$) that encourages dead neurons (inactive for ${>}10{,}000$ steps) to reconstruct the current residual error.
All SAEs are trained for 80K steps with AdamW (lr $10^{-4}$, cosine schedule to $10^{-5}$) on batches of 4,096 activations. All hyperparameters used for training SAE can be found in Table~\ref{tab:sae_hparams}.

\paragraph{DLM-specific design choice: noise-level diversity.}
Unlike autoregressive models where each token has a single context, DLM activations depend on the noise level $\sigma$ (or equivalently, the fraction of masked tokens). We extract activations across uniformly sampled noise levels during training, ensuring the SAE learns features that are robust across the full denoising trajectory rather than specialized to a single noise regime. This is validated by our sigma-stratified analysis (\S\ref{sec:appendix_sigma}), which shows feature activations vary by less than 15\% across noise levels.

\subsection{Prediction Formation via Diffusion Logit Lens}
\label{sec:logit_lens}

We introduce the \emph{Diffusion Logit Lens}, adapting the logit lens technique~\citep{belrose2023eliciting} to discrete diffusion language models. By projecting intermediate-layer hidden states through the unembedding matrix at each denoising step, we construct a two-dimensional \emph{prediction formation surface} (layer $\times$ timestep) that reveals when and where token identities crystallize during generation.

\begin{figure}[h!]
\centering
\includegraphics[width=\textwidth]{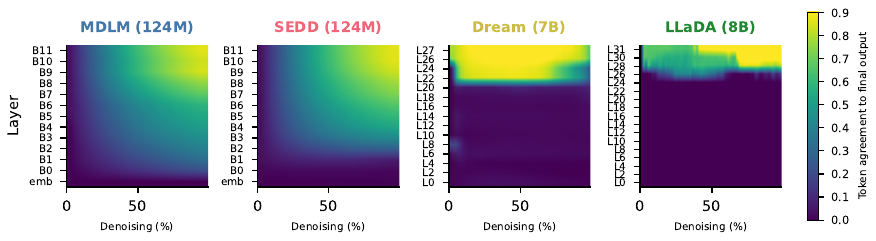}
\caption{Prediction formation via logit lens across four diffusion language models. Each panel shows token agreement between intermediate-layer representations and the final output, as a function of layer depth and denoising progress. Small models (MDLM, SEDD) form predictions gradually across layers and time, while large models (DREAM, LLaDA) exhibit sharp layer boundaries where predictions crystallize abruptly.}
\label{fig:prediction_formation}
\end{figure}

\subsubsection{Setup}
\label{sec:logit_lens_setup}

For each model, we generate samples using the standard denoising schedule, hooking into every transformer block at each step. At layer $\ell$ and step $t$, we extract the hidden state $\mathbf{h}_t^\ell \in \mathbb{R}^{d_\text{model}}$ and project it into vocabulary space:
\begin{equation}
\hat{y}_t^\ell = \arg\max\; \mathbf{W}_\text{unembed}\, \mathbf{h}_t^\ell,
\label{eq:logit_lens}
\end{equation}
comparing $\hat{y}_t^\ell$ against the final generated token at each position. We report \emph{agreement}: the fraction of positions where the intermediate prediction matches the final output. For models with explicit noise conditioning (MDLM, SEDD), we additionally evaluate a \emph{conditioned} lens that applies the full output head including adaptive layer normalization (adaLN) with noise-level conditioning. Table~\ref{tab:logit_lens_setup} summarizes the experimental configuration.

\begin{table}[h]
\centering
\caption{Logit lens experimental configuration per model.}
\label{tab:logit_lens_setup}
\small
\begin{tabular}{lccccl}
\toprule
\textbf{Model} & \textbf{Samples} & \textbf{Steps} & \textbf{Layers} & \textbf{Lens Type} & \textbf{Notes} \\
\midrule
MDLM-124M  & 200 & 128  & 13 & Raw + Conditioned & adaLN sigma conditioning \\
SEDD-124M  & 200 & 256  & 13 & Raw + Conditioned & Score-entropy output head \\
DREAM-7B   & 100 & 64   & 15 & Raw               & AR-style logit shift \\
LLaDA-8B   &  64 & 64   & 17 & Raw               & No noise conditioning \\
\bottomrule
\end{tabular}
\end{table}

\subsubsection{Finding 1: Small Models Show Gradual Diagonal Crystallization}
\label{sec:logit_lens_gradual}

MDLM-124M's conditioned logit lens reveals a smooth diagonal gradient in the (layer, timestep) space. Agreement with the final output increases gradually across both dimensions simultaneously, reaching 86.9\% at the final layer and step.

\begin{table}[h]
\centering
\caption{MDLM-124M conditioned lens agreement (\%) at selected layers and steps.}
\label{tab:mdlm_agreement}
\small
\begin{tabular}{lccc}
\toprule
\textbf{Layer} & \textbf{Step 0} & \textbf{Step 64} & \textbf{Step 127} \\
\midrule
Block 0  & 0.02 & 11.3 & 20.9 \\
Block 5  & 3.8  & 37.9 & 54.9 \\
Block 8  & 3.8  & 53.3 & 77.4 \\
Block 11 & 3.6  & 60.1 & \textbf{86.9} \\
\bottomrule
\end{tabular}
\end{table}

Every layer contributes incrementally to prediction formation---there is no single ``decision layer.'' The prediction surface is truly two-dimensional: both more denoising steps \emph{and} deeper layers are jointly required for accurate predictions. This distributed processing is consistent with MDLM's relatively shallow architecture (12 blocks), which forces all layers to contribute.

\paragraph{Sigma conditioning is essential.}
The raw logit lens (projecting hidden states directly through $\mathbf{W}_\text{unembed}$ without noise-level conditioning) achieves a maximum of only ${\sim}$8\% agreement on MDLM. The conditioned lens (applying the full output layer with adaLN sigma modulation) reaches 87\%. This 10$\times$ gap proves that sigma conditioning is not merely a scaling factor---it fundamentally transforms the representation space. Without knowing the noise level, intermediate hidden states are essentially unreadable. The adaLN mechanism acts as a ``decoder ring'' that makes the residual stream interpretable at each noise level.

\subsubsection{Finding 2: Masked vs.\ Unmasked Position Dynamics}
\label{sec:logit_lens_positions}

Separating agreement by position type reveals distinct dynamics:

\paragraph{MDLM.} Unmasked (already-revealed) positions are correctly predicted by block~5 onward ($>$80\% agreement)---these are trivially predictable since the token is visible. Masked positions carry the interesting signal: their agreement rises from 3.6\% (step~0) to 53.1\% (final step) at block~11, indicating that the model progressively ``plans'' future tokens at still-masked positions. This anticipatory encoding is the mechanistic basis for why SAE features fire on masked positions before content commitment (\S\ref{sec:temporal_dynamics}).

\paragraph{DREAM.} Unmasked agreement peaks at 97.6\% (step~32) but \emph{degrades} to 66.4\% (step~63), mirroring the V-shaped entropy. Even the model's ability to predict already-visible tokens deteriorates at low mask ratios, confirming the out-of-distribution hypothesis.

\subsubsection{Cross-Model Summary}
\label{sec:logit_lens_summary}

\begin{table}[h]
\centering
\caption{Cross-model comparison of prediction formation properties.}
\label{tab:logit_lens_summary}
\small
\begin{tabular}{lcccc}
\toprule
\textbf{Property} & \textbf{MDLM} & \textbf{SEDD} & \textbf{DREAM} & \textbf{LLaDA} \\
\midrule
Crystallization & Gradual diagonal & Gradual diagonal & Sharp (L22) & Sharp (L26) \\
Layer contribution & All layers & All layers & L22+ only & L26+ only \\
Raw lens max & 8\% & 72\% & 93\% & 100\% \\
Entropy trajectory & Monotonic $\downarrow$ & Monotonic $\downarrow$ & V-shaped & Monotonic $\downarrow$ \\
Sigma conditioning needed & Yes (essential) & Yes (but raw works) & N/A & N/A \\
Max agreement & 87\% & 86\% & 93\% & 100\% \\
\bottomrule
\end{tabular}
\end{table}

The logit lens reveals a fundamental dichotomy in how DLMs form predictions: \textbf{small models crystallize gradually} across a 2D surface (requiring both depth and denoising progress), while \textbf{large models exhibit sharp phase transitions} at specific layers. The training objective further modulates readability---score-entropy (SEDD) produces representations that are $9\times$ more readable without conditioning than absorbing-state (MDLM). These macroscopic patterns have direct implications for SAE-based steering: features in pre-crystallization layers (e.g., DREAM L8--L20) encode abstract semantic information without token commitment, making them natural intervention points for high-level attribute control without disrupting low-level token selection.

\subsection{SAE Layer Selection}
\label{app:layer_selection}

Selecting which transformer layers to train SAEs on is non-trivial: in autoregressive models, middle layers are known to balance representational richness with causal malleability~\citep{zou2023representation}, but this principle has never been validated for discrete diffusion transformers.
We describe the systematic methodology used and summarize the results for each model.

\subsubsection{Methodology}

We apply a four-stage protocol uniformly across all models:

\begin{enumerate}
    \item \textbf{Linear probing.} Logistic regression with 5-fold stratified cross-validation on mean-pooled activations from every transformer layer. We probe two attributes: sentiment (IMDB, 2-class) and topic (AG News, 4-class). To account for the stochastic nature of diffusion representations, each text is passed through the model with 3 independent noise samples (mask ratio 0.5), yielding $3\times$ the effective dataset size (6,000 data points per layer from 2,000 source texts).

    \item \textbf{SAE training.} TopK SAEs are trained on candidate layers with identical hyperparameters per model (Table~\ref{tab:sae_hparams}), ensuring fair comparison. We monitor reconstruction loss and dead neuron fraction as quality indicators.

    \item \textbf{Contrastive feature identification.} Per-feature Cohen's $d$ between attribute groups, retaining features with $|d| > 0.05$ and $p < 0.01$. This reveals which layers produce the strongest and most numerous attribute-discriminative features.

    \item \textbf{Steering ablation.} The decisive test: we compare steering effectiveness (positive-rate swing across an $\alpha$ sweep) for different layer windows, isolating the causal contribution of each layer group.
\end{enumerate}

\subsubsection{Layer Probing Results}
\label{sec:layer_study}
Table~\ref{tab:layer_probing} reports sentiment probing accuracy at selected layers for all four models.

\begin{table}[h]
\centering
\small
\caption{Linear probing accuracy (sentiment, 5-fold CV) at selected layers. Layers in \textbf{bold} are those selected for SAE training. MDLM and SEDD share the GPT-2 architecture (12 layers); Dream has 28 layers (Qwen2); LLaDA has 32 layers (LLaMA).}
\label{tab:layer_probing}
\begin{tabular}{rcccc}
\toprule
\multirow{2}{*}{Layer} & MDLM & SEDD & Dream & LLaDA \\
 & (12L, 124M) & (12L, 124M) & (28L, 7B) & (32L, 8B) \\
\midrule
0  & 80.9 & 81.5 & 88.5 & 86.4 \\
4  & 82.2 & 77.8 & 90.7 & 88.1 \\
\textbf{5}  & \textbf{82.7} & \textbf{78.6} & --- & --- \\
\textbf{6}  & \textbf{83.1} & \textbf{79.9} & --- & --- \\
\textbf{7}  & \textbf{84.0} & \textbf{81.3} & --- & --- \\
\textbf{8}  & 84.7$^\dagger$ & {82.1} & \textbf{95.0} & \textbf{91.4} \\
\textbf{9}  & 84.4 & {81.8} & --- & --- \\
\textbf{10} & 84.0 & {82.4} & --- & --- \\
11 & 84.4 & 83.2$^\dagger$ & --- & --- \\
\textbf{13} & --- & --- & \textbf{97.6} & --- \\
\textbf{14} & --- & --- & --- & \textbf{95.5} \\
\textbf{17} & --- & --- & \textbf{98.2}$^\dagger$ & --- \\
\textbf{20} & --- & --- & 98.0 & \textbf{97.3} \\
\textbf{23} & --- & --- & \textbf{98.0} & 98.1$^\dagger$ \\
\textbf{26} & --- & --- & --- & \textbf{98.0} \\
27 & --- & --- & 98.0 & --- \\
31 & --- & --- & --- & 97.9 \\
\bottomrule
\multicolumn{5}{l}{\footnotesize $^\dagger$Peak probing accuracy for that model.}
\end{tabular}
\end{table}

Three patterns emerge:
\begin{itemize}
    \item \textbf{Monotonic rise then plateau.} All four models show increasing probing accuracy with depth, saturating in the upper layers. MDLM plateaus at L8--11 ($\sim$84\%); Dream and LLaDA plateau earlier in relative terms ($\sim$L13/L17 and $\sim$L20/L23 respectively), with much higher absolute accuracy ($>$97\%) reflecting the larger model capacity.
    \item \textbf{SEDD dips before rising.} Unlike MDLM, SEDD shows a characteristic dip at L1--5 (77--79\%) below the L0 baseline (81.5\%), before rising through L6--11. This U-shaped profile suggests that score-entropy diffusion initially disrupts the pretrained embeddings' linear separability before rebuilding it in later layers.
    \item \textbf{Scale raises the floor.} The 7--8B models (Dream, LLaDA) start at 86--89\% even at L0, compared to 81\% for the 124M models. Richer input embeddings (Qwen2.5 / LLaMA) provide a higher baseline that subsequent layers refine.
\end{itemize}

Topic probing follows a similar profile but peaks later: MDLM at L11 (79.1\%), Dream at L23 (95.3\%), LLaDA at L28 (95.7\%). This offset is consistent with topic being a more global, multi-token property that requires deeper processing.

\subsubsection{Steering Ablation: MDLM}

The decisive experiment compares four 3-layer windows for sentiment steering with identical settings (50 features per layer, $\alpha \in [-3,+3]$, 512 samples per $\alpha$):

\begin{table}[h]
\centering
\small
\caption{MDLM sentiment steering ablation across layer windows. Swing = positive rate at $\alpha{=}{-}3$ minus $\alpha{=}{+}3$.}
\label{tab:mdlm_layer_ablation}
\begin{tabular}{lcccc}
\toprule
Layer window & Swing (pp) & Max pos.\ rate & Min pos.\ rate & PPL range \\
\midrule
\textbf{[5, 6, 7]} & \textbf{74.2} & 76.0\% & 1.8\% & 42--51 \\
$[$7, 8, 9$]$       & 70.3 & 70.5\% & 0.2\% & 40--49 \\
$[$8, 9, 10$]$      & 62.1 & 63.3\% & 1.2\% & 40--46 \\
$[$9, 10, 11$]$     & 59.8 & 62.7\% & 2.9\% & 40--43 \\
\bottomrule
\end{tabular}
\end{table}

Steering swing decreases monotonically with depth ($-$3.6 pp per layer shift on average), revealing a stark \textbf{probing--steering dissociation}: the layers with the highest probing accuracy (L8--11) are the least causally effective for steering. This mirrors findings from autoregressive representation engineering~\citep{zou2023representation} and extends the principle to discrete diffusion transformers.

The mechanism is \emph{causal distance to output}: perturbations at L5--7 traverse 5--7 subsequent transformer blocks, amplifying the steering signal through residual-stream propagation. Later-layer perturbations have fewer blocks to propagate through and may be partially undone by the final layer norm and unembedding.

\subsubsection{Layer Selection per Model}

\paragraph{MDLM: layers [5, 6, 7].}
Selected by the steering ablation above. Despite probing peaking at L8 (84.7\%), L5--7 achieve the strongest steering (74.2pp swing). Topic steering on the same layers yields 100pp swing (100\% Sports at $\alpha{=}{-}3$), independently validating the choice.

\paragraph{SEDD: layers [5, 6, 7].}
SEDD shows \emph{no} probing--steering dissociation: at L8--10 the swing reaches 90.8pp, far exceeding L5--7 (38.8pp). Score-entropy training appears to co-locate probing accuracy and causal malleability in the same layers which is the opposite of MDLM's pattern under absorbing-state training, suggesting the dissociation stems from the loss objective rather than the architecture. To preserve a clean MDLM--SEDD contrast (same architecture, different loss), we train SAEs at the same layers (L5--7) on both models throughout; per-model layer optimization would conflate the loss comparison with layer choice and obscure the controlled comparison in \S\ref{sec:temporal_dynamics}.

\paragraph{Dream-7B: layers [8, 13, 17, 23].}
Selected to span the transition zone (L8, pre-plateau) through the probing plateau (L13--23). The plateau makes layer discrimination via probing alone impossible; direct steering comparisons confirm that L17 (sentiment) and L23 (topic) produce the strongest features. Even spacing across the 28-layer model ensures coverage of early-, mid-, and late-stage representations.

\paragraph{LLaDA-8B: layers [8, 14, 20, 26].}
Follows the same even-spacing strategy across 32 layers, covering transition (L8), mid-rise (L14), near-peak (L20), and plateau (L26). Feature identification reveals the strongest contrastive effects at L14 (max $|d|{=}0.63$), while the dissociation experiment shows L26 dominates steering (67pp alone). This suggests a mild dissociation similar to MDLM: the layer with the strongest individual features is not the layer with the greatest causal impact.

\subsubsection{SAE Training Hyperparameters}

\begin{table}[h]
\centering
\small
\caption{SAE training configuration per model. All SAEs use TopK activation with auxiliary dead-neuron loss.}
\label{tab:sae_hparams}
\begin{tabular}{lcccc}
\toprule
Parameter & MDLM & SEDD & Dream & LLaDA \\
\midrule
$d_\text{model}$  & 768 & 768 & 3,584 & 4,096 \\
$d_\text{sae}$    & 12,288 & 12,288 & 14,336 & 16,384 \\
Expansion ratio    & 16$\times$ & 16$\times$ & 4$\times$ & 4$\times$ \\
$k$ (TopK)         & 32 & 32 & 32 & 32 \\
Training steps     & 50K & 80K & 80K & 40K \\
Batch size         & 4,096 & 4,096 & 4,096 & 4,096 \\
Learning rate      & $10^{-4}$ & $10^{-4}$ & $10^{-4}$ & $10^{-4}$ \\
LR schedule        & Cosine & Cosine & Cosine & Cosine \\
Aux loss coef      & 1/32 & 1/32 & 1/32 & 1/32 \\
EMA decay          & 0.999 & 0.999 & 0.999 & 0.999 \\
Training data      & OWT & OWT & OWT & OWT \\
\bottomrule
\end{tabular}
\end{table}

\subsubsection{Summary}

The layer selection study yields two findings of independent interest:

\begin{enumerate}
    \item \textbf{Probing--steering dissociation is loss-dependent.} MDLM (absorbing loss) shows clear dissociation --- middle layers steer best despite later layers probing best. SEDD (score-entropy loss) shows no dissociation --- later layers both probe and steer best. LLaDA shows suggestive dissociation similar to MDLM. This implicates the training objective, not the architecture, as the key determinant of where causal malleability resides.

    \item \textbf{The causal malleability principle extends to diffusion transformers.} For absorbing-loss DLMs, middle layers provide the best balance of representational richness and propagation distance, consistent with findings from autoregressive representation engineering. This principle appears robust across model scales (124M to 8B) and architectures (GPT-2, Qwen2, LLaMA).
\end{enumerate}

\subsection{Sigma-Stratified Feature Analysis}
\label{sec:appendix_sigma}

Unlike autoregressive models where each token is conditioned on a fixed left context, DLM activations depend on the noise level~$\sigma$ (equivalently, the fraction of masked tokens).
During SAE training, we extract activations across uniformly sampled noise levels, ensuring the SAE learns features that span the full denoising trajectory.
A natural question is whether the learned features are \emph{robust} across noise levels or \emph{specialized} to particular noise regimes.
We answer this with a sigma-stratified analysis of feature discriminability.

\subsubsection{Methodology}

We partition the noise schedule into five equal-frequency bins based on the percentiles of $\sigma$ observed during activation extraction (Table~\ref{tab:sigma_bins}).
For each bin, we recompute the Cohen's~$d$ effect size for the top-20 most discriminative features per layer.
We report the \emph{coefficient of variation} (CV) of $|d|$ across bins as the primary measure of noise-level sensitivity:
\begin{equation}
    \text{CV} = \frac{\text{std}(|d_1|, \dots, |d_5|)}{\text{mean}(|d_1|, \dots, |d_5|)} \times 100\%
\end{equation}
where $d_i$ is the Cohen's~$d$ of a given feature computed within sigma bin~$i$.

\begin{table}[h]
\centering
\caption{Sigma bin boundaries (equal-frequency, MDLM). Low $\sigma$ corresponds to near-clean text; high $\sigma$ corresponds to heavily masked text. Each bin contains ${\sim}1.79$M token positions (sentiment) or ${\sim}334$K (topic).}
\label{tab:sigma_bins}
\begin{tabular}{lccc}
\toprule
Bin & $\sigma_{\text{low}}$ & $\sigma_{\text{high}}$ & Interpretation \\
\midrule
1 & $3.1 \times 10^{-5}$ & 0.225 & Near-clean \\
2 & 0.225 & 0.507 & Low noise \\
3 & 0.507 & 0.911 & Medium noise \\
4 & 0.911 & 1.604 & High noise \\
5 & 1.604 & 6.906 & Near-fully-masked \\
\bottomrule
\end{tabular}
\end{table}

\subsubsection{Results}

Table~\ref{tab:sigma_cv} summarizes the distribution of CV values across the top-20 features for each layer and attribute.
We observe moderate but heterogeneous variation across noise levels, with median CV values ranging from 13\% to 22\% depending on the layer and attribute.

\begin{table}[h]
\centering
\caption{Coefficient of variation (CV) of $|d|$ across sigma bins for the top-20 features per layer. Lower CV indicates more noise-robust discriminability.}
\label{tab:sigma_cv}
\begin{tabular}{llcccc}
\toprule
Attribute & Layer & Median CV & Mean CV & \% with CV${<}15\%$ & \% with CV${<}25\%$ \\
\midrule
\multirow{3}{*}{Sentiment}
  & L5 & 16.9\% & 20.1\% & 45\% & 80\% \\
  & L6 & 20.6\% & 24.3\% & 20\% & 65\% \\
  & L7 & 13.1\% & 20.1\% & 55\% & 65\% \\
\midrule
\multirow{3}{*}{Topic}
  & L5 & 16.3\% & 17.2\% & 30\% & 95\% \\
  & L6 & 22.1\% & 21.8\% & 30\% & 70\% \\
  & L7 & 15.5\% & 17.4\% & 40\% & 90\% \\
\bottomrule
\end{tabular}
\end{table}

Several patterns emerge.
First, a substantial fraction of features (30--55\%, depending on the layer) exhibit CV~${<}\,15\%$, indicating that their discriminability is largely preserved across the full noise schedule.
Second, the majority of features (65--95\%) have CV~${<}\,25\%$, confirming that SAE features trained on mixed-noise activations successfully generalize across denoising stages rather than collapsing to noise-regime-specific detectors.
Third, the remaining features with higher CV reveal a structured dependence on $\sigma$: some features strengthen with increasing noise (capturing distributional or structural cues available even in heavily corrupted text) while others weaken (encoding fine-grained lexical or semantic cues that emerge only as text nears its final form).

\subsubsection{Per-Feature Sigma Profiles}

Table~\ref{tab:sigma_profiles_sent} shows representative per-bin Cohen's~$d$ profiles for selected sentiment features at Layer~7, illustrating the diversity of sigma-dependence patterns.

\begin{table}[h]
\centering
\caption{Per-bin $|d|$ for selected sentiment features (Layer~7). Features are categorized by their sigma-dependence pattern.}
\label{tab:sigma_profiles_sent}
\small
\begin{tabular}{lccccccl}
\toprule
Feature & Overall $d$ & Bin 1 & Bin 2 & Bin 3 & Bin 4 & Bin 5 & Pattern \\
\midrule
\multicolumn{8}{l}{\textit{Noise-robust} (CV ${<}$ 15\%)} \\
F2846  & $-$0.160 & 0.151 & 0.152 & 0.160 & 0.162 & 0.170 & Stable \\
F2660  & \phantom{$-$}0.131 & 0.123 & 0.129 & 0.131 & 0.137 & 0.137 & Stable \\
F7590  & \phantom{$-$}0.184 & 0.170 & 0.179 & 0.190 & 0.201 & 0.210 & Gradual increase \\
\midrule
\multicolumn{8}{l}{\textit{Noise-sensitive} (CV ${>}$ 25\%)} \\
F1758  & $-$0.235 & 0.160 & 0.190 & 0.218 & 0.287 & 0.308 & Increases with $\sigma$ \\
F6680  & $-$0.166 & 0.067 & 0.096 & 0.160 & 0.215 & 0.227 & Strong increase \\
F11495 & $-$0.170 & 0.054 & 0.071 & 0.142 & 0.233 & 0.250 & Strong increase \\
\bottomrule
\end{tabular}
\end{table}

\subsubsection{Directional Analysis}

We additionally examine whether features tend to become more or less discriminative at higher noise levels.
Across layers and attributes, the direction of sigma-dependence is approximately evenly split: roughly half of features show increasing $|d|$ with $\sigma$ (strengthening under noise) and half show decreasing $|d|$ (weakening under noise).
This balanced split holds for both sentiment (50/50 at L5, 55/45 at L6, 40/60 at L7) and topic classification (35/65 at L5, 45/55 at L6, 35/65 at L7), with a slight tendency for topic features to weaken at higher noise levels.

This bidirectional structure suggests that the SAE decomposes DLM representations into complementary feature populations:
\begin{itemize}
    \item \textbf{Low-$\sigma$ features} encode fine-grained semantic content (e.g., sentiment-bearing lexical items) that is most discriminative when text is nearly resolved.
    \item \textbf{High-$\sigma$ features} capture coarser distributional properties (e.g., topic or register) that the model infers even from heavily corrupted input.
\end{itemize}
This functional decomposition is unique to the diffusion setting and has no direct analog in autoregressive models, where all tokens share a single (fully observed) conditioning context.

\subsubsection{Implications for Steering}

The sigma-stratified analysis has practical implications for feature-based steering of DLMs.
Since feature discriminability varies across the denoising trajectory, a sigma-adaptive steering strategy---applying different feature weights at different noise levels---could yield more targeted interventions than uniform steering across all timesteps.
We leave exploration of sigma-adaptive steering to future work, noting that the infrastructure for step-conditional steering is already demonstrated in our denoising dynamics experiments (\S\ref{sec:temporal_dynamics}).

\subsection{Contrastive Feature Extraction}
\label{app:feature_extraction}

\subsubsection{Datasets}
\label{app:extraction_datasets}
We study three attributes in this paper--Sentiment, Topic, and Style. The datasets used to model these attributes are explained below. 

\textbf{Sentiment:} We use the IMDB movie review dataset proposed by \citet{maas-etal-2011-learning} for sentiment. We sample $5,000$ positive reviews (label${}=1$) and $5,000$ negative reviews (label${}=0$) from this dataset and use them for learning sentiment features. All texts are truncated to $512$ tokens.

\textbf{Topic:} We use the AG News dataset \citet{zhang2015character}, which contains news articles across four topics: World, Sports, Business, and Sci/Tech. We study the topic Sports in this paper. Two potential contrastive approaches exist for feature extraction: (1) contrasting Sports against all other topics combined (one-vs-all), and (2) contrasting Sports against a single other topic (one-vs-one). Prior work on contrastive representation learning suggests that narrower, more targeted contrasts yield cleaner feature directions: \citet{zou2023representation} showed that contrastive pair specificity affects steering direction quality, \citet{rimsky-etal-2024-steering} demonstrated that carefully curated contrastive pairs outperform generic ones for activation steering, and \citet{belinkov-2022-probing} established that the choice of control task determines what linguistic properties a probe captures. Consistent with these findings, we empirically observe that one-vs-one contrasts produce features that are more attribute-specific and less likely to interfere with other attributes during multi-attribute steering. Hence, we sample $5{,}000$ Sports articles (label${}=1$) and $5{,}000$ Business articles (label${}=0$), totaling $10{,}000$ samples, and use them for learning sports topic related features. Texts are truncated to $512$ tokens.

\textbf{Style:} We study formality as our style attribute. For MDLM and SEDD, which are unconditional models trained on OpenWebText \citep{Gokaslan2019OpenWeb}, we use the formality corpus of \citet{pavlick2016empirical}. This dataset contains $9{,}274$ sentences from news articles and reader comments, each annotated with a continuous formality score on a $[-3, +3]$ scale by crowdworkers using Best-Worst Scaling \citep{louviere2015best}. We threshold these scores to create a binary contrast: sentences with average score $\geq 1.0$ are labeled formal (class~A, $2{,}597$ sentences) and sentences with average score $\leq -1.0$ are labeled informal (class~B, $2{,}600$ sentences). Sentences with scores between $-1.0$ and $1.0$ are excluded as ambiguous. We balance to $2{,}500$ samples per class, totaling $5{,}000$ samples. Positive-$d$ features correspond to formal features; negative-$d$ to informal. LLaDA and DREAM are conditional models that we prompt with domain-neutral opinion starters (e.g., ``I think this is'', ``The experience was'') to generate review-style text. The corpus by \citet{pavlick2016empirical} is domain-mismatched for these models: its formality features are learned from news articles and government proceedings, encoding vocabulary patterns (e.g., legislative terminology, formal news register) that do not appear in review-style generations. When applied to LLaDA, these features caused text degradation--steering toward formality produced repetitive ``president of the Senate'' loops rather than genuinely formal reviews, because the features activated domain-specific vocabulary instead of domain-general register shifts. Hence, we instead label $20{,}000$ IMDB reviews from \citep{maas-etal-2011-learning} with an off-the-shelf formality classifier\footnote{\url{https://huggingface.co/s-nlp/roberta-base-formality-ranker}} \citep{babakov2023don} that was trained on the GYAFC corpus \citep{rao2018dear} and the online formality corpus of \citet{pavlick2016empirical}. This ensures the formality features are learned from the same domain (movie reviews) that the prompted models generate. Reviews with formality score $> 0.9$ form the formal class ($6{,}587$ reviews) and reviews with score $< 0.1$ form the informal class ($263$ reviews). We balance to $263$ samples per class, limited by the informal count. As with MDLM and SEDD, positive-$d$ features correspond to formal features and negative-$d$ to informal.

\subsubsection{Feature Extraction Procedure}\label{app:extraction_procedure}

The feature extraction procedure works as follows. For each dataset pair (class~A vs.\ class~B):

\begin{enumerate}
  \item \textbf{Tokenization:} We tokenize each text with the model's tokenizer, truncating or padding to the model's sequence length.

  \item \textbf{Apply noise:} Then for each text, we create 3 noised versions at mask ratios $\{0.3, 0.5, 0.7\}$ by randomly replacing tokens with \texttt{[MASK]}. This ensures features are identified across noise levels rather than at a single denoising stage.

  \item \textbf{Forward pass:} Next, we run the noised input through the model to obtain hidden states at each SAE layer.

  \item \textbf{ReLU SAE encoding (per-token):} for each token position at each SAE layer, we compute:
  \begin{align}
    \text{activations} = \mathrm{ReLU}\!\bigl((\mathbf{h} - \mathbf{b}_\text{dec})\,\mathbf{W}_\text{enc} + \mathbf{b}_\text{enc}\bigr),
  \end{align}
  This produces a $(\text{seq\_len} \times d_\text{SAE})$ activation matrix per sample per layer. Although the SAEs use TopK activation during training and normal inference, we use full ReLU encoding for feature extraction. TopK retains only the K most active features per token, so a contrastive feature outside the top K would appear inactive, biasing its Cohen's~$d$ toward zero. ReLU returns the true activation for all $d_\text{SAE}$ features, enabling accurate effect-size estimation.

  \item \textbf{Streaming statistics:} We accumulate per-feature mean and variance for class~A and class~B using Welford's online algorithm \citep{welford1962note}, avoiding the need to store all activations in memory.

  \item \textbf{Cohen's $d$ \citep{cohen2013statistical}:} for each of the $d_\text{SAE}$ features at each layer, we compute:
  \begin{equation}
    d_j = \frac{\bar{h}_j^{A} - \bar{h}_j^{B}}{\sqrt{(\mathrm{Var}(h_j^{A}) + \mathrm{Var}(h_j^{B}))/2}}.
  \end{equation}
  Positive $d$ indicates a feature more active for class~A; negative $d$ for class~B.

  \item \textbf{Statistical validation:} Next, we rank features by $|d|$, take the top 300 candidates, and run Mann-Whitney $U$ tests \citep{mann1947test} with $p < 0.01$ (Bonferroni-corrected \citep{bonferroni1936teoria} across 300 tests). We retain the top 50 positive-$d$ and top 50 negative-$d$ features per layer (indices and statistics) for our study. The top 20 per direction are used for steering.

  \item \textbf{Per-feature mean shifts:} For each retained feature $j$, we store the calibrated shift
  \begin{equation}
    \Delta_j = \bar{h}_j^{A} - \bar{h}_j^{B},
  \end{equation}
  Here, $A$ and $B$ are the two contrastive classes. At steering time, the hidden state is modified as $h_j \leftarrow h_j + \alpha \cdot \Delta_j$. This is self-calibrating: features with large class differences receive large shifts, while features with small differences receive proportionally small shifts.
\end{enumerate}

Table~\ref{tab:extraction_params} shows the feature extraction hyperparameters for all models, and Table~\ref{tab:extraction_per_model} shows per-model configurations.

\begin{table}[h]
\centering
\caption{Feature extraction hyperparameters for all models.}
\label{tab:extraction_params}
\small
\begin{tabular}{lc}
\toprule
\textbf{Parameter} & \textbf{Value} \\
\midrule
Batch size & 48 \\
Noise samples per text & 3 \\
Mask ratios & $\{0.3, 0.5, 0.7\}$ \\
Top candidates for $U$-test & 300 \\
Mann-Whitney $p$-value & 0.01 (Bonferroni) \\
Features saved per direction & 50 \\
Features used for steering & 20 \\
Random seed & 42 \\
\bottomrule
\end{tabular}
\end{table}

\begin{table}[h]
\centering
\caption{Per-model extraction configurations.}
\label{tab:extraction_per_model}
\resizebox{\columnwidth}{!}{%
\begin{tabular}{lcccc}
\toprule
 & \textbf{MDLM} & \textbf{SEDD} & \textbf{LLaDA} & \textbf{DREAM} \\
\midrule
SAE layers & 5, 6, 7 & 5, 6, 7 & 8, 14, 20, 26 & 8, 13, 17, 23 \\
$d_\text{SAE}$ & 12{,}288 & 12{,}288 & 16{,}384 & 14{,}336 \\
$d_\text{model}$ & 768 & 768 & 4{,}096 & 3{,}584 \\
\midrule
Sentiment source & IMDB \citep{maas-etal-2011-learning} & IMDB \citep{maas-etal-2011-learning} & IMDB \citep{maas-etal-2011-learning} & IMDB \citep{maas-etal-2011-learning} \\
Topic source & AG News \citep{zhang2015character} & AG News \citep{zhang2015character} & AG News \citep{zhang2015character} & AG News \citep{zhang2015character} \\
Formality source & \citet{pavlick2016empirical} & \citet{pavlick2016empirical} & IMDB+RoBERTa \citep{babakov2023don} & IMDB+RoBERTa \citep{babakov2023don} \\
\bottomrule
\end{tabular}%
}
\end{table}

\subsection{Temporal Dynamics Tracking}\label{app:dynamics}

To construct the dynamics-weighted (adaptive) steering schedule, we track how contrastive feature activations evolve across the denoising trajectory. Table~\ref{tab:dynamics_per_model} lists the per-model configurations.

\begin{enumerate}
  \item We generate $N$ samples with SAE hooks active at all denoising steps.
  \item At each step, we ReLU-encode the hidden states through the SAE and record activations for the top-20 contrastive features per attribute.
  \item We store per-sample, per-step activations as an $(N \times T \times F)$ tensor per layer, where $T$ is the number of denoising steps and $F$ is the number of tracked features.
  \item Since not all generated samples express the target attribute, we classify all $N$ texts with attribute-specific classifiers (Table~\ref{tab:classifiers}) and retain only confident samples (softmax probability above a threshold; see Table~\ref{tab:dynamics_thresholds}) to avoid diluting the emergence signal. Informality uses a lower threshold (0.6) because baseline formality is high (81--85\% for LLaDA/DREAM), so very few generated texts are confidently informal at 0.8.
  \item Then we average dynamics over the filtered samples to obtain per-step mean activation curves.
  \item We compute block fractions by dividing the trajectory into $B$ temporal blocks, computing the activation change in each block, clamping negatives to zero, and normalizing to sum to one.
\end{enumerate}

\begin{table}[h]
\centering
\caption{Evaluation classifiers. These classifiers are used for evaluation and for filtering dynamics samples. They are not used during contrastive feature extraction for sentiment or topic, which rely on human-labeled datasets. For formality, the RoBERTa ranker is additionally used to label IMDB reviews for LLaDA and DREAM extraction (\S\ref{app:extraction_datasets}); however, the labeled texts (IMDB training set) and the evaluated texts (model-generated) are disjoint populations, so there is no direct circularity.}
\label{tab:classifiers}
\resizebox{\columnwidth}{!}{%
\begin{tabular}{lll}
\toprule
\textbf{Attribute} & \textbf{Model} & \textbf{HuggingFace ID} \\
\midrule
Sentiment & DistilBERT \citep{sanh2019distilbert} fine-tuned on SST-2 & \texttt{distilbert/distilbert-base-uncased-finetuned-sst-2-english} \\
Topic & BERT \citep{devlin2019bert} fine-tuned on AG News & \texttt{fabriceyhc/bert-base-uncased-ag\_news} \\
Formality & RoBERTa \citep{babakov2023don} formality ranker & \texttt{s-nlp/roberta-base-formality-ranker} \\
Perplexity & GPT-2 \citep{radford2019language} & \texttt{openai-community/gpt2} \\
\bottomrule
\end{tabular}%
}
\end{table}

\begin{table}[h]
\centering
\caption{Classifier confidence thresholds for dynamics filtering.}
\label{tab:dynamics_thresholds}
\small
\begin{tabular}{lcc}
\toprule
\textbf{Attribute} & \textbf{Threshold} & \textbf{\% samples retained} \\
\midrule
Sentiment & 0.8 & 89--97\% \\
Topic & 0.8 & 86--97\% \\
Formality & 0.8 & 7--69\% \\
Informality & 0.6 & 5--69\% \\
\bottomrule
\end{tabular}
\end{table}

\begin{table}[h]
\centering
\caption{Per-model dynamics tracking configurations.}
\label{tab:dynamics_per_model}
\small
\begin{tabular}{lcccc}
\toprule
 & \textbf{MDLM} & \textbf{SEDD} & \textbf{LLaDA} & \textbf{DREAM} \\
\midrule
Samples ($N$) & 1{,}000--5{,}000 & 5{,}000 & 1{,}000 & 1{,}000 \\
Denoising steps ($T$) & 1{,}024 & 1{,}024 & 64 & 256 \\
Temporal blocks ($B$) & 16 & 16 & 8 & 8 \\
Steps per block & 64 & 64 & 8 & 32 \\
Batch size & 16 & 4 & 8 & 8 \\
\bottomrule
\end{tabular}
\end{table}

\subsection{SAE Steering Experiments}\label{app:steering_details}

\subsubsection{Steering Mechanism}

At each denoising step, for each SAE layer, a forward hook performs the following:
\begin{enumerate}
  \item \textbf{Encode}: $\mathbf{h}_\text{enc} = \mathrm{ReLU}(\mathbf{h}\, \mathbf{W}_\text{enc} + \mathbf{b}_\text{enc})$
  \item \textbf{Modify}: $\mathbf{h}_\text{enc}[j] \leftarrow \mathbf{h}_\text{enc}[j] + \alpha \cdot \Delta_j$ for each target feature $j$
  \item \textbf{Decode}: $\mathbf{h}_\text{modified} = \mathbf{h}_\text{enc}\, \mathbf{W}_\text{dec} + \mathbf{b}_\text{dec}$
  \item \textbf{Replace}: substitute the original hidden state with $\mathbf{h}_\text{modified}$
\end{enumerate}
The shift $\Delta_j$ for each feature is the calibrated mean difference computed during extraction (\S\ref{app:extraction_procedure}).

\subsubsection{Steering Modes}\label{app:steering_modes}

We evaluate two schedule types with two alpha-scaling strategies, yielding four modes.

\paragraph{Uniform schedule.} A constant $\alpha$ is applied at every layer and every denoising step.

\paragraph{Adaptive schedule.} The denoising trajectory is divided into $B$ temporal blocks (Table~\ref{tab:dynamics_per_model}). Each block has a \emph{block fraction} computed from the dynamics tracking (\S\ref{app:dynamics}), reflecting what fraction of total feature emergence occurs in that block. At each step, the steering strength is scaled by the block fraction of the step's temporal block. Steps in high-activity blocks receive proportionally more steering; steps in low-activity blocks receive less. This concentrates steering effort where the model is actively making semantic decisions.

\paragraph{E-ratio scaling (for multi-attribute steering).} Different attributes require different steering strengths to achieve comparable classifier rates. We normalize this asymmetry with E~ratios: (1)~run single-attribute steering across $\alpha = 1$--$15$; (2)~identify $\alpha^*$ at which each attribute reaches approximately 80\% classifier rate; (3)~compute $E_\text{attr} = \alpha^*_\text{attr} / \max_\text{attrs}(\alpha^*)$. The hardest attribute receives $E{=}1.0$; easier attributes receive $E{<}1.0$. During multi-attribute steering, $\alpha_\text{eff} = \alpha_\text{global} \times E_\text{attr}$.

Crossing the two schedules (uniform, adaptive) with two scaling strategies (raw~$\alpha$, E-scaled~$\alpha$) gives four modes: \textbf{Uniform}, \textbf{Adaptive}, \textbf{Uniform+E}, and \textbf{Adaptive+E}. Table~\ref{tab:e_ratios} reports the E~ratios.

\begin{table}[h]
\centering
\caption{E ratios for multi-attribute steering. The hardest attribute (ratio${}=1.0$) is in bold.}
\label{tab:e_ratios}
\small
\begin{tabular}{llccc}
\toprule
\textbf{Model} & \textbf{Mode} & \textbf{Sentiment} & \textbf{Topic} & \textbf{Style} \\
\midrule
\multirow{2}{*}{MDLM} & Uniform & \textbf{1.000} & 0.400 & 0.400 \\
 & Adaptive & \textbf{1.000} & 0.231 & 0.308 \\
\midrule
\multirow{2}{*}{SEDD} & Uniform & \textbf{1.000} & 0.167 & 0.333 \\
 & Adaptive & \textbf{1.000} & 0.118 & 0.118 \\
\midrule
\multirow{2}{*}{LLaDA} & Uniform & 0.455 & 0.545 & \textbf{1.000} \\
 & Adaptive & \textbf{1.000} & 0.571 & 0.500 \\
\midrule
\multirow{2}{*}{DREAM} & Uniform & \textbf{1.000} & 0.400 & 0.800 \\
 & Adaptive & \textbf{1.000} & 0.400 & 0.800 \\
\bottomrule
\end{tabular}
\end{table}

\subsubsection{Attribute Combinations}\label{app:steering_combos}

We evaluate 7 attribute combinations per model: 3 single-attribute (sentiment, topic, style) and 4 multi-attribute (sentiment+topic, sentiment+style, style+topic, sentiment+topic+style). Sentiment is steered toward positive and topic toward Sports in all combinations. For the style attribute, MDLM and SEDD steer toward \emph{formality} (positive-$d$ features), since their baseline formality is near chance (54\% and 53\%, respectively). LLaDA and DREAM steer toward \emph{informality} (negative-$d$ features, with inverted shift direction), since their baselines are already highly formal (85\% and 81\%).

\subsubsection{Steering Hyperparameters}

Table~\ref{tab:steering_per_model} shows the per-model steering parameters. We generate with MDLM and SEDD unconditionally (no prompt). For LLaDA and DREAM, we use 10 domain-neutral prompts (e.g., ``I think this is'', ``In my opinion'') with 20 samples per prompt, yielding 200 total per condition. The 10 prompts are listed in Table \ref{tab:prompts}. SEDD's alpha range extends to 20 because sentiment required higher $\alpha$ to reach saturation on this model. Note that, MDLM and SEDD are unconditional diffusion models that do not use temperature-based sampling. Diversity arises from the stochastic denoising process itself: MDLM samples a random initial sequence from the prior and applies stochastic DDPM updates at each step, while SEDD uses an analytic predictor with Gumbel-max sampling. Each generation starts from a different random noise realization, producing diverse outputs without an explicit temperature parameter.

\begin{table}[h]
\centering
\caption{Per-model steering parameters.}
\label{tab:steering_per_model}
\small
\begin{tabular}{lcccc}
\toprule
 & \textbf{MDLM} & \textbf{SEDD} & \textbf{LLaDA} & \textbf{DREAM} \\
\midrule
Samples per condition & 200 & 200 & 200 & 200 \\
Denoising steps & 1{,}024 & 1{,}024 & 64 & 256 \\
Sequence length & 1{,}024 & 1{,}024 & 64 & 64 \\
Batch size & 10 & 12--20 & 4 & 8 \\
Temperature & N/A & N/A & 0.8 & 0.7 \\
Remasking strategy & N/A & N/A & low-confidence & entropy \\
Prompts & None & None & 10 neutral & 10 neutral \\
$\alpha$ range & 1--15 & 1--20 & 1--15 & 1--15 \\
Seed & 42 & 42 & 42 & 42 \\
\bottomrule
\end{tabular}
\end{table}

\begin{table}[h]
\centering
\caption{Prompts used for LLaDA and DREAM generation.}
\label{tab:prompts}
\small
\begin{tabular}{cl}
\toprule
\textbf{\#} & \textbf{Prompt} \\
\midrule
1 & ``I think this is'' \\
2 & ``In my opinion, this was'' \\
3 & ``I would say this is'' \\
4 & ``Overall, I found this to be'' \\
5 & ``To be honest, this was'' \\
6 & ``Many people found this to be'' \\
7 & ``It turns out that this was'' \\
8 & ``The experience was'' \\
9 & ``This was definitely'' \\
10 & ``I have to say, this is'' \\
\bottomrule
\end{tabular}
\end{table}

\subsection{Baseline Methods}\label{app:baselines}

We compare SAE steering against three activation-space baselines and one prompt-based baseline.

\subsubsection{Activation-Space Baselines}

These baselines add $\alpha \cdot \mathbf{v}$ to hidden states at each SAE layer at each denoising step, using the same forward-hook mechanism as SAE steering but without the encode--modify--decode cycle.

\begin{table}[h]
\centering
\caption{Activation-space baseline methods.}
\label{tab:baselines_detail}
\small
\begin{tabular}{lcp{7cm}}
\toprule
\textbf{Method} & \textbf{Uses SAE?} & \textbf{Direction computation} \\
\midrule
Contrastive vector & No & $\mathbf{v} = \overline{\mathbf{h}}^A - \overline{\mathbf{h}}^B$, unit-normalized. Mean difference in raw activation space. \\
Probe & No & Weight vector from logistic regression on class labels. \\
PCA & No & First principal component of contrastive activations, oriented positive$\to$negative. \\
\bottomrule
\end{tabular}
\end{table}

These baselines use the same attribute combinations and alpha range as SAE steering, with 200 samples per condition. Activations for direction computation are loaded from the Phase~1 extraction cache.

\subsubsection{Prompt Instruction Baseline}

For LLaDA and DREAM, we additionally evaluate a prompt-based baseline that prepends an explicit attribute instruction to each generation prompt. No activation-space intervention is applied. The instruction prefix is concatenated with each of the 10 standard prompts (Table~\ref{tab:prompts}), and 20 samples are generated per prefixed prompt (200 total), matching the SAE steering setup. Table~\ref{tab:prompt_prefixes} lists the prefixes for each attribute combination.

This baseline is not applicable to MDLM and SEDD, which are unconditional models that generate by denoising from pure noise with no prompt interface.

\begin{table}[h]
\centering
\caption{Instruction prefixes for the prompt baseline (LLaDA and DREAM). Each prefix is prepended to the 10 standard prompts from Table~\ref{tab:prompts}.}
\label{tab:prompt_prefixes}
\small
\begin{tabular}{ll}
\toprule
\textbf{Combination} & \textbf{Prefix} \\
\midrule
Sentiment & ``Write in positive sentiment: '' \\
Topic & ``Write about sports: '' \\
Style & ``Write informally: '' \\
Sentiment + Topic & ``Write in positive sentiment about sports: '' \\
Sentiment + Style & ``Write in positive sentiment informally: '' \\
Style + Topic & ``Write about sports informally: '' \\
Sentiment + Topic + Style & ``Write in positive sentiment about sports informally: '' \\
\bottomrule
\end{tabular}
\end{table}

This baseline tests whether the base model can follow explicit attribute instructions through prompting alone, providing a reference for how much control is achievable without any activation-level intervention.

\subsection{Evaluation Metrics}\label{app:evaluation}

For each steering condition, we report the \emph{mean classifier confidence} (mean softmax probability for the target class across 200 samples) using the attribute classifiers listed in Table~\ref{tab:classifiers}. We measure text quality with three metrics: GPT-2 perplexity (PPL; closer to baseline is better), distinct bigram ratio (dist-2; higher is better), and within-sample repetition rate (rep\_rate; lower is better). For each steering method, when reporting results, we select the best operating-point $\alpha$ that maximizes the mean confidence for the target attribute (single-attribute) or geometric mean of confidences across steered attributes (multi-attribute), subject to two quality gates: $\mathrm{dist\text{-}2} \geq 50\%$ of baseline and $\mathrm{PPL} < 100$.

\section{Extended Interpretability Analysis}
\label{app:extended_interp}

This section provides the full set of interpretability analyses summarized in Section~\ref{sec:interpretation} of the main paper. For each analysis type, we describe the generation methodology, present the complete figures spanning all models and layers, and discuss patterns not covered in the main text. Throughout, the three attributes studied are positive sentiment (IMDB), sports topic (AG News), and style (formality for MDLM/SEDD; informality for LLaDA/DREAM), as detailed in \S\ref{app:extraction_datasets}.

\subsection{Emergence Trajectories for All Attributes}
\label{app:emergence_all_attrs}

\paragraph{Methodology.} The main paper (Figure~\ref{fig:compact_emergence}) shows emergence trajectories for \emph{topic} features only. Here we present the full $4 \times 3$ grid covering all three attributes across all four models. For each model, attribute, and layer, the emergence trajectory is computed as follows:
\begin{enumerate}
    \item We generate $N$ samples (Table~\ref{tab:dynamics_per_model}) with SAE hooks active at every denoising step, recording per-step activations for the top-20 contrastive features per attribute per layer.
    \item We filter to classifier-confident samples (Table~\ref{tab:dynamics_thresholds}) and average across retained samples to obtain a per-step mean activation curve $\bar{h}(t)$.
    \item We compute absolute deviation from baseline: $|\bar{h}(t) - \bar{h}(0)|$, which captures emergence regardless of whether the mean activation increases or decreases during denoising.
    \item We apply moving-average smoothing with model-specific window sizes (MDLM/SEDD: no smoothing for 1024-step trajectories since they are sufficiently smooth; LLaDA: window${}=5$ for 64-step trajectories; DREAM: window${}=9$ for 256-step trajectories) to reduce per-step noise while preserving emergence timing.
    \item We normalize each curve to $[0, 1]$ via min-max scaling, so that $0$ corresponds to baseline (fully masked) and $1$ to full emergence.
\end{enumerate}

Each panel in Figure~\ref{fig:emergence_all_attrs} shows a single model--attribute combination, with one line per SAE layer. This reveals both the \emph{timing} of emergence (how early or late features activate) and the \emph{depth gradient} (whether shallow or deep layers commit first).

\begin{figure}[h]
\centering
\includegraphics[width=\textwidth]{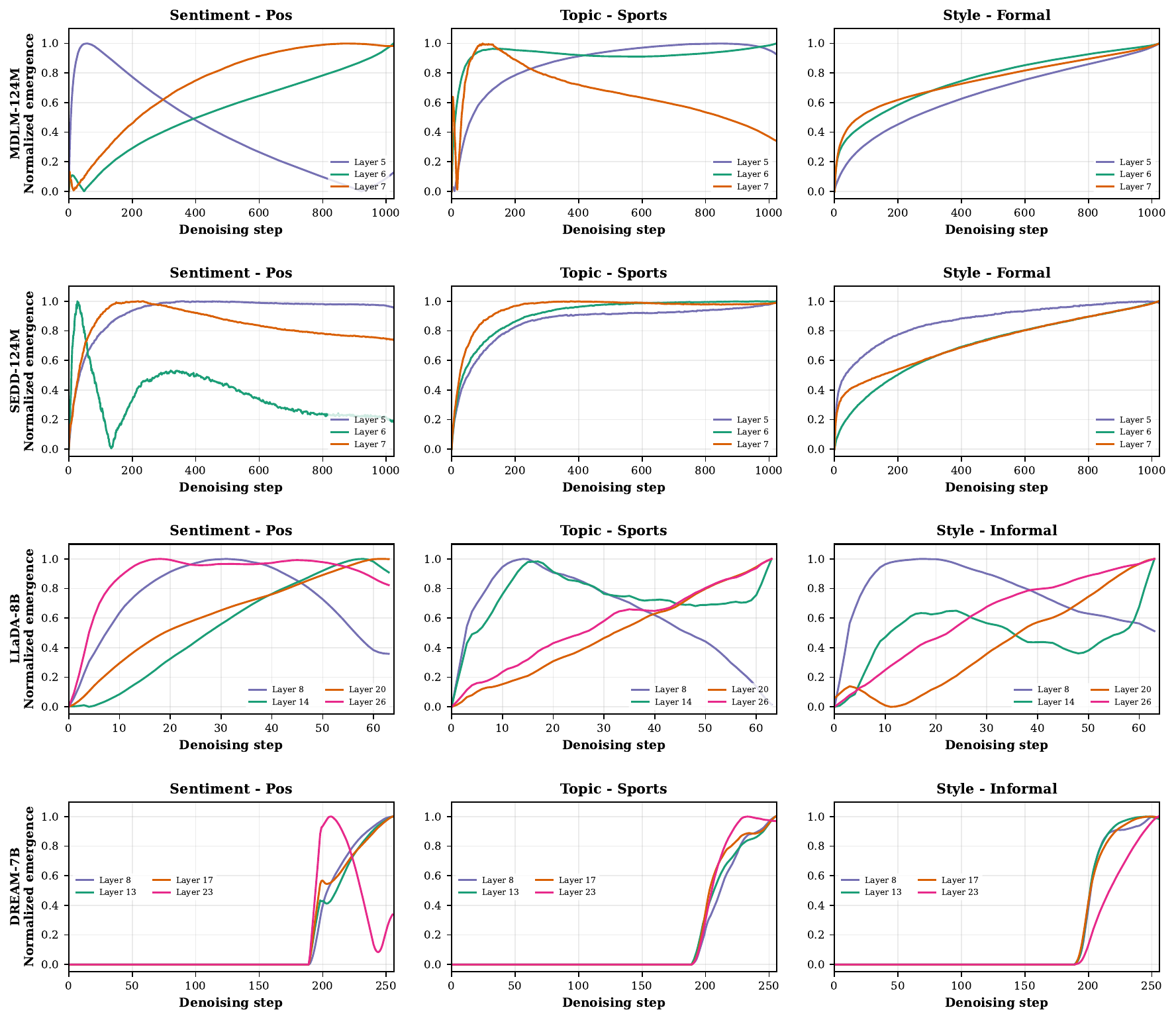}
\caption{Emergence trajectories for all four models and all three attributes (positive sentiment, sports topic, formal/informal style). Each panel shows normalized absolute deviation $|\bar{h}(t) - \bar{h}(0)|$ from baseline across the full denoising trajectory, with one line per SAE layer. MDLM and SEDD (top two rows) use 1024 denoising steps; LLaDA uses 64 steps; DREAM uses 256 steps. The main paper Figure~\ref{fig:compact_emergence} shows the topic column only.}
\label{fig:emergence_all_attrs}
\end{figure}

\subsection{Block Fractions Across All Layers}
\label{app:block_fractions_all_layers}

\paragraph{Methodology.} The main paper (Figure~\ref{fig:compact_blockfrac}) shows block fractions at the deepest layer per model only. Here we present block fractions at \emph{every} SAE layer, revealing how the temporal distribution of emergence shifts with depth. Block fractions are computed as follows:
\begin{enumerate}
    \item The denoising trajectory is divided into $K{=}8$ equal temporal blocks. For MDLM/SEDD (1024 steps), each block spans 128 steps; for LLaDA (64 steps), 8 steps; for DREAM (256 steps), 32 steps.
    \item For each block $b$, we compute the activation increase from the block's start to its end: $\delta_b = \bar{h}(e_b) - \bar{h}(s_b)$, where $s_b$ and $e_b$ are the first and last steps of block $b$.
    \item Negative deltas are clamped to zero: $\delta_b^+ = \max(0, \delta_b)$. This prevents blocks where activation temporarily decreases (e.g., due to non-monotonic trajectories) from receiving negative weight.
    \item Block fractions are computed by normalizing: $f_b = \delta_b^+ / \sum_{b'} \delta_{b'}^+$, so they sum to one.
\end{enumerate}

Figure~\ref{fig:block_fractions_all} presents the full results, organized as a 4-row grid (one row per model, one column per layer). The deepest-layer panels correspond to the main paper's Figure~\ref{fig:compact_blockfrac}.

\begin{figure}[h]
\centering
\includegraphics[width=\textwidth]{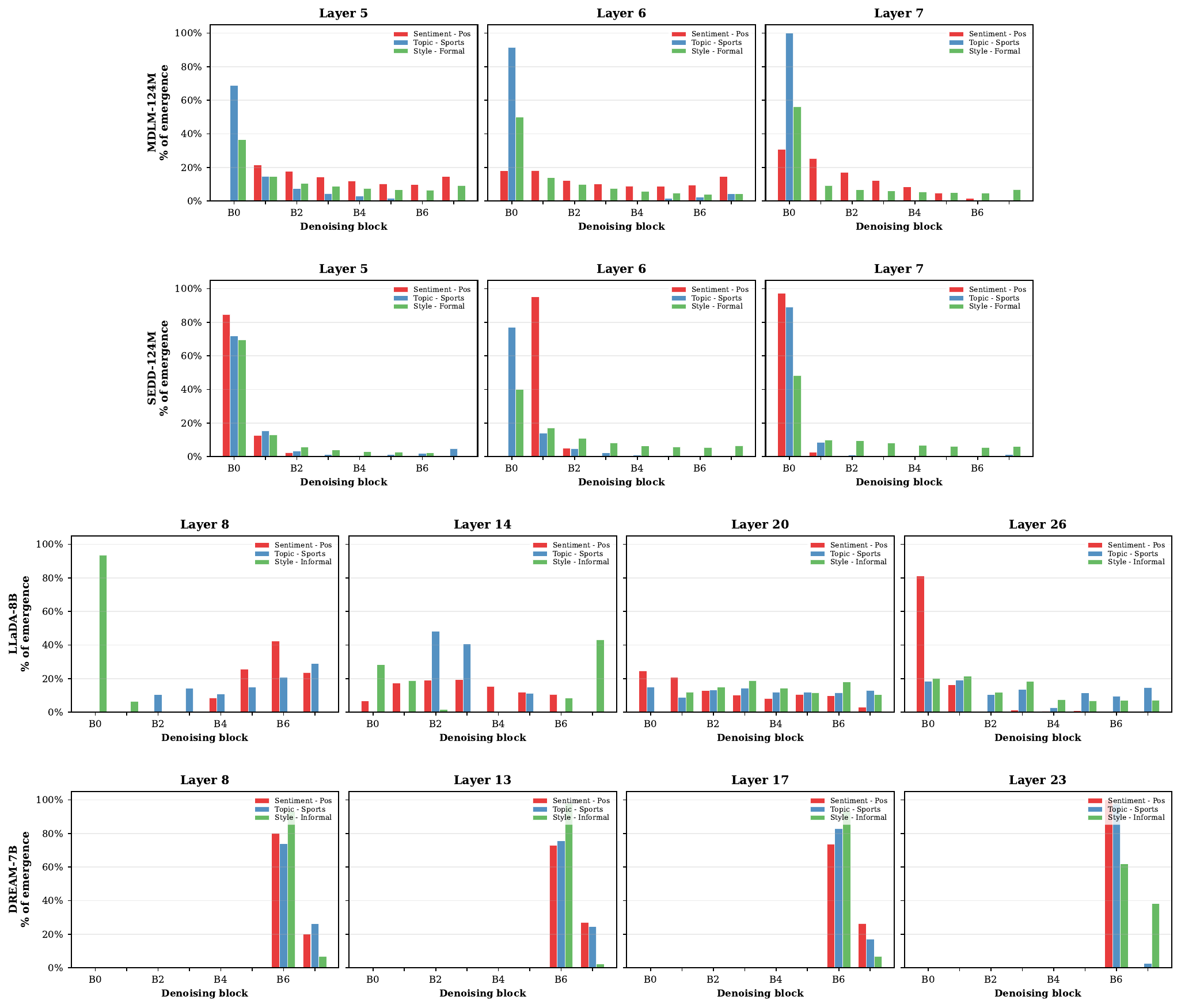}
\caption{Block fractions across all SAE layers for all four models. Each row corresponds to a model; each column to a layer. Comparing columns within a row reveals how the temporal distribution of emergence shifts with layer depth. MDLM and SEDD (rows~1--2, 3 layers each) show clear depth gradients; LLaDA (row~3, 4 layers) shows mild variation; DREAM (row~4, 4 layers) is uniformly late across all layers.}
\label{fig:block_fractions_all}
\end{figure}

\subsection{Cumulative Effect Size Across Models and Layers}
\label{app:effect_size_all}

\paragraph{Methodology.} The main paper (Figure~\ref{fig:effect_size}) compares cumulative effect size between MDLM and SEDD at layer~7 only. Here we present cumulative $|d|$ curves across all layers and all four models. For each model, attribute, and layer, the procedure is:
\begin{enumerate}
    \item We load the full set of $d_\text{SAE}$ Cohen's $d$ values computed during contrastive feature extraction (\S\ref{app:extraction_procedure}).
    \item We take the absolute value $|d_j|$ and sort features in descending order.
    \item We plot the cumulative sum $\sum_{i=1}^{r} |d_{(i)}|$ versus feature rank $r$ for the top~50 features.
\end{enumerate}

The cumulative $|d|$ curve captures how discriminative signal is distributed across features: a steep initial rise indicates that a few features carry most of the signal (concentrated representation), while a gradual rise indicates distributed encoding across many features. Figure~\ref{fig:sparsity_all} presents the full $4 \times (3\text{--}4)$ grid.

\begin{figure}[h]
\centering
\includegraphics[width=\textwidth]{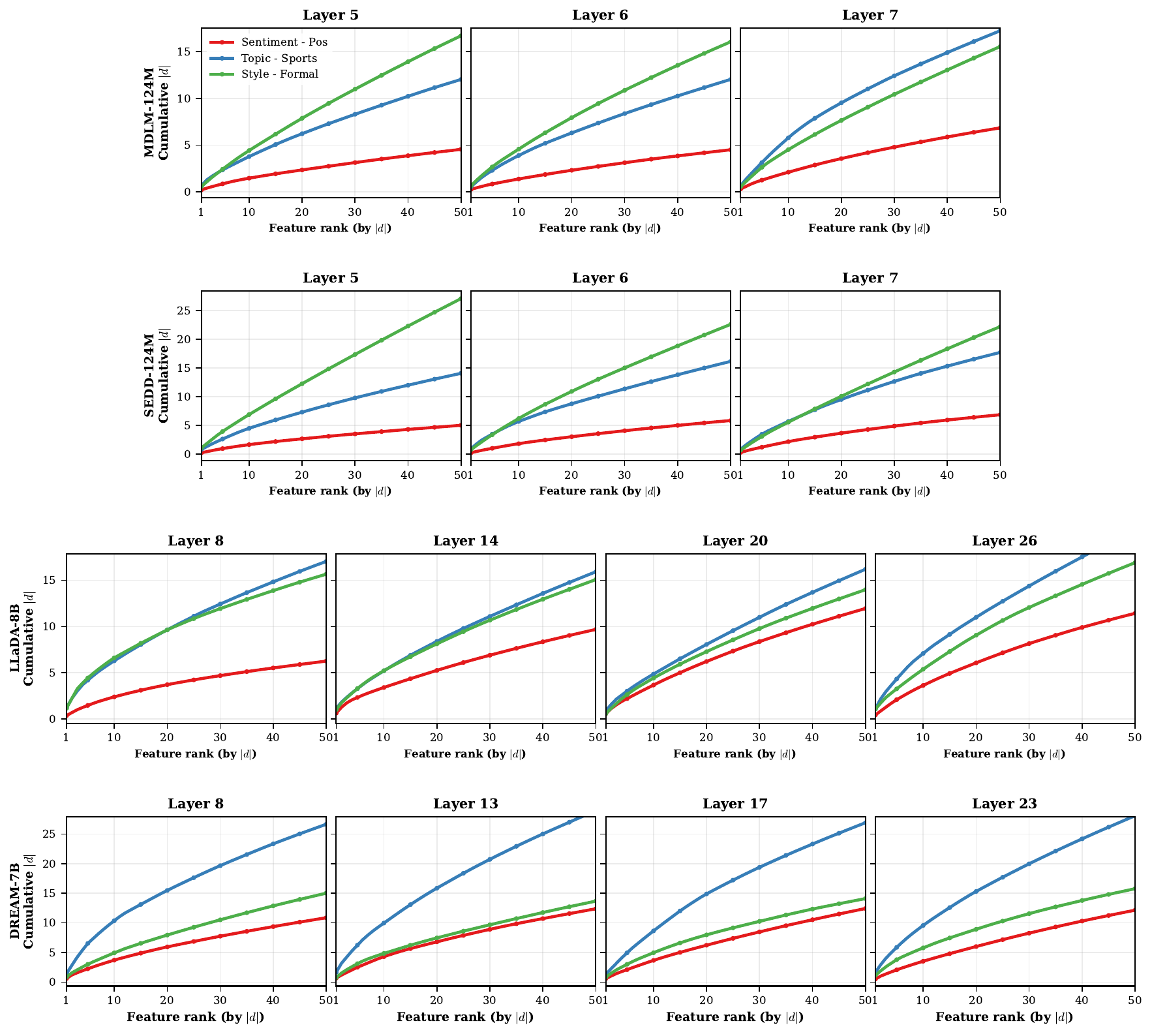}
\caption{Cumulative $|$Cohen's $d|$ vs.\ feature rank for all four models and all SAE layers. Each row is a model (MDLM-124M, SEDD-124M, LLaDA-8B, DREAM-7B); each column is a layer. Features are sorted by $|d|$ in descending order and the top~50 are shown. Topic features (blue) consistently dominate, accumulating the most signal per feature rank. Attributes: positive sentiment (red), sports topic (blue), and formal/informal style (green).}
\label{fig:sparsity_all}
\end{figure}

\subsection{Anticipatory vs.\ Reactive Encoding}
\label{app:anticipatory_analysis}

\paragraph{Methodology.} To understand whether contrastive SAE features encode attribute information \emph{before} a token is revealed (anticipatory) or \emph{after} (reactive), we partition feature activations at each denoising step by mask status. For each step $t$, each token position is either masked (not yet revealed) or unmasked (already decoded). We compute separate mean activations over the top-20 contrastive features for each pool:
\begin{itemize}
    \item $\bar{h}^{\text{mask}}(t)$: mean activation across masked positions.
    \item $\bar{h}^{\text{unmask}}(t)$: mean activation across unmasked positions.
\end{itemize}
The \emph{masked fraction} is then $\bar{h}^{\text{mask}}(t) / (\bar{h}^{\text{mask}}(t) + \bar{h}^{\text{unmask}}(t))$. Values above~0.5 indicate \emph{anticipatory} encoding: features fire preferentially on positions where no token has been revealed yet, suggesting the model is ``planning ahead'' for what will appear there. Values below~0.5 indicate \emph{reactive} encoding: features respond to tokens that have already been placed. Smoothing is applied with the same model-specific window sizes as for emergence curves (MDLM/SEDD: window${}=31$; LLaDA: window${}=5$; DREAM: window${}=9$).

Figure~\ref{fig:anticipatory_reactive} shows the masked fraction across denoising progress (0--100\%) for each attribute and layer on all four models.

\paragraph{Discussion.}

\begin{itemize}
    \item \textbf{MDLM.} Topic is consistently anticipatory across all layers (${\sim}$0.55--0.80 early on), with the strongest signal at layer~7. Sentiment is strongly reactive at layer~5 (starting at ${\sim}$0.10) and progressively less so at deeper layers; layer~7 shows anticipatory sentiment early (${\sim}$0.65). Formality exhibits a similar depth gradient: reactive at layer~5, mixed at deeper layers.
    \item \textbf{SEDD.} Qualitatively similar to MDLM---topic is anticipatory, sentiment is reactive at shallow layers---but with noisier trajectories and compressed dynamic range. This reflects SEDD's score-entropy training objective, which produces smaller absolute SAE activations: the masked fraction ratio remains meaningful (topic is still above 0.5, sentiment below), but the signal-to-noise ratio is lower because both masked and unmasked activations are closer to the ReLU threshold.
    \item \textbf{LLaDA.} Topic is anticipatory at layers~8--14, then transitions to reactive at deeper layers. Sentiment and informality are mildly reactive across most layers. Layer~26 shows anticipatory sentiment.
    \item \textbf{DREAM.} All attributes remain near 0.5 for the first ${\sim}$75\% of denoising, consistent with the absence of feature emergence during this phase (\S\ref{sec:temporal_dynamics}). Late-stage differentiation is minimal, with slight reactive shifts for topic at layers~8 and~23.
    \item \textbf{Convergence toward equilibrium.} All anticipatory signals decay toward 0.5 as denoising progresses and the mask ratio drops. This is expected: as most positions become unmasked, the distinction between masked and unmasked pools diminishes.
\end{itemize}

The anticipatory--reactive distinction has implications for steering: anticipatory features are active early and on positions that will be filled later, suggesting that intervening on these features during early denoising steps can influence the content of tokens not yet decoded. Reactive features, by contrast, are most active on already-visible tokens, making them suitable for reinforcing or modifying existing content in later denoising stages.

\begin{figure}[h]
\centering
\includegraphics[width=\textwidth]{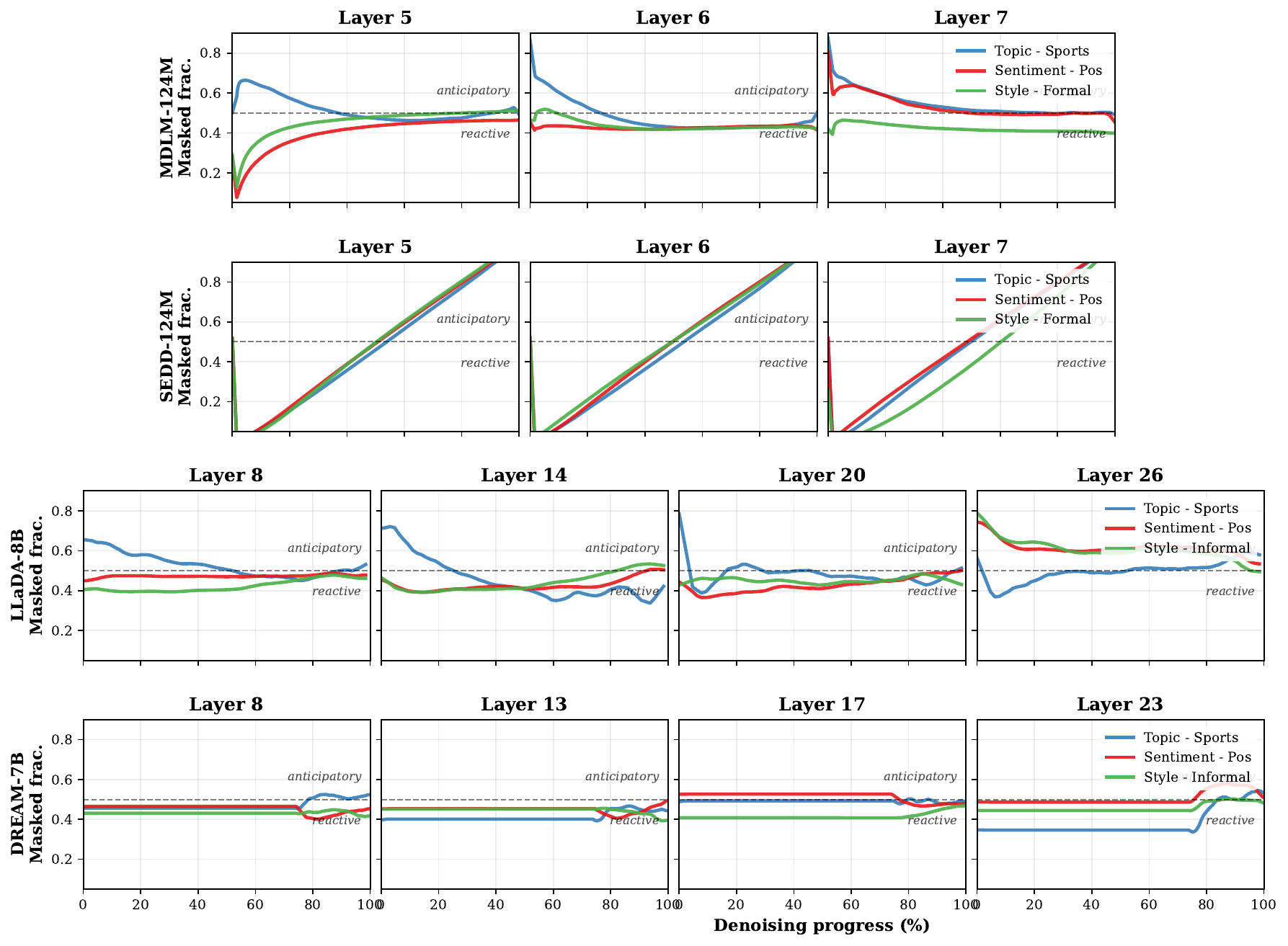}
\caption{Masked fraction of SAE feature activation vs.\ denoising progress for all four models and layers. Values above the dashed line (0.5) indicate \emph{anticipatory} encoding; values below indicate \emph{reactive} encoding. Each row is a model; each column is a layer. Attributes studied: positive sentiment (red), sports topic (blue), and formal/informal style (green). SEDD shows qualitatively similar patterns to MDLM but with compressed dynamic range due to smaller absolute SAE activations from score-entropy training (see text).}
\label{fig:anticipatory_reactive}
\end{figure}

\subsection{Cross-Model Feature Cosine Similarity}
\label{app:cross_model_cosine}

This section details the methodology and full results behind the cross-model feature alignment analysis.
The goal is to test whether models that share a training objective learn geometrically aligned SAE features, even when they differ in architecture and scale.

\paragraph{Comparison pairs.}
We compare three model pairs that isolate the effect of training objective from architecture and scale:

\begin{enumerate}[label=(\alph*)]
    \item \textbf{MDLM $\leftrightarrow$ SEDD} (124M $\leftrightarrow$ 124M): same GPT-2 architecture, same OpenWebText training data, \emph{different} training objectives (absorbing diffusion vs.\ score-entropy diffusion).
    \item \textbf{MDLM $\leftrightarrow$ Dream} (124M $\leftrightarrow$ 7B): \emph{same} absorbing-diffusion loss family, different architectures (GPT-2 vs.\ Transformer-XL variant), $56\times$ scale difference.
    \item \textbf{MDLM $\leftrightarrow$ LLaDA} (124M $\leftrightarrow$ 8B): \emph{same} absorbing-diffusion loss family, different architectures (GPT-2 vs.\ LLaMA-based), $64\times$ scale difference.
\end{enumerate}

\noindent For each pair, we select the top-50 features by $|\text{Cohen's } d|$ for sentiment and topic attributes from each model's SAE.

\paragraph{Method 1: Direct decoder-space comparison (same hidden dimension).}
When both models share the same hidden dimension $d$ (MDLM and SEDD, both $d{=}768$), we compare decoder directions directly:
\begin{equation}
    \mathrm{sim}(f_i^A, f_j^B) = \cos\!\big(\mathbf{w}^A_{\mathrm{dec},i},\; \mathbf{w}^B_{\mathrm{dec},j}\big),
\end{equation}
where $\mathbf{w}_{\mathrm{dec},i} \in \mathbb{R}^d$ is the $i$-th row of $\mathbf{W}_\mathrm{dec}$, $\ell_2$-normalized.
We compute the full $50 \times 50$ similarity matrix across each model's top features and extract, for each feature in model $A$, its \emph{best-match} similarity $\max_j \mathrm{sim}(f_i^A, f_j^B)$.
The ``mean best-match similarity'' averages these 50 best-match values.

This comparison is valid because both SAEs operate in the same residual-stream vector space: both models share the GPT-2 architecture and the SAEs are trained on activations at corresponding layers (L5, L6, L7).

\paragraph{Method 2: Vocabulary-projected comparison (different hidden dimensions).}
When hidden dimensions differ (MDLM $d{=}768$ vs.\ Dream $d{=}3584$ or LLaDA $d{=}4096$), direct decoder comparison is not possible.
Instead, we project each decoder direction through the model's unembedding matrix $\mathbf{W}_\mathrm{unembed}$ into a shared vocabulary space:
\begin{equation}
    \mathbf{v}_i^A = \frac{\hat{\mathbf{w}}^A_{\mathrm{dec},i} \cdot {\mathbf{W}^A_\mathrm{unembed}}^\top}{\big\|\hat{\mathbf{w}}^A_{\mathrm{dec},i} \cdot {\mathbf{W}^A_\mathrm{unembed}}^\top\big\|},
    \qquad
    \mathrm{sim}(f_i^A, f_j^B) = \mathbf{v}_i^A \cdot \mathbf{v}_j^B,
    \label{eq:vocab_proj_sim}
\end{equation}
where $\hat{\mathbf{w}}_{\mathrm{dec},i}$ denotes the $\ell_2$-normalized decoder row and $\mathbf{v}_i^A \in \mathbb{R}^{|V_\mathrm{shared}|}$ is the projected, renormalized vector restricted to shared vocabulary tokens.
The shared vocabulary consists of all tokens present in both models' tokenizers ($|V_\mathrm{shared}| = 42{,}284$ for MDLM--Dream; $42{,}450$ for MDLM--LLaDA).

This comparison is inherently \emph{asymmetric}: the best match for MDLM feature $f_i$ among Dream's features ($A \to B$ direction) need not coincide with the best match in the reverse direction ($B \to A$).
We report both directions separately and average them for the summary statistics.

\paragraph{Layer alignment.}
For the cross-architecture comparisons, we match layers by functional role based on prior analysis:
\begin{itemize}
    \item Sentiment: MDLM L5 $\leftrightarrow$ Dream L17 $\leftrightarrow$ LLaDA L26 (the layer with strongest sentiment features in each model).
    \item Topic: MDLM L6 $\leftrightarrow$ Dream L23 $\leftrightarrow$ LLaDA L26 (strongest topic features).
\end{itemize}
For MDLM $\leftrightarrow$ SEDD, both models share the same architecture, so we compare at matched layers (L5, L6, L7).

\paragraph{Full results.}

\begin{table}[h]
\centering
\small
\caption{Cross-model feature alignment. \emph{Best-match sim.} is the mean cosine similarity between each feature in model $A$ and its nearest neighbour in model $B$. For cross-architecture pairs, both search directions ($A{\to}B$ and $B{\to}A$) are shown. \emph{Overall mean} is the mean of the full $50 \times 50$ similarity matrix (a random-alignment baseline; expected $\approx 0$ for unrelated feature sets).}
\label{tab:cross_model_cosine_full}
\begin{tabular}{llcccc}
\toprule
Pair & Attribute & Best-match ($A{\to}B$) & Best-match ($B{\to}A$) & Max sim. & Overall mean \\
\midrule
\multirow{3}{*}{\shortstack[l]{MDLM $\leftrightarrow$ SEDD \\ \footnotesize(diff.\ loss, same arch.)}}
    & Sentiment (L5) & \multicolumn{2}{c}{0.079} & 0.133 & --- \\
    & Topic (L6)      & \multicolumn{2}{c}{0.081} & 0.116 & --- \\
    & Mixed (L7)      & \multicolumn{2}{c}{0.081} & 0.123 & --- \\
\midrule
\multirow{2}{*}{\shortstack[l]{MDLM $\leftrightarrow$ Dream \\ \footnotesize(same loss, diff.\ arch.)}}
    & Sentiment (L5$\leftrightarrow$L17) & 0.328 & 0.520 & 0.881 & $-$0.0001 \\
    & Topic (L6$\leftrightarrow$L23)     & 0.364 & 0.567 & 0.796 & 0.015 \\
\midrule
\multirow{2}{*}{\shortstack[l]{MDLM $\leftrightarrow$ LLaDA \\ \footnotesize(same loss, diff.\ arch.)}}
    & Sentiment (L5$\leftrightarrow$L26) & 0.233 & 0.211 & 0.543 & $-$0.0004 \\
    & Topic (L6$\leftrightarrow$L26)     & 0.329 & 0.245 & 0.705 & 0.007 \\
\bottomrule
\end{tabular}
\end{table}

\paragraph{Key observations.}

\begin{itemize}
    \item \textbf{Different loss $\Rightarrow$ orthogonal features.}
    MDLM and SEDD features show mean best-match cosine of $0.08$ across all three layers---indistinguishable from random alignment in 768-dimensional space (expected random cosine $\approx 1/\sqrt{d} \approx 0.036$, with the slightly higher observed values attributable to both models learning \emph{some} shared structure from the same training data).
    The maximum pairwise similarity is only 0.133, confirming that no individual feature pair is aligned.

    \item \textbf{Same loss $\Rightarrow$ substantial alignment despite scale gap.}
    Same-loss-family pairs show $3{-}6\times$ higher alignment.
    MDLM--Dream mean best-match similarity averages $0.44$ (across both directions and both attributes), with individual feature pairs reaching $0.88$ cosine.
    MDLM--LLaDA averages $0.25$ mean / $0.71$ max---lower than Dream, likely reflecting the greater architectural distance (LLaMA-based vs.\ GPT-2).

    \item \textbf{Directional asymmetry.}
    Dream$\to$MDLM best-match similarity (0.52, 0.57) consistently exceeds MDLM$\to$Dream (0.33, 0.36).
    This indicates that Dream's larger SAE ($K{=}14{,}336$, $d{=}3{,}584$) learns more specific features---each Dream feature maps cleanly to a single MDLM counterpart---while MDLM's smaller SAE ($K{=}12{,}288$, $d{=}768$) produces more diffuse features that partially match multiple Dream features.
    For MDLM--LLaDA, the asymmetry is smaller (0.23 vs.\ 0.21 for sentiment; 0.33 vs.\ 0.24 for topic), suggesting that LLaDA's features, while in a higher-dimensional space, are not as sharply specialized as Dream's.

    \item \textbf{Overall mean $\approx 0$ confirms no systematic bias.}
    The overall mean similarity (averaging all 2,500 entries in the $50 \times 50$ matrix) is effectively zero for all cross-architecture pairs ($|$mean$| < 0.02$), confirming that the high best-match values reflect genuine feature-level correspondence rather than a global bias in the projection space.

    \item \textbf{Topic features align more than sentiment.}
    Across both cross-architecture pairs, topic features show higher best-match similarity (MDLM--Dream: 0.47 vs.\ 0.42; MDLM--LLaDA: 0.29 vs.\ 0.22) and higher max similarity. This is consistent with topic being a more lexically grounded attribute: topic features project onto tight vocabulary clusters (e.g., ``touchdowns'', ``quarterback'' for Sports), which are more likely to be shared across models than the subtler evaluative vocabulary associated with sentiment.
\end{itemize}

\section{Steering Results with Standard Deviations}
\label{app:steering_detail}
The steering results with standard deviations for single attributes are shown in Table~\ref{tab:steering_single_detail} and results with standard deviations for multi-attributes are shown in Table~\ref{tab:steering_multi_detail}. The best operating-point $\alpha$ for each method is indicated as a superscript.

\input{figures_tables/table_steering_results_v2_detail_single}
\input{figures_tables/table_steering_results_v2_detail_multi}

\section{Steering Analysis and Ablations}
\label{app:steering_analysis}

\subsection{Steering Trade-off Curves}
\label{app:tradeoff_curves}

Figures~\ref{fig:tradeoff_mdlm_full}--\ref{fig:tradeoff_dream_full} present the full steering trade-off curves for all four models with all methods. Each figure shows target confidence (top) and quality metric (bottom) across the $\alpha$ sweep for all 7 attribute conditions. The main paper (Figure~\ref{fig:tradeoff_mdlm}) shows SAE methods only on MDLM.

\begin{figure}[h]
\centering
\includegraphics[width=\textwidth]{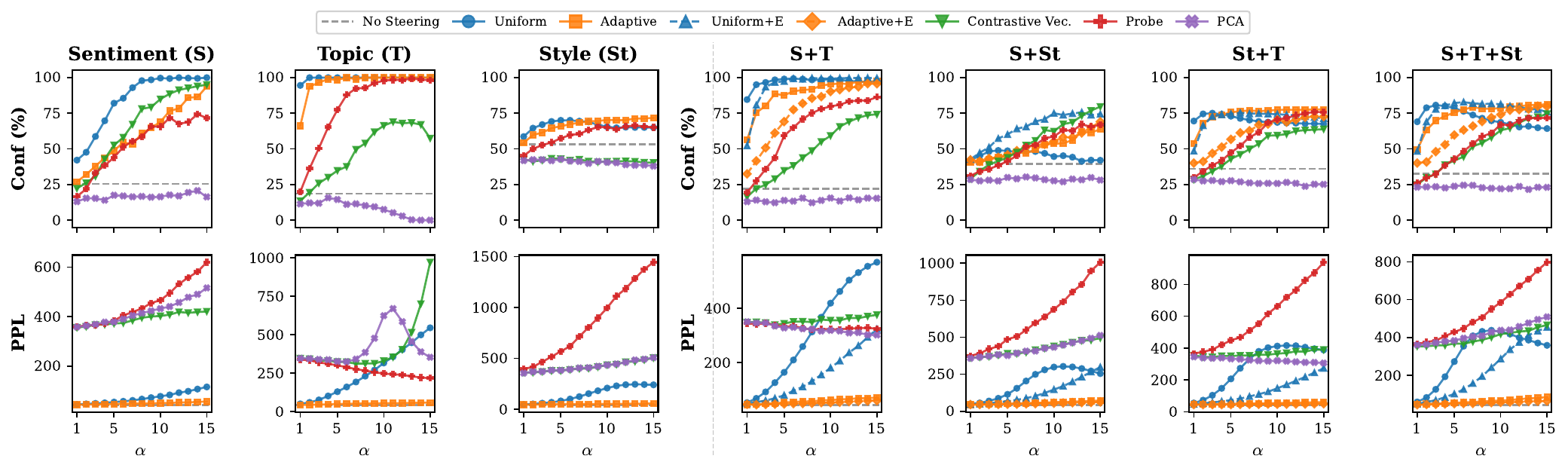}
\caption{MDLM-124M steering trade-off curves (all methods). Top: target confidence vs.\ $\alpha$. Bottom: GPT-2 perplexity vs.\ $\alpha$. MDLM shows the clearest Adaptive--Uniform separation: Adaptive (orange) matches or exceeds Uniform (blue) in steering while maintaining substantially lower PPL.}
\label{fig:tradeoff_mdlm_full}
\end{figure}

\begin{figure}[h]
\centering
\includegraphics[width=\textwidth]{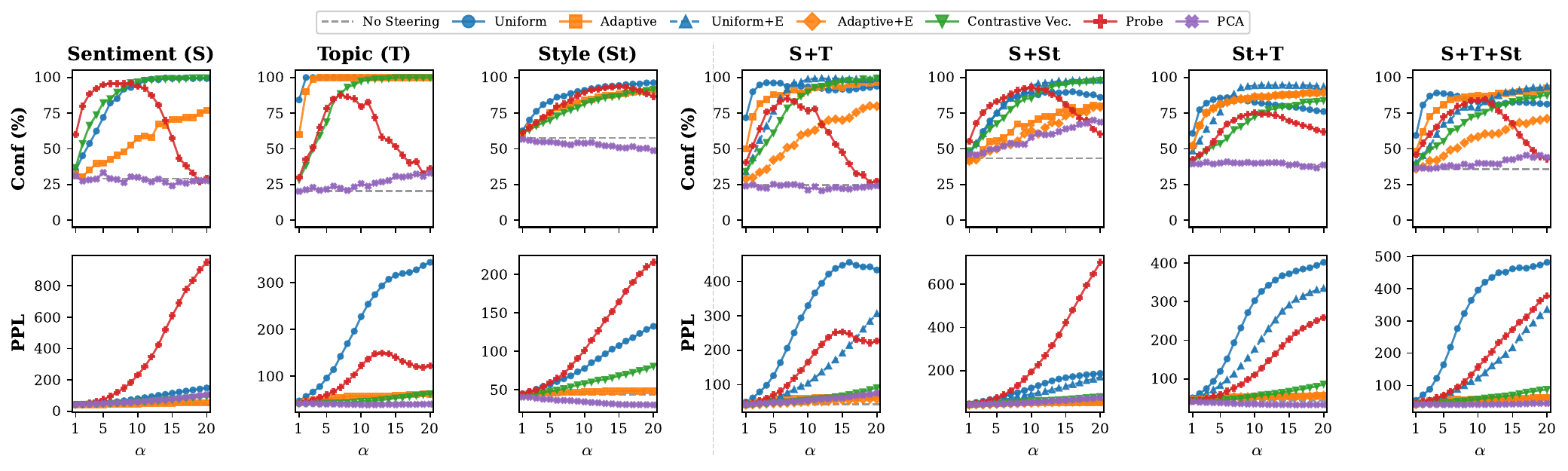}
\caption{SEDD-124M steering trade-off curves (all methods). Top: target confidence vs.\ $\alpha$. Bottom: GPT-2 perplexity vs.\ $\alpha$. Contrastive Vectors (green) matches SAE methods on sentiment and topic, consistent with SEDD's distributed feature geometry (\S\ref{sec:temporal_dynamics}).}
\label{fig:tradeoff_sedd_full}
\end{figure}

\begin{figure}[h]
\centering
\includegraphics[width=\textwidth]{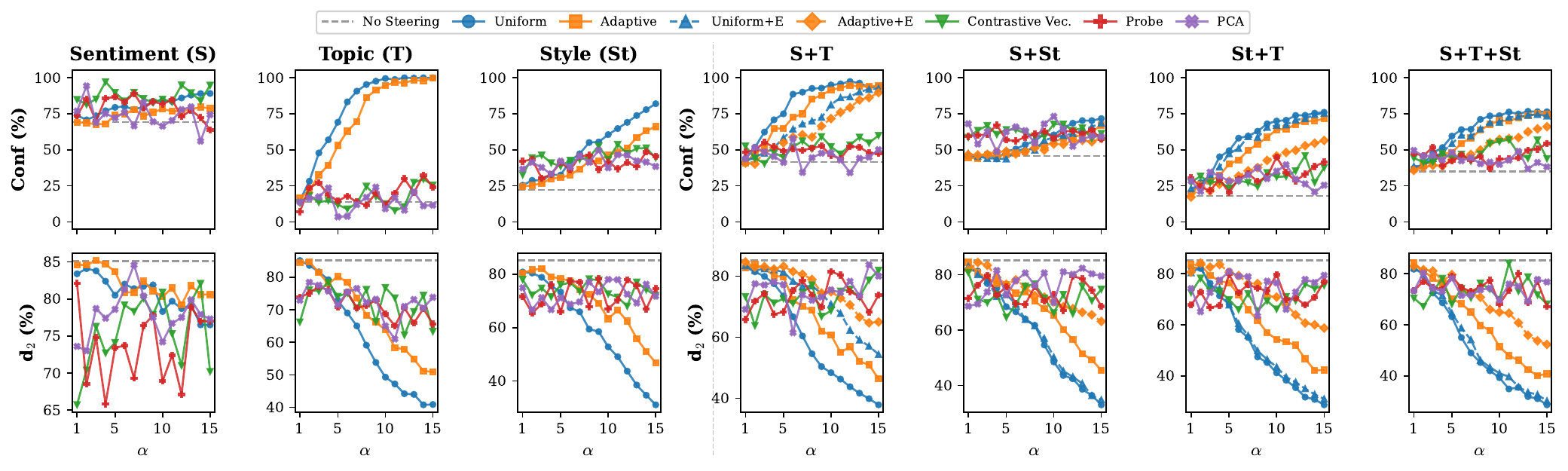}
\caption{LLaDA-8B steering trade-off curves (all methods). Top: target confidence vs.\ $\alpha$. Bottom: distinct bigram ratio (dist-2) vs.\ $\alpha$; we report dist-2 rather than PPL because LLaDA tends to generate repetitive text, making low PPL misleading. SAE methods consistently outperform baselines, with the gap largest for topic and multi-attribute conditions.}
\label{fig:tradeoff_llada_full}
\end{figure}

\begin{figure}[h]
\centering
\includegraphics[width=\textwidth]{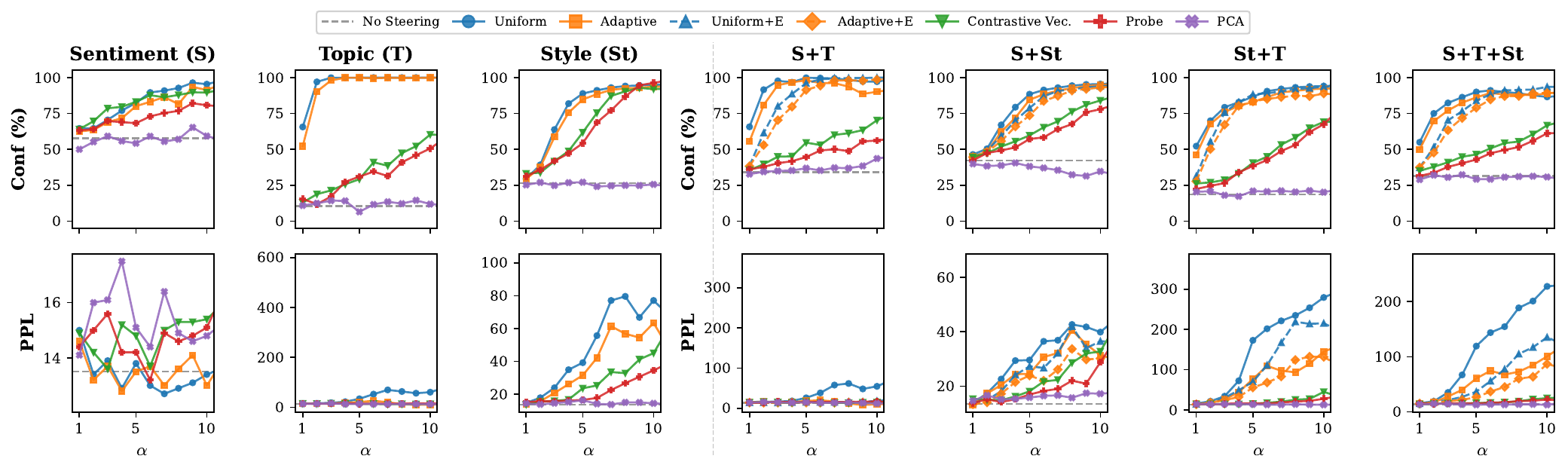}
\caption{DREAM-7B steering trade-off curves (all methods). Top: target confidence vs.\ $\alpha$. Bottom: GPT-2 perplexity vs.\ $\alpha$. DREAM achieves near-perfect steering at moderate $\alpha$ but exhibits the sharpest quality degradation, consistent with its concentrated late-block emergence.}
\label{fig:tradeoff_dream_full}
\end{figure}

\subsection{Feature Count Ablation}
\label{app:feature_count_ablation}

We steer sentiment toward positive using $N \in \{1, 2, 5, 10, 20, 50\}$ features per direction with Uniform steering at $\alpha{=}5$ and report the positive-direction classifier confidences for all three attributes. Table~\ref{tab:nf_ablation} shows that increasing $N$ raises target sentiment confidence but also increases interference on non-target attributes (topic and style). At $N{=}20$, all models achieve strong sentiment control (71--84\%) with modest interference; at $N{=}50$, SEDD formality shifts from 57.7\% baseline to 94.5\% and MDLM formality rises to 67\%. We therefore use $N{=}20$ features throughout.

\begin{table}[h]
\centering
\caption{Feature count ablation: positive-direction sentiment steering at $\alpha{=}5$, varying the number of features $N$. Pos = positive sentiment confidence (\%), Spt = sports topic confidence (\%), Frm = formality confidence (\%).}
\label{tab:nf_ablation}
\resizebox{\textwidth}{!}{%
\small
\begin{tabular}{l c ccc ccc ccc ccc}
\toprule
& & \multicolumn{3}{c}{\textbf{MDLM-124M}} & \multicolumn{3}{c}{\textbf{SEDD-124M}} & \multicolumn{3}{c}{\textbf{DREAM-7B}} & \multicolumn{3}{c}{\textbf{LLaDA-8B}} \\
\cmidrule(lr){3-5} \cmidrule(lr){6-8} \cmidrule(lr){9-11} \cmidrule(lr){12-14}
$N$ & & Pos & Spt & Frm & Pos & Spt & Frm & Pos & Spt & Frm & Pos & Spt & Frm \\
\midrule
\multicolumn{2}{l}{\textit{No Steering}} & 25.5 & 18.6 & 53.3 & 29.0 & 20.3 & 57.7 & 57.7 & 10.5 & 73.5 & 69.1 & 13.9 & 77.8 \\
\midrule
1  & & 41.5 & 20.0 & 51.0 & 29.0 & 23.0 & 61.5 & 69.5 & 7.5 & 82.5 & 73.5 & 7.0 & 81.5 \\
2  & & 50.5 & 23.5 & 50.0 & 37.5 & 26.0 & 66.5 & 70.0 & 8.0 & 82.0 & 72.5 & 9.0 & 80.0 \\
5  & & 65.0 & 22.0 & 46.0 & 53.5 & 24.5 & 77.5 & 78.0 & 6.5 & 85.0 & 75.0 & 8.0 & 83.5 \\
10 & & 75.0 & 26.5 & 51.5 & 65.0 & 20.5 & 80.0 & 80.5 & 5.0 & 90.5 & 79.0 & 7.0 & 83.5 \\
\rowcolor[gray]{0.92}
20 & & 84.0 & 25.0 & 55.5 & 71.5 & 20.0 & 90.0 & 82.5 & 4.0 & 92.5 & 80.0 & 3.5 & 82.0 \\
50 & & 92.0 & 18.0 & 67.0 & 88.5 & 17.5 & 94.5 & 89.5 & 7.5 & 91.5 & 78.0 & 6.0 & 84.5 \\
\bottomrule
\end{tabular}%
}
\end{table}

\subsection{Cross-Attribute Interference}
\label{app:interference}

For each method at its Table~\ref{tab:steering_results_v2} operating point, we measure cross-attribute interference as the mean absolute percentage-point deviation of non-target attribute classifier confidences from their unsteered baselines. Given $m$ attributes steered simultaneously, the interference for method $M$ is $\frac{1}{m}\sum_{a \notin \text{target}} |c_a^{(M)} - c_a^{(\text{unsteered})}|$, where $c_a$ is the classifier confidence for attribute $a$. Figure~\ref{fig:interference} reports this metric across all models and steering conditions.

\begin{figure}[h]
\centering
\includegraphics[width=\textwidth]{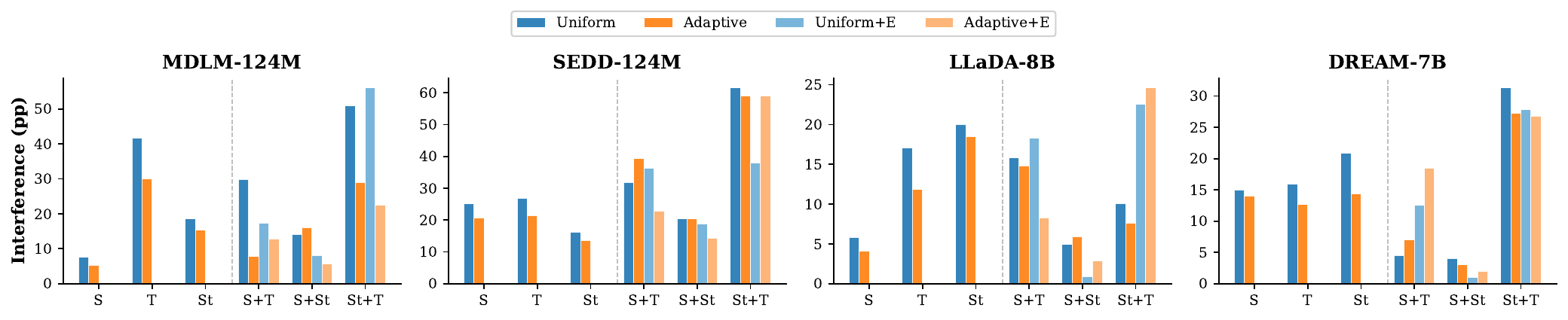}
\caption{Cross-attribute interference at the Table~\ref{tab:steering_results_v2} operating points: mean absolute pp deviation of non-target attributes from unsteered baselines. Adaptive (orange) consistently produces less interference than Uniform (blue), with the largest gap on MDLM.}
\label{fig:interference}
\end{figure}

\subsection{Latency Overhead}
\label{app:latency}

Table~\ref{tab:latency} reports per-sample wall-clock times for unsteered generation, SAE steering (uniform schedule, $\alpha{=}5$), and contrastive-vector steering, measured over 20 single-sample generations per method after a warm-up pass.

\begin{table}[h]
\centering
\caption{Generation latency (seconds per sample, batch size 1). SAE overhead is the ratio of SAE steering time to baseline. CV = contrastive vector.}
\label{tab:latency}
\small
\begin{tabular}{lcccccc}
\toprule
\textbf{Model} & \textbf{Steps} & \textbf{Baseline (s)} & \textbf{SAE (s)} & \textbf{SAE Overhead} & \textbf{CV (s)} & \textbf{CV Overhead} \\
\midrule
MDLM (124M)  & 1024 & 25.66 & 30.07 & 1.17$\times$ & 25.73 & 1.00$\times$ \\
SEDD (124M)  & 1024 & 21.49 & 22.55 & 1.05$\times$ & 22.15 & 1.03$\times$ \\
DREAM (7B)   & 256  & 7.52  & 8.41  & 1.12$\times$ & 7.53  & 1.00$\times$ \\
LLaDA (8B)   & 64   & 0.83  & 1.25  & 1.51$\times$ & 0.81  & 0.97$\times$ \\
\bottomrule
\end{tabular}
\end{table}

SAE steering adds 5--51\% overhead depending on model size and step count. The overhead is lowest on SEDD (1.05$\times$), where the 1024-step analytic predictor dominates runtime and the SAE encode--decode ($768 \to 12{,}288 \to 768$) is negligible by comparison. MDLM (1.17$\times$) has a simpler per-step update, so the SAE cost is proportionally larger. For the 7B-scale models, DREAM (1.12$\times$, 256 steps) amortizes the SAE cost across expensive transformer forward passes, while LLaDA (1.51$\times$, 64 steps) has the highest overhead because each of its few steps is fast (${\sim}$13ms), making the per-step SAE cost ($4096 \to 16{,}384 \to 4096$ across 4 layers) a larger fraction. Contrastive-vector steering adds negligible overhead (${\sim}$1.00$\times$) since it requires only a single vector addition per layer. MDLM was benchmarked on an A10G GPU; all other models on A100.

\subsection{Qualitative Steering Examples}
\label{app:qualitative}

Tables~\ref{tab:qual_mdlm}--\ref{tab:qual_dream} show representative generations under triple-attribute steering (S+T+St) for each model, all four SAE steering modes, and baselines (Contrastive Vectors and Prompt where applicable). MDLM and SEDD are steered toward positive sentiment, sports topic, and formality; LLaDA and DREAM toward positive sentiment, sports topic, and informality. Texts are truncated; classifier confidences are from the off-the-shelf classifiers described in \S\ref{app:evaluation}.

\begin{table}[h]
\centering
\caption{MDLM qualitative examples (triple-attribute: positive, sports, formal). S/T = sentiment/topic confidence (\%); Frm = formality confidence (\%).}
\label{tab:qual_mdlm}
\resizebox{\textwidth}{!}{%
\footnotesize
\begin{tabular}{lccccp{8.5cm}}
\toprule
\textbf{Method} & \textbf{S} & \textbf{T} & \textbf{Frm} & \textbf{PPL} & \textbf{Generated Text (truncated)} \\
\midrule
Unsteered & 1 & 0 & 62 & 43 & Its immigration reform bill, making reference again to immigration law Tuesday night{\ldots} The disagreements declared on cases on which judges are working are ``a case of man's versus a judge, the issues that arise that have to be heard by one court.'' \\
\midrule
Uniform & 100 & 100 & 47 & 259 & with also the former rivals side and the stars for the replacement and they was make a breakthrough at the end of 2011, moving to the series win which capped a five-match Premier career{\ldots} \\
Adaptive & 100 & 100 & 50 & 54 & ``I'm really happy, I'm thinking and looking forward to Southampton,'' Thomas this week told Sky Sport. ``I am always hungry to go home.'' The Shearsborough stadium won't have any ``New England'' colours against Coventry{\ldots} \\
Uniform+E & 100 & 100 & 51 & 101 & went back into form, making the Crystal Palace bench on Tuesday night and his hopes of winning the game on Sunday and defeat Manchester United. The former first-time winner and two-time ace culminated on the pre-match session{\ldots} \\
Adaptive+E & 99 & 100 & 40 & 51 & went deeper into future, making it again to take news Tuesday night when his team will win the AFC championship Sunday at Dick Bowl in Spokane. ``I'm going there at 1 p.m.\ tomorrow to try to see what happens,'' Bowl said. \\
\midrule
Contrastive Vec. & 100 & 0 & 37 & 329 & with also the fan t; and feed through for the fans and you with the fan letters to some of her characters, and then they started doing me and fronted and played us those episodes{\ldots} \\
\bottomrule
\end{tabular}%
}
\end{table}

\begin{table}[h]
\centering
\caption{SEDD qualitative examples (triple-attribute: positive, sports, formal). S/T = sentiment/topic confidence (\%); Frm = formality confidence (\%).}
\label{tab:qual_sedd}
\resizebox{\textwidth}{!}{%
\footnotesize
\begin{tabular}{lccccp{8.5cm}}
\toprule
\textbf{Method} & \textbf{S} & \textbf{T} & \textbf{Frm} & \textbf{PPL} & \textbf{Generated Text (truncated)} \\
\midrule
Unsteered & 80 & 0 & 30 & 29 & everybody, that's what happens if you know people are really getting the pressure there is you look at no one else they're busy with that. I'm like things keep on happening every day is that it is thrusting into my life{\ldots} \\
\midrule
Uniform & 99 & 100 & 67 & 312 & complete the third and 2nd all Mexico and scored a one goal that had the European League victories final of ten and scored the final that turned for victory of World title against Brazil USA's Ners Joao{\ldots} \\
Adaptive & 100 & 100 & 61 & 49 & Two games hard-won, Wade again in the fourth quarter and scored again with one tip of Jimmy Wingard's fade-away drive to draw a one minute tie with the Pacers. He scored twice{\ldots} \\
Uniform+E & 100 & 100 & 74 & 135 & great nation is case of it all, although there is one of the most sophisticated players of the sport, Diego Brasil, who was restored for a new level to have with a strong analytical background{\ldots} \\
Adaptive+E & 99 & 100 & 77 & 102 & of Jordi has emerged as the next president in each club and country, the extraordinary result is that Spain has re-established itself{\ldots} In an elegant football football held to the centre of the Catalan club victory{\ldots} \\
\midrule
Contrastive Vec. & 100 & 100 & 85 & 36 & in the bottom of the 10th inning was the night when Babe Ruth tied the National League Division scoreless with a three-run solo third, the first time his career led, but fell foul. The Pirates were 2 ahead of the San Francisco Giants{\ldots} \\
\bottomrule
\end{tabular}%
}
\end{table}

\begin{table}[h]
\centering
\caption{LLaDA qualitative examples (triple-attribute: positive, sports, informal). S/T = sentiment/topic confidence (\%); Inf = informality confidence (\%).}
\label{tab:qual_llada}
\resizebox{\textwidth}{!}{%
\footnotesize
\begin{tabular}{lccccp{8.5cm}}
\toprule
\textbf{Method} & \textbf{S} & \textbf{T} & \textbf{Inf} & \textbf{PPL} & \textbf{Generated Text (truncated)} \\
\midrule
Unsteered & 25 & 0 & 12 & 6 & a great question. I have been thinking about it for a while and I am trying to come up with a solution that would work for me. I have a lot of money and I want to invest it in real estate. \\
\midrule
Uniform & 1 & 100 & 93 & 6 & what you mean by ``after the game'' \;\; lol \;\; but not at the end of the game \;\; but not at the end of the game \;\; but not at the end of the game{\ldots} \\
Adaptive & 100 & 83 & 70 & 5 & a great idea for the game. I think it would be a great way to make the game more fun and competitive by giving the players a big advantage at the start of the game. \\
Uniform+E & 94 & 100 & 72 & 11 & a great way to get a kick out of the game. you can get the ball at the start of the game and then after the game you can get it back and throw it in the end zone. i was a hater in the past. \\
Adaptive+E & 100 & 99 & 75 & 3 & a great way to get to know the game. I think this is a great way to make friends. I think this is a great way to get to know the game. \\
\midrule
Contrastive Vec. & 97 & 100 & 20 & 8 & a good question. It's always difficult to predict the outcome of a match, but it's also difficult to predict the outcome of a game. I think the important thing is to be prepared for the match{\ldots} \\
Prompt & 100 & 2 & 0 & 9 & a good book for people who are interested in sports. It has a lot of information about different sports and how to play them. I like the way it is written because it is easy to understand{\ldots} \\
\bottomrule
\end{tabular}%
}
\end{table}

\begin{table}[h]
\centering
\caption{DREAM qualitative examples (triple-attribute: positive, sports, informal). S/T = sentiment/topic confidence (\%); Inf = informality confidence (\%).}
\label{tab:qual_dream}
\resizebox{\textwidth}{!}{%
\footnotesize
\begin{tabular}{lccccp{8.5cm}}
\toprule
\textbf{Method} & \textbf{S} & \textbf{T} & \textbf{Inf} & \textbf{PPL} & \textbf{Generated Text (truncated)} \\
\midrule
Unsteered & 0 & 0 & 1 & 14 & a good way for beginners to understand that if you have an array that is large enough, you can treat it as a queue. However, when you use an array as a queue, it is easy to reach the capacity of the queue{\ldots} \\
\midrule
Uniform & 100 & 100 & 77 & 144 & a match in 3 final the all match good there time time and they was great team up all good lost and win a match of the game and go won a win of the game they 2 game{\ldots} \\
Adaptive & 100 & 100 & 76 & 37 & a great match. Its a great match i love it and the match was 20-20 and when the match started was 2-3 they played alot of people thought they could be in the back of the match go 4-5{\ldots} \\
Uniform+E & 100 & 100 & 87 & 22 & a good game, but it needs to be more fun and not so boring it is more of a best ball game. 1st game ever. this game was really fun. playa 11. this game was so great{\ldots} \\
Adaptive+E & 100 & 100 & 84 & 21 & a good game, but it needs to be more fun and not so boring it is more of a basket ball game. 1st game ever. this game was good but i liked the first game better. This game was awesome i liked it it was fun and i like it even though it was hard. Great game \\
\midrule
Contrastive Vec. & 100 & 1 & 67 & 28 & a great way to get kids to think about how many days are in a week! I love the whole concept of how many days in a week and then i can ask them how many days in a week{\ldots} \\
Prompt & 100 & 99 & 65 & 11 & a way to do that. It's kind of a good example of how to do it. It's not a good example of how to do it, but I think it's a great example of how to do it{\ldots} \\
\bottomrule
\end{tabular}%
}
\end{table}

On MDLM and SEDD, Uniform steering achieves high target confidence but produces garbled text (PPL $>$250), while Adaptive matches confidence at 5--6$\times$ lower perplexity with coherent output. E-ratio calibration (Uniform+E, Adaptive+E) provides intermediate quality. On LLaDA and DREAM, all SAE modes successfully shift attributes from near-chance baselines while maintaining low perplexity; E-ratio variants achieve the highest simultaneous control across all three attributes. The DREAM unsteered example (formal, technical text with St$=$1\%) contrasts sharply with steered outputs that exhibit clear informality markers (lowercase, casual phrasing, slang).

\section*{Broader Impacts}

Mechanistic interpretability is a foundational tool for AI safety: models whose internal computations can be traced and steered are easier to audit, correct, and align. Our work extends this agenda to discrete diffusion language models, a class that has received less interpretability attention than autoregressive LLMs but is becoming competitive in capability. The framework supports legitimate downstream uses including content moderation, domain adaptation without retraining, and accessibility-oriented control of register or sentiment. The interpretability tools themselves let practitioners diagnose failures and understand where in the denoising trajectory particular attributes commit—information not visible from external behavior alone.

However, the same steering capability that enables beneficial control also enables manipulation: an adversary with white-box access could shift outputs toward biased or deceptive content while preserving fluency, in ways simple prompting cannot. We release no models, datasets, or steering pipelines targeting harmful attributes; trained SAEs and feature sets are intended for research on benign attributes and would need retraining for deployment

%% file: figures_tables/table_feature_vocab_sparsity_all_merged.tex
\begin{table*}[t]
\centering
\caption{Vocabulary grounding of top contrastive SAE features (via $\mathbf{W}_\text{dec} \cdot \mathbf{W}_\text{unembed}^\top$). For each model and attribute, the top-3 positive-$d$ and top-3 negative-$d$ features at the deepest SAE layer are shown.}
\label{tab:feature_vocab_all}
\small
\resizebox{\textwidth}{!}{%
\begin{tabular}{p{0.6cm}p{1cm}rlrl}
\toprule
\textbf{Model} & \textbf{Attribute} & \multicolumn{2}{c}{\textbf{Positive-$d$ (target class)}} & \multicolumn{2}{c}{\textbf{Negative-$d$ (contrast class)}} \\
\cmidrule(lr){3-4} \cmidrule(lr){5-6}
 & & \textbf{Feature ($d$)} & \textbf{Top Tokens} & \textbf{Feature ($d$)} & \textbf{Top Tokens} \\
\midrule
\multirow{9}{*}{\rotatebox[origin=c]{90}{MDLM (L7)}}
 & \multirow{3}{*}{Topic}
   & F6047 ($+$0.60) & midfield, Isles, winger, Trafford
   & F8014 ($-$0.67) & Securities, industrial, regulatory, commercial \\
 & & F7923 ($+$0.58) & mph, Guinness, Citation, maximum
   & F533 ($-$0.67) & 2030, 2020, average, 2025, 2050 \\
 & & F474 ($+$0.56) & quarterbacks, quarterback, tackles, linebackers
   & F6218 ($-$0.61) & Avg, [+, /\$, DAQ, avg \\
\cmidrule(lr){2-6}
 & \multirow{3}{*}{Sentiment}
   & F6624 ($+$0.17) & solid, ocally, composure, ACTION
   & F11410 ($-$0.30) & !!!!!, ???, !!!!, ……, !!! \\
 & & F5123 ($+$0.16) & DJs, eaturing, magic, creat
   & F10932 ($-$0.27) & my, idiot, NS, replies, CP \\
 & & F6921 ($+$0.15) & explan, overview, Learn, depth
   & F2643 ($-$0.27) & textures, texture, weird, Illusion, guts \\
\cmidrule(lr){2-6}
 & \multirow{3}{*}{Formality}
   & F1866 ($+$0.45) & reportedly, spokeswoman, watchdog, commissioner
   & F5013 ($-$0.56) & …, I, !, myself, ?! \\
 & & F8014 ($+$0.37) & Securities, industrial, industry, regulatory
   & F751 ($-$0.53) & …, ..., [...], …, […] \\
 & & F7544 ($+$0.33) & quished, Instit, Aether, org
   & F7258 ($-$0.53) & GOODMAN, —, — \\
\midrule
\multirow{9}{*}{\rotatebox[origin=c]{90}{SEDD (L7)}}
 & \multirow{3}{*}{Topic}
   & F606 ($+$0.84) & matchups, rematch, matchup, challenger, clinch
   & F7260 ($-$0.73) & payable, cash, dollars, money, fees \\
 & & F5606 ($+$0.63) & championships, jerseys, playoffs, playoff
   & F1914 ($-$0.66) & GDP, repairs, ppm, gallon, gallons \\
 & & F3948 ($+$0.45) & Featuring, teamed, eaturing, featuring
   & F9697 ($-$0.57) & liberal, government, neoliberal, privatization \\
\cmidrule(lr){2-6}
 & \multirow{3}{*}{Sentiment}
   & F10956 ($+$0.27) & morrow, thence, tion, splend
   & F3169 ($-$0.21) & abusers, porn, sexist, moms, Kids \\
 & & F3948 ($+$0.25) & Featuring, teamed, eaturing, featuring
   & F8644 ($-$0.20) & nobody, nothing, nothing, Nothing, THING \\
 & & F2014 ($+$0.24) & dynam, affordable, unbeat, appreciated
   & F4439 ($-$0.20) & disgusting, vile, despicable, crap, disgrace \\
\cmidrule(lr){2-6}
 & \multirow{3}{*}{Formality}
   & F12001 ($+$0.63) & [CJK], Participant, ortium, ilaterally
   & F9490 ($-$0.67) & pie, hell, —, damn, … \\
 & & F4051 ($+$0.58) & Treaty, NATO, treaty, milit, Yugoslavia
   & F11298 ($-$0.61) & ]], ín, tar, eq, › \\
 & & F9697 ($+$0.56) & liberal, government, neoliberal, privatization
   & F7306 ($-$0.55) & §§, Crossref, fmt, ÍÍ, nos \\
\midrule
\multirow{9}{*}{\rotatebox[origin=c]{90}{LLaDA (L26)}}
 & \multirow{3}{*}{Topic}
   & F9143 ($+$1.13) & Para, \zh{增量} (increment), \zh{晗}, \_event
   & F5553 ($-$0.80) & Oro, leness, experien, -war, Lah \\
 & & F1834 ($+$1.05) & Historic, atch, \zh{奥斯} (Aus-), stacked
   & F6163 ($-$0.68) & INCLUDING, IllegalStateException, Subscriptions \\
 & & F2897 ($+$0.63) & won, awarded, year, \zh{提名}, win
   & F1518 ($-$0.66) & -producing, \zh{资产负债} (balance sheet), produce \\
\cmidrule(lr){2-6}
 & \multirow{3}{*}{Sentiment}
   & F12180 ($+$0.43) & (always, commended, throughout, PERF
   & F5461 ($-$0.44) & Avoid, Avoid, avoided, avoid, AV \\
 & & F2833 ($+$0.38) & recommended, recommend, deserves, recommending
   & F10262 ($-$0.42) & throughout, whatsoever, anders, OEM \\
 & & F7427 ($+$0.29) & aras, bec, \zh{小小}, apical, \zh{笃}
   & F12851 ($-$0.42) & \zh{失败} (failure), \zh{失败的} (failed), amateur \\
\cmidrule(lr){2-6}
 & \multirow{3}{*}{Formality}
   & F8446 ($+$0.47) & monies, thru, angst, lite, youngster
   & F7446 ($-$1.07) & however, alot, barley, ,D, ect \\
 & & F13972 ($+$0.44) & idency, "Don, cretion, ener, narc
   & F12552 ($-$0.66) & sublicense, fucking, empire, fucked \\
 & & F12180 ($+$0.41) & (always, commended, throughout, PERF
   & F9058 ($-$0.61) & \}else, ",\&, ,const, ,\zh{免费}, .\zh{第} \\
\midrule
\multirow{9}{*}{\rotatebox[origin=c]{90}{DREAM (L23)}}
 & \multirow{3}{*}{Topic}
   & F6961 ($+$1.67) & zed, games, -sided, ided, 1
   & F14304 ($-$1.12) & company, angled, business, undergrad, (); \\
 & & F4856 ($+$1.28) & team, pall, Players, players, game
   & F13053 ($-$0.89) & computer, software, computers, Computer, data \\
 & & F7524 ($+$0.91) & team, Team, training, fre, trials
   & F6220 ($-$0.83) & union, union, unions, wright, SHA \\
\cmidrule(lr){2-6}
 & \multirow{3}{*}{Sentiment}
   & F10880 ($+$0.51) & simply, potency, consum, bullish, zen
   & F8145 ($-$0.51) & worse, S, literally, oa, Liter \\
 & & F5417 ($+$0.36) & and, all, each, is, even
   & F13133 ($-$0.34) & 0, 1, prohibits, V, ement \\
 & & F3924 ($+$0.32) & in, also, that, really, is
   & F6642 ($-$0.33) & worse, worst, -bottom, Worst, buffalo \\
\cmidrule(lr){2-6}
 & \multirow{3}{*}{Formality}
   & F4302 ($+$0.59) & , ; … ...
   & F10886 ($-$1.14) & people, like, all, and, out \\
 & & F4221 ($+$0.58) & Parliamentary, anyhow, programmes, forb
   & F11172 ($-$0.77) & abyte, oppos, imagin, Homeland, igy \\
 & & F5033 ($+$0.47) & cer, bloodstream, vintage, DNA, sap
   & F1918 ($-$0.70) & 1, 2, 0, 4, ' \\
\bottomrule
\end{tabular}%
}
\end{table*}

%% file: figures_tables/table_steering_results_v2_detail_single.tex
\begin{table}[t]
\centering
\caption{Single-attribute steering results with standard deviations. Format: mean$\pm$std. Superscripts on Conf denote the selected $\alpha$. Same quality gate and selection criteria as Table~\ref{tab:steering_results_v2}.}
\label{tab:steering_single_detail}
\tiny
\setlength{\tabcolsep}{1pt}
\begin{adjustbox}{max width=\textwidth}
\begin{tabular}{l|ccc|ccc|ccc}
\toprule
 & \multicolumn{3}{c}{Sentiment (S)} & \multicolumn{3}{c}{Topic (T)} & \multicolumn{3}{c}{Style (St)} \\
\cmidrule(lr){2-4}\cmidrule(lr){5-7}\cmidrule(lr){8-10}
 & Conf$\uparrow$ & PPL$\downarrow$ & d$_2$$\uparrow$ & Conf$\uparrow$ & PPL$\downarrow$ & d$_2$$\uparrow$ & Conf$\uparrow$ & PPL$\downarrow$ & d$_2$$\uparrow$ \\
\midrule
\multicolumn{10}{l}{\textbf{MDLM-124M}} \\
\quad \textit{No Steering} & $25.5{\pm}38.6$ & $43{\pm}12$ & $92.6{\pm}2.3$ & $18.6{\pm}38.6$ & $43{\pm}12$ & $92.6{\pm}2.3$ & $53.3{\pm}16.0$ & $43{\pm}12$ & $92.6{\pm}2.3$ \\
\quad Uniform & $99.9{\pm}0.3^{12}$ & $93{\pm}25$ & $93.7{\pm}1.7$ & $99.9{\pm}0.0^{2}$ & $59{\pm}13$ & $93.0{\pm}1.6$ & $70.0{\pm}7.6^{5}$ & $82{\pm}12$ & $93.8{\pm}1.2$ \\
\quad Adaptive & $93.7{\pm}20.0^{15}$ & $57{\pm}12$ & $93.6{\pm}1.6$ & $99.9{\pm}0.1^{6}$ & $50{\pm}11$ & $92.5{\pm}2.0$ & $71.7{\pm}9.2^{15}$ & $57{\pm}8$ & $93.3{\pm}1.5$ \\
\cmidrule(l){1-10}
\quad Contrastive Vec. & $95.0{\pm}16.3^{15}$ & $421{\pm}123$ & $92.5{\pm}5.1$ & $68.9{\pm}45.6^{11}$ & $354{\pm}117$ & $93.2{\pm}2.9$ & $43.3{\pm}16.2^{4}$ & $379{\pm}99$ & $95.6{\pm}2.4$ \\
\quad Probe & $74.3{\pm}36.6^{14}$ & $581{\pm}140$ & $96.4{\pm}1.6$ & $99.2{\pm}6.3^{13}$ & $230{\pm}54$ & $92.4{\pm}3.2$ & $66.4{\pm}11.3^{13}$ & $1285{\pm}226$ & $98.4{\pm}0.7$ \\
\quad PCA & $20.7{\pm}36.2^{14}$ & $490{\pm}120$ & $96.9{\pm}3.4$ & $15.7{\pm}36.0^{4}$ & $337{\pm}99$ & $94.3{\pm}2.9$ & $42.8{\pm}16.5^{5}$ & $383{\pm}94$ & $95.8{\pm}2.5$ \\
\midrule
\multicolumn{10}{l}{\textbf{SEDD-124M}} \\
\quad \textit{No Steering} & $29.0{\pm}41.0$ & $44{\pm}17$ & $92.5{\pm}2.5$ & $20.3{\pm}39.2$ & $44{\pm}17$ & $92.5{\pm}2.5$ & $57.7{\pm}19.1$ & $44{\pm}17$ & $92.5{\pm}2.5$ \\
\quad Uniform & $98.8{\pm}5.8^{12}$ & $94{\pm}18$ & $91.1{\pm}2.1$ & $100.0{\pm}0.0^{2}$ & $57{\pm}21$ & $91.5{\pm}2.6$ & $93.7{\pm}4.7^{13}$ & $97{\pm}16$ & $84.1{\pm}2.4$ \\
\quad Adaptive & $76.8{\pm}37.1^{20}$ & $55{\pm}12$ & $90.3{\pm}2.9$ & $100.0{\pm}0.0^{4}$ & $49{\pm}12$ & $90.7{\pm}3.0$ & $91.2{\pm}6.3^{20}$ & $48{\pm}7$ & $87.2{\pm}2.8$ \\
\cmidrule(l){1-10}
\quad Contrastive Vec. & $99.8{\pm}0.6^{18}$ & $88{\pm}23$ & $91.3{\pm}2.3$ & $99.9{\pm}0.0^{15}$ & $53{\pm}18$ & $91.5{\pm}2.2$ & $91.7{\pm}8.4^{20}$ & $81{\pm}14$ & $91.5{\pm}2.4$ \\
\quad Probe & $95.7{\pm}18.0^{6}$ & $96{\pm}33$ & $91.4{\pm}2.3$ & $87.8{\pm}31.8^{7}$ & $76{\pm}29$ & $92.8{\pm}2.3$ & $87.2{\pm}12.7^{9}$ & $91{\pm}21$ & $91.9{\pm}2.2$ \\
\quad PCA & $33.3{\pm}43.0^{5}$ & $51{\pm}14$ & $92.6{\pm}2.2$ & $32.9{\pm}46.4^{20}$ & $40{\pm}11$ & $92.4{\pm}2.8$ & $56.7{\pm}18.4^{1}$ & $40{\pm}11$ & $91.8{\pm}2.5$ \\
\midrule
\multicolumn{10}{l}{\textbf{LLaDA-8B}} \\
\quad \textit{No Steering} & $69.1{\pm}44.8$ & $16{\pm}18$ & $85.1{\pm}20.9$ & $13.9{\pm}31.3$ & $16{\pm}18$ & $85.1{\pm}20.9$ & $22.2{\pm}23.0$ & $16{\pm}18$ & $85.1{\pm}20.9$ \\
\quad Uniform & $89.0{\pm}29.8^{15}$ & $12{\pm}9$ & $76.5{\pm}22.9$ & $99.9{\pm}0.6^{12}$ & $6{\pm}2$ & $44.2{\pm}13.3$ & $69.0{\pm}24.5^{12}$ & $7{\pm}6$ & $43.7{\pm}20.5$ \\
\quad Adaptive & $79.6{\pm}37.9^{14}$ & $14{\pm}10$ & $80.6{\pm}23.2$ & $99.7{\pm}2.5^{15}$ & $6{\pm}4$ & $50.9{\pm}18.4$ & $66.1{\pm}27.6^{15}$ & $8{\pm}9$ & $46.8{\pm}25.9$ \\
\cmidrule(l){1-10}
\quad Contrastive Vec. & $97.1{\pm}7.1^{4}$ & $10{\pm}11$ & $72.7{\pm}18.7$ & $29.6{\pm}41.8^{14}$ & $8{\pm}4$ & $74.4{\pm}25.3$ & $53.7{\pm}25.4^{11}$ & $7{\pm}4$ & $71.4{\pm}25.6$ \\
\quad Probe & $89.0{\pm}29.5^{7}$ & $7{\pm}3$ & $69.3{\pm}26.1$ & $32.3{\pm}40.1^{14}$ & $8{\pm}5$ & $70.3{\pm}24.7$ & $48.6{\pm}23.2^{14}$ & $8{\pm}5$ & $66.9{\pm}25.7$ \\
\quad PCA & $94.2{\pm}21.5^{2}$ & $8{\pm}6$ & $73.0{\pm}25.4$ & $24.1{\pm}38.3^{9}$ & $8{\pm}4$ & $73.2{\pm}26.1$ & $49.7{\pm}21.2^{9}$ & $9{\pm}5$ & $73.4{\pm}24.4$ \\
\quad Prompt & $80.9{\pm}36.4$ & $16{\pm}14$ & $84.7{\pm}20.9$ & $64.9{\pm}45.5$ & $10{\pm}9$ & $70.0{\pm}25.2$ & $45.1{\pm}24.9$ & $20{\pm}26$ & $65.3{\pm}28.0$ \\
\midrule
\multicolumn{10}{l}{\textbf{DREAM-7B}} \\
\quad \textit{No Steering} & $57.7{\pm}46.9$ & $14{\pm}9$ & $87.9{\pm}13.9$ & $10.5{\pm}25.7$ & $14{\pm}9$ & $87.9{\pm}13.9$ & $26.5{\pm}22.8$ & $14{\pm}9$ & $87.9{\pm}13.9$ \\
\quad Uniform & $99.5{\pm}5.5^{14}$ & $14{\pm}6$ & $84.5{\pm}13.1$ & $100.0{\pm}0.0^{6}$ & $53{\pm}26$ & $83.6{\pm}10.7$ & $97.5{\pm}2.5^{15}$ & $75{\pm}92$ & $60.4{\pm}13.8$ \\
\quad Adaptive & $97.6{\pm}13.9^{14}$ & $12{\pm}5$ & $85.1{\pm}14.4$ & $100.0{\pm}0.0^{12}$ & $37{\pm}26$ & $47.6{\pm}12.4$ & $96.3{\pm}3.4^{15}$ & $60{\pm}135$ & $54.4{\pm}17.3$ \\
\cmidrule(l){1-10}
\quad Contrastive Vec. & $95.3{\pm}19.2^{15}$ & $16{\pm}10$ & $83.6{\pm}15.7$ & $88.7{\pm}26.6^{15}$ & $15{\pm}7$ & $89.4{\pm}9.0$ & $92.8{\pm}7.1^{9}$ & $41{\pm}43$ & $79.0{\pm}22.9$ \\
\quad Probe & $86.6{\pm}32.4^{15}$ & $15{\pm}11$ & $87.2{\pm}12.9$ & $76.7{\pm}38.2^{15}$ & $14{\pm}8$ & $83.2{\pm}16.7$ & $98.5{\pm}3.1^{14}$ & $48{\pm}53$ & $94.4{\pm}16.9$ \\
\quad PCA & $65.3{\pm}44.8^{9}$ & $15{\pm}10$ & $88.7{\pm}13.6$ & $18.3{\pm}32.4^{15}$ & $16{\pm}19$ & $82.6{\pm}15.9$ & $27.8{\pm}23.2^{15}$ & $15{\pm}10$ & $89.4{\pm}12.2$ \\
\quad Prompt & $80.3{\pm}36.5$ & $12{\pm}8$ & $81.8{\pm}18.6$ & $61.5{\pm}47.0$ & $13{\pm}5$ & $84.1{\pm}14.0$ & $20.6{\pm}21.4$ & $15{\pm}9$ & $78.7{\pm}20.5$ \\
\bottomrule
\end{tabular}
\end{adjustbox}
\end{table}

%% file: figures_tables/table_steering_results_v2_detail_multi.tex
\begin{table}[t]
\centering
\caption{Multi-attribute steering results with per-attribute confidence and standard deviations. S: sentiment confidence; T: topic confidence; St: style confidence (formal for MDLM/SEDD, informal for LLaDA/DREAM). Superscripts on the first attribute denote selected $\alpha$. Same quality gate and selection criteria as Table~\ref{tab:steering_results_v2}.}
\label{tab:steering_multi_detail}
\tiny
\setlength{\tabcolsep}{1pt}
\begin{adjustbox}{max width=\textwidth}
\begin{tabular}{l|cccc|cccc|cccc|ccccc}
\toprule
 & \multicolumn{4}{c}{S+T} & \multicolumn{4}{c}{S+St} & \multicolumn{4}{c}{St+T} & \multicolumn{5}{c}{S+T+St} \\
\cmidrule(lr){2-5}\cmidrule(lr){6-9}\cmidrule(lr){10-13}\cmidrule(lr){14-18}
 & S$\uparrow$ & T$\uparrow$ & PPL$\downarrow$ & d$_2$$\uparrow$ & S$\uparrow$ & St$\uparrow$ & PPL$\downarrow$ & d$_2$$\uparrow$ & St$\uparrow$ & T$\uparrow$ & PPL$\downarrow$ & d$_2$$\uparrow$ & S$\uparrow$ & T$\uparrow$ & St$\uparrow$ & PPL$\downarrow$ & d$_2$$\uparrow$ \\
\midrule
\multicolumn{18}{l}{\textbf{MDLM-124M}} \\
\quad \textit{No Steering} & $25.5{\pm}38.6$ & $18.6{\pm}38.6$ & $43{\pm}12$ & $92.6{\pm}2.3$ & $25.5{\pm}38.6$ & $53.3{\pm}16.0$ & $43{\pm}12$ & $92.6{\pm}2.3$ & $53.3{\pm}16.0$ & $18.6{\pm}38.6$ & $43{\pm}12$ & $92.6{\pm}2.3$ & $25.5{\pm}38.6$ & $18.6{\pm}38.6$ & $53.3{\pm}16.0$ & $43{\pm}12$ & $92.6{\pm}2.3$ \\
\quad Uniform & $93.2{\pm}18.4^{3}$ & $99.9{\pm}0.0$ & $91{\pm}17$ & $93.1{\pm}1.5$ & $28.5{\pm}38.2^{4}$ & $69.3{\pm}8.6$ & $88{\pm}13$ & $93.9{\pm}1.3$ & $49.1{\pm}6.2^{2}$ & $99.9{\pm}0.0$ & $73{\pm}13$ & $92.9{\pm}1.5$ & $86.3{\pm}26.9^{2}$ & $99.9{\pm}0.0$ & $50.2{\pm}7.1$ & $82{\pm}14$ & $93.1{\pm}1.3$ \\
\quad Adaptive & $95.2{\pm}18.8^{15}$ & $99.9{\pm}0.0$ & $71{\pm}16$ & $92.5{\pm}1.6$ & $55.0{\pm}44.6^{15}$ & $72.4{\pm}10.2$ & $73{\pm}10$ & $93.5{\pm}1.2$ & $58.0{\pm}10.1^{14}$ & $97.2{\pm}15.7$ & $59{\pm}10$ & $91.9{\pm}1.8$ & $90.1{\pm}23.8^{15}$ & $95.8{\pm}19.7$ & $58.1{\pm}7.6$ & $84{\pm}14$ & $92.8{\pm}1.4$ \\
\quad Uniform+E & $98.5{\pm}6.4^{6}$ & $99.9{\pm}0.0$ & $96{\pm}20$ & $93.3{\pm}1.3$ & $64.8{\pm}41.9^{7}$ & $66.3{\pm}11.9$ & $88{\pm}14$ & $94.1{\pm}1.3$ & $50.6{\pm}6.2^{6}$ & $99.9{\pm}0.0$ & $83{\pm}14$ & $92.9{\pm}1.4$ & $91.4{\pm}22.2^{4}$ & $99.9{\pm}0.0$ & $49.7{\pm}6.7$ & $84{\pm}16$ & $93.3{\pm}1.4$ \\
\quad Adaptive+E & $93.9{\pm}19.6^{14}$ & $97.4{\pm}15.6$ & $58{\pm}14$ & $92.9{\pm}1.6$ & $73.7{\pm}38.9^{15}$ & $64.4{\pm}12.5$ & $63{\pm}11$ & $94.1{\pm}1.4$ & $52.5{\pm}9.7^{14}$ & $92.9{\pm}25.4$ & $50{\pm}10$ & $92.7{\pm}1.9$ & $92.9{\pm}21.2^{14}$ & $92.9{\pm}25.5$ & $54.1{\pm}8.5$ & $62{\pm}12$ & $93.1{\pm}1.5$ \\
\cmidrule(l){1-18}
\quad Contrastive Vec. & $84.4{\pm}31.5^{15}$ & $64.0{\pm}47.7$ & $376{\pm}128$ & $91.8{\pm}3.8$ & $91.7{\pm}22.5^{15}$ & $67.5{\pm}16.7$ & $494{\pm}128$ & $95.3{\pm}3.5$ & $28.7{\pm}7.8^{15}$ & $98.9{\pm}9.9$ & $389{\pm}96$ & $95.1{\pm}2.6$ & $93.9{\pm}18.8^{15}$ & $77.3{\pm}41.3$ & $52.3{\pm}13.5$ & $466{\pm}113$ & $94.5{\pm}3.1$ \\
\quad Probe & $72.8{\pm}37.1^{15}$ & $99.8{\pm}1.3$ & $324{\pm}65$ & $94.1{\pm}2.0$ & $63.1{\pm}42.5^{13}$ & $71.1{\pm}14.8$ & $856{\pm}176$ & $97.0{\pm}2.1$ & $52.5{\pm}11.2^{14}$ & $99.4{\pm}7.1$ & $873{\pm}180$ & $97.8{\pm}0.9$ & $64.8{\pm}39.6^{15}$ & $98.4{\pm}12.1$ & $52.1{\pm}12.2$ & $798{\pm}164$ & $97.4{\pm}1.1$ \\
\quad PCA & $20.1{\pm}34.2^{12}$ & $11.2{\pm}30.7$ & $310{\pm}94$ & $93.5{\pm}4.0$ & $19.9{\pm}34.2^{7}$ & $40.7{\pm}14.7$ & $406{\pm}102$ & $95.9{\pm}2.8$ & $44.3{\pm}16.4^{1}$ & $12.8{\pm}33.1$ & $346{\pm}95$ & $94.7{\pm}3.3$ & $18.6{\pm}34.6^{6}$ & $15.4{\pm}35.3$ & $39.6{\pm}14.5$ & $396{\pm}106$ & $95.9{\pm}2.7$ \\
\midrule
\multicolumn{18}{l}{\textbf{SEDD-124M}} \\
\quad \textit{No Steering} & $29.0{\pm}41.0$ & $20.3{\pm}39.2$ & $44{\pm}17$ & $92.5{\pm}2.5$ & $29.0{\pm}41.0$ & $57.7{\pm}19.1$ & $44{\pm}17$ & $92.5{\pm}2.5$ & $57.7{\pm}19.1$ & $20.3{\pm}39.2$ & $44{\pm}17$ & $92.5{\pm}2.5$ & $29.0{\pm}41.0$ & $20.3{\pm}39.2$ & $57.7{\pm}19.1$ & $44{\pm}17$ & $92.5{\pm}2.5$ \\
\quad Uniform & $92.7{\pm}19.8^{4}$ & $100.0{\pm}0.0$ & $98{\pm}24$ & $89.3{\pm}2.3$ & $76.3{\pm}34.3^{8}$ & $95.9{\pm}4.7$ & $98{\pm}16$ & $85.1{\pm}2.5$ & $69.0{\pm}8.9^{4}$ & $100.0{\pm}0.0$ & $93{\pm}21$ & $88.1{\pm}2.8$ & $92.2{\pm}20.7^{3}$ & $100.0{\pm}0.0$ & $68.9{\pm}9.8$ & $88{\pm}20$ & $88.5{\pm}2.7$ \\
\quad Adaptive & $92.8{\pm}18.5^{18}$ & $100.0{\pm}0.0$ & $67{\pm}13$ & $87.2{\pm}2.4$ & $65.9{\pm}38.3^{19}$ & $95.3{\pm}4.7$ & $51{\pm}8$ & $84.4{\pm}3.3$ & $78.9{\pm}6.9^{20}$ & $100.0{\pm}0.0$ & $57{\pm}12$ & $84.0{\pm}2.4$ & $95.0{\pm}16.3^{20}$ & $100.0{\pm}0.0$ & $82.5{\pm}7.0$ & $63{\pm}13$ & $83.7{\pm}2.7$ \\
\quad Uniform+E & $95.3{\pm}17.9^{9}$ & $98.4{\pm}12.1$ & $95{\pm}23$ & $91.1{\pm}2.0$ & $95.8{\pm}16.1^{11}$ & $95.8{\pm}5.7$ & $96{\pm}16$ & $87.7{\pm}2.8$ & $80.2{\pm}10.1^{6}$ & $89.7{\pm}29.7$ & $98{\pm}20$ & $88.3{\pm}2.8$ & $94.7{\pm}17.6^{8}$ & $60.4{\pm}48.4$ & $82.7{\pm}12.3$ & $99{\pm}22$ & $89.7{\pm}2.6$ \\
\quad Adaptive+E & $82.8{\pm}32.9^{19}$ & $77.4{\pm}41.2$ & $59{\pm}16$ & $91.5{\pm}2.0$ & $70.2{\pm}39.4^{20}$ & $88.3{\pm}12.7$ & $58{\pm}11$ & $89.9{\pm}2.8$ & $78.9{\pm}6.9^{20}$ & $100.0{\pm}0.0$ & $57{\pm}12$ & $84.0{\pm}2.4$ & $77.1{\pm}37.3^{20}$ & $62.9{\pm}47.3$ & $73.4{\pm}15.1$ & $61{\pm}14$ & $91.0{\pm}2.3$ \\
\cmidrule(l){1-18}
\quad Contrastive Vec. & $99.0{\pm}4.0^{20}$ & $100.0{\pm}0.0$ & $91{\pm}25$ & $92.1{\pm}1.9$ & $99.7{\pm}0.6^{20}$ & $96.4{\pm}5.1$ & $79{\pm}14$ & $90.0{\pm}2.4$ & $68.2{\pm}13.3^{20}$ & $99.5{\pm}6.8$ & $86{\pm}20$ & $92.5{\pm}1.7$ & $97.2{\pm}12.8^{20}$ & $85.5{\pm}34.8$ & $79.8{\pm}11.8$ & $87{\pm}17$ & $91.7{\pm}1.8$ \\
\quad Probe & $91.6{\pm}25.8^{7}$ & $78.6{\pm}39.7$ & $98{\pm}31$ & $92.3{\pm}2.6$ & $89.5{\pm}27.0^{6}$ & $81.7{\pm}16.6$ & $88{\pm}27$ & $91.9{\pm}2.4$ & $59.0{\pm}19.7^{9}$ & $89.1{\pm}30.2$ & $98{\pm}37$ & $92.9{\pm}2.3$ & $94.8{\pm}19.6^{7}$ & $74.3{\pm}43.2$ & $70.7{\pm}18.3$ & $94{\pm}29$ & $92.5{\pm}2.2$ \\
\quad PCA & $28.3{\pm}40.4^{5}$ & $21.9{\pm}40.6$ & $48{\pm}12$ & $92.8{\pm}2.2$ & $59.7{\pm}43.1^{19}$ & $80.4{\pm}14.5$ & $72{\pm}19$ & $92.9{\pm}2.0$ & $54.5{\pm}18.5^{6}$ & $27.2{\pm}43.8$ & $37{\pm}10$ & $91.7{\pm}2.7$ & $41.4{\pm}44.0^{17}$ & $33.3{\pm}46.5$ & $61.7{\pm}16.3$ & $43{\pm}14$ & $92.7{\pm}2.3$ \\
\midrule
\multicolumn{18}{l}{\textbf{LLaDA-8B}} \\
\quad \textit{No Steering} & $69.1{\pm}44.8$ & $13.9{\pm}31.3$ & $16{\pm}18$ & $85.1{\pm}20.9$ & $69.1{\pm}44.8$ & $22.2{\pm}23.0$ & $16{\pm}18$ & $85.1{\pm}20.9$ & $13.9{\pm}31.3$ & $22.2{\pm}23.0$ & $16{\pm}18$ & $85.1{\pm}20.9$ & $69.1{\pm}44.8$ & $13.9{\pm}31.3$ & $22.2{\pm}23.0$ & $16{\pm}18$ & $85.1{\pm}20.9$ \\
\quad Uniform & $94.7{\pm}20.8^{12}$ & $99.9{\pm}0.3$ & $6{\pm}3$ & $43.8{\pm}15.5$ & $76.0{\pm}41.8^{11}$ & $54.0{\pm}28.1$ & $7{\pm}6$ & $43.6{\pm}20.2$ & $92.0{\pm}26.5^{9}$ & $44.0{\pm}28.4$ & $6{\pm}4$ & $45.2{\pm}17.0$ & $86.2{\pm}33.0^{8}$ & $80.2{\pm}38.9$ & $46.8{\pm}28.5$ & $7{\pm}13$ & $45.2{\pm}18.1$ \\
\quad Adaptive & $91.1{\pm}26.7^{12}$ & $98.2{\pm}12.2$ & $8{\pm}5$ & $56.9{\pm}19.6$ & $73.2{\pm}42.8^{15}$ & $61.1{\pm}28.5$ & $7{\pm}6$ & $45.4{\pm}24.3$ & $94.9{\pm}20.4^{13}$ & $44.6{\pm}27.5$ & $7{\pm}5$ & $46.7{\pm}19.7$ & $85.0{\pm}33.7^{12}$ & $86.8{\pm}32.2$ & $45.2{\pm}28.1$ & $7{\pm}6$ & $46.1{\pm}19.5$ \\
\quad Uniform+E & $89.4{\pm}28.8^{15}$ & $95.4{\pm}19.8$ & $7{\pm}6$ & $54.4{\pm}20.1$ & $60.6{\pm}47.7^{12}$ & $61.9{\pm}26.0$ & $7{\pm}5$ & $43.1{\pm}20.1$ & $87.5{\pm}31.4^{10}$ & $46.4{\pm}28.8$ & $6{\pm}4$ & $43.5{\pm}18.6$ & $81.3{\pm}37.1^{9}$ & $77.5{\pm}40.5$ & $47.4{\pm}29.1$ & $6{\pm}6$ & $43.4{\pm}18.3$ \\
\quad Adaptive+E & $91.7{\pm}25.9^{15}$ & $87.5{\pm}31.9$ & $9{\pm}8$ & $65.0{\pm}24.2$ & $76.6{\pm}41.4^{15}$ & $39.3{\pm}30.2$ & $11{\pm}10$ & $63.1{\pm}27.5$ & $70.9{\pm}43.6^{15}$ & $41.9{\pm}30.1$ & $11{\pm}10$ & $58.7{\pm}24.1$ & $84.7{\pm}34.8^{15}$ & $71.4{\pm}44.1$ & $41.9{\pm}30.4$ & $8{\pm}10$ & $52.3{\pm}22.1$ \\
\cmidrule(l){1-18}
\quad Contrastive Vec. & $93.4{\pm}21.8^{15}$ & $26.4{\pm}40.4$ & $8{\pm}5$ & $81.9{\pm}18.3$ & $95.3{\pm}18.9^{10}$ & $40.1{\pm}25.8$ & $8{\pm}5$ & $66.2{\pm}22.0$ & $36.8{\pm}45.9^{13}$ & $55.1{\pm}23.6$ & $8{\pm}4$ & $75.9{\pm}26.3$ & $99.0{\pm}2.8^{11}$ & $30.3{\pm}41.0$ & $42.0{\pm}26.4$ & $10{\pm}4$ & $84.1{\pm}20.1$ \\
\quad Probe & $78.1{\pm}37.7^{3}$ & $32.3{\pm}41.6$ & $8{\pm}3$ & $74.3{\pm}22.4$ & $84.5{\pm}31.4^{4}$ & $49.8{\pm}19.0$ & $8{\pm}3$ & $72.6{\pm}20.5$ & $35.5{\pm}41.7^{10}$ & $54.5{\pm}18.7$ & $9{\pm}4$ & $79.6{\pm}20.8$ & $88.7{\pm}29.6^{15}$ & $28.9{\pm}40.0$ & $45.4{\pm}25.3$ & $7{\pm}4$ & $67.1{\pm}25.0$ \\
\quad PCA & $92.5{\pm}22.8^{6}$ & $23.0{\pm}39.1$ & $7{\pm}4$ & $61.5{\pm}27.8$ & $95.7{\pm}14.2^{10}$ & $50.6{\pm}20.6$ & $8{\pm}7$ & $69.9{\pm}22.2$ & $36.1{\pm}46.8^{11}$ & $43.1{\pm}28.9$ & $7{\pm}4$ & $66.0{\pm}24.0$ & $82.7{\pm}35.7^{1}$ & $30.3{\pm}41.0$ & $35.5{\pm}26.8$ & $8{\pm}4$ & $73.2{\pm}24.1$ \\
\quad Prompt & $88.7{\pm}29.2$ & $50.5{\pm}48.4$ & $17{\pm}17$ & $82.7{\pm}21.9$ & $77.6{\pm}40.2$ & $19.3{\pm}23.8$ & $22{\pm}22$ & $80.4{\pm}25.1$ & $74.5{\pm}41.9$ & $37.3{\pm}23.9$ & $12{\pm}9$ & $74.9{\pm}24.8$ & $92.4{\pm}25.2$ & $48.2{\pm}49.1$ & $21.2{\pm}22.8$ & $19{\pm}18$ & $86.2{\pm}21.1$ \\
\midrule
\multicolumn{18}{l}{\textbf{DREAM-7B}} \\
\quad \textit{No Steering} & $57.7{\pm}46.9$ & $10.5{\pm}25.7$ & $14{\pm}9$ & $87.9{\pm}13.9$ & $57.7{\pm}46.9$ & $26.5{\pm}22.8$ & $14{\pm}9$ & $87.9{\pm}13.9$ & $10.5{\pm}25.7$ & $26.5{\pm}22.8$ & $14{\pm}9$ & $87.9{\pm}13.9$ & $57.7{\pm}46.9$ & $10.5{\pm}25.7$ & $26.5{\pm}22.8$ & $14{\pm}9$ & $87.9{\pm}13.9$ \\
\quad Uniform & $99.8{\pm}1.1^{5}$ & $99.9{\pm}0.0$ & $26{\pm}11$ & $78.6{\pm}12.9$ & $100.0{\pm}0.0^{15}$ & $93.9{\pm}3.5$ & $61{\pm}21$ & $67.2{\pm}9.9$ & $98.1{\pm}12.5^{4}$ & $67.6{\pm}20.0$ & $72{\pm}64$ & $86.2{\pm}10.9$ & $97.2{\pm}14.7^{4}$ & $98.8{\pm}10.2$ & $62.9{\pm}21.6$ & $67{\pm}64$ & $83.7{\pm}11.4$ \\
\quad Adaptive & $98.6{\pm}7.3^{13}$ & $100.0{\pm}0.0$ & $50{\pm}25$ & $56.8{\pm}13.2$ & $100.0{\pm}0.0^{15}$ & $92.4{\pm}4.1$ & $34{\pm}14$ & $54.3{\pm}11.7$ & $99.0{\pm}9.8^{8}$ & $82.5{\pm}7.7$ & $94{\pm}53$ & $70.7{\pm}13.1$ & $95.6{\pm}18.4^{6}$ & $98.3{\pm}12.3$ & $73.7{\pm}11.0$ & $74{\pm}66$ & $77.6{\pm}13.5$ \\
\quad Uniform+E & $100.0{\pm}0.0^{11}$ & $100.0{\pm}0.0$ & $21{\pm}8$ & $77.4{\pm}10.6$ & $100.0{\pm}0.0^{15}$ & $90.6{\pm}4.2$ & $52{\pm}25$ & $66.3{\pm}10.5$ & $96.4{\pm}17.1^{5}$ & $80.1{\pm}13.2$ & $70{\pm}79$ & $85.3{\pm}13.1$ & $99.5{\pm}7.0^{7}$ & $95.8{\pm}18.8$ & $77.9{\pm}13.7$ & $78{\pm}62$ & $82.8{\pm}10.2$ \\
\quad Adaptive+E & $99.6{\pm}3.5^{13}$ & $99.9{\pm}0.0$ & $17{\pm}7$ & $76.7{\pm}13.1$ & $100.0{\pm}0.0^{15}$ & $88.9{\pm}5.7$ & $31{\pm}18$ & $56.1{\pm}11.9$ & $88.2{\pm}31.5^{7}$ & $84.5{\pm}11.1$ & $83{\pm}82$ & $79.9{\pm}12.9$ & $99.9{\pm}0.9^{15}$ & $98.4{\pm}12.2$ & $80.6{\pm}11.1$ & $72{\pm}37$ & $63.9{\pm}11.5$ \\
\cmidrule(l){1-18}
\quad Contrastive Vec. & $99.5{\pm}3.7^{15}$ & $71.7{\pm}43.8$ & $15{\pm}8$ & $79.9{\pm}17.5$ & $92.7{\pm}23.1^{13}$ & $89.1{\pm}9.7$ & $41{\pm}61$ & $70.3{\pm}24.7$ & $53.6{\pm}44.8^{14}$ & $93.0{\pm}6.5$ & $50{\pm}59$ & $78.1{\pm}22.5$ & $94.9{\pm}17.9^{15}$ & $52.5{\pm}44.6$ & $89.8{\pm}8.9$ & $51{\pm}134$ & $76.6{\pm}21.0$ \\
\quad Probe & $81.9{\pm}35.5^{14}$ & $49.5{\pm}47.0$ & $15{\pm}9$ & $88.1{\pm}12.8$ & $69.7{\pm}42.4^{12}$ & $93.5{\pm}9.5$ & $51{\pm}58$ & $89.2{\pm}17.5$ & $69.6{\pm}41.5^{14}$ & $94.1{\pm}7.4$ & $56{\pm}46$ & $88.4{\pm}17.9$ & $78.7{\pm}38.0^{15}$ & $48.2{\pm}43.0$ & $92.3{\pm}12.9$ & $36{\pm}23$ & $88.7{\pm}15.9$ \\
\quad PCA & $67.9{\pm}44.9^{13}$ & $29.4{\pm}42.4$ & $13{\pm}9$ & $79.4{\pm}17.6$ & $54.0{\pm}47.8^{4}$ & $27.1{\pm}21.2$ & $15{\pm}10$ & $91.1{\pm}10.0$ & $19.0{\pm}34.4^{12}$ & $29.5{\pm}23.3$ & $14{\pm}10$ & $85.2{\pm}15.2$ & $54.9{\pm}47.8^{4}$ & $13.2{\pm}28.0$ & $28.9{\pm}23.0$ & $14{\pm}8$ & $89.7{\pm}10.8$ \\
\quad Prompt & $80.3{\pm}37.0$ & $63.2{\pm}47.0$ & $12{\pm}6$ & $80.2{\pm}19.4$ & $59.6{\pm}46.1$ & $22.7{\pm}23.2$ & $12{\pm}7$ & $73.8{\pm}23.0$ & $62.2{\pm}46.5$ & $24.0{\pm}23.2$ & $15{\pm}6$ & $86.7{\pm}14.6$ & $76.2{\pm}40.2$ & $70.6{\pm}44.3$ & $26.7{\pm}25.0$ & $14{\pm}9$ & $78.6{\pm}20.8$ \\
\bottomrule
\end{tabular}
\end{adjustbox}
\end{table}